\documentclass{article}
\usepackage{bcc_arxiv,times}

\usepackage{amsmath,amsfonts,bm}

\def\eqref#1{equation~\ref{#1}}

\def\1{\bm{1}}

\DeclareMathAlphabet{\mathsfit}{\encodingdefault}{\sfdefault}{m}{sl}
\SetMathAlphabet{\mathsfit}{bold}{\encodingdefault}{\sfdefault}{bx}{n}

\newcommand{\E}{\mathbb{E}}

\newcommand{\KL}{D_{\mathrm{KL}}}

\usepackage{amsmath,amssymb,amsthm,mathtools,bm}
\usepackage{booktabs,tabularx,longtable,multirow,array}
\newcommand{\bccrowsep}{\specialrule{0.25pt}{1.1pt}{1.1pt}}
\usepackage{graphicx}
\usepackage{float}
\usepackage{placeins}
\usepackage{xcolor}
\usepackage{tikz}
\usetikzlibrary{arrows.meta,fit,positioning}
\usepackage{hyperref}
\hypersetup{colorlinks=true,linkcolor=blue!55!black,citecolor=blue!65!black,urlcolor=blue!65!black,pdftitle={Behavioral Capacity Certificates for Quantized Language Models},pdfauthor={Arian Eamaz and Mojtaba Soltanalian}}
\usepackage{url}
\usepackage[capitalize,noabbrev]{cleveref}

\usepackage{aliascnt}
\newtheorem{theorem}{Theorem}[section]
\newaliascnt{proposition}{theorem}
\newtheorem{proposition}[proposition]{Proposition}
\aliascntresetthe{proposition}
\newaliascnt{corollary}{theorem}
\newtheorem{corollary}[corollary]{Corollary}
\aliascntresetthe{corollary}
\newaliascnt{assumption}{theorem}

\aliascntresetthe{assumption}
\newcommand{\Risk}{\mathcal R}
\newcommand{\Rh}{\widehat{\mathcal R}}

\crefname{suppsection}{Supplementary Section}{Supplementary Sections}
\Crefname{suppsection}{Supplementary Section}{Supplementary Sections}
\crefname{supptable}{Supplementary Table}{Supplementary Tables}
\Crefname{supptable}{Supplementary Table}{Supplementary Tables}
\crefname{suppfigure}{Supplementary Figure}{Supplementary Figures}
\Crefname{suppfigure}{Supplementary Figure}{Supplementary Figures}
\crefname{suppproposition}{Supplementary Proposition}{Supplementary Propositions}
\crefname{supptheorem}{Supplementary Theorem}{Supplementary Theorems}
\crefname{suppcorollary}{Supplementary Corollary}{Supplementary Corollaries}
\crefname{suppequation}{Supplementary Equation}{Supplementary Equations}

\title{Behavioral Capacity Certificates\\for Quantized Language Models}

\author{Arian Eamaz \qquad Mojtaba Soltanalian\\[4pt]
Department of Electrical and Computer Engineering\\
University of Illinois Chicago\\[2pt]
\texttt{aeamaz2@uic.edu} \qquad \texttt{msol@uic.edu}}

\usepackage{microtype}
\usepackage{xurl}
\graphicspath{{figures/}}

\begin{document}

\maketitle
\begingroup
\renewcommand{\thefootnote}{}
\footnotetext{Code and reproduction instructions: \url{https://github.com/eamaz/bcc}.}
\endgroup

\begin{abstract}
Activation and key--value cache precision change what a quantized language model
computes without altering its stored weights. Direct weight-code bounds, however,
assign identical complexity to deployments that behave differently and charge
separately for weight codes that behave identically. Behavioral Capacity
Certificates (BCC) charge for behavior using the aggregate prior mass of complete
implementations---weights, scales, activation and cache rules---that induce the
same bounded loss. When quantization merges implementations, this shared mass
lowers the complexity penalty, and a break-even law determines when the saving
survives the cost of validating it. BCC supports a three-step deployment
workflow, and our experiments verify each step. First, a forward-only screen
shortlists per-layer bit-widths by how often candidate perturbations preserve the
reference predictions, with quality comparable to Hessian-guided selection at
lower preprocessing cost. Second, margin-certified cells identify weights that can be pruned or
sign-flipped without changing the deployed behavior: every permitted
combination preserves all declared predictions, and on OLMoE-1B-7B and
SmolLM2-1.7B, independent probes bound the probability that any permitted
combination changes a prediction on new text. Third, BCC bounds the population loss of the deployed model,
nonvacuously for complete decoders and more tightly than the
compressed-code route. At equal cache memory, giving keys higher
precision than values yields lower NLL and higher prediction
agreement on GPT-2, Qwen2.5, and SmolLM2, together with a tighter
complexity bound in the GPT-2 audit.
\end{abstract}

\section{Introduction}
Weight quantization reduces parameter storage of large language models~\citep{gptq2022,awq2024}.
But a deployment is not solely defined by a weight bit-width.
At long context, the key--value (KV) cache can dominate inference memory,
motivating its quantization alongside the weights~\citep{kivi2024}.
Activation quantization also helps meet computation budgets by enabling
low-precision matrix multiplications and reducing activation
storage~\citep{smoothquant}.
Activation precision, cache precision, scaling conventions, and clipping
rules shape the computation and can be varied while stored weight tensors
remain fixed. It is therefore plausible to expect scenarios in which deployments with byte-identical weight checkpoints compute different functions, while activation rounding absorbs weight differences, allowing distinct checkpoints to produce identical outputs.

Compression-based generalization bounds connect empirical loss to a
description of a reconstructed predictor, including PAC--Bayesian
compression approaches~\citep{zhou2019,lotfi2022,lotfi2024}.
When numerical conventions are fixed as side information, a direct
weight-code charge assigns identical complexity to a 16-bit and a 4-bit
cache even when their outputs differ. This motivates a complementary question to choosing the best
weight--activation (W/A) format for a transformer's feed-forward network
(FFN): \emph{how should generalization be certified when different W/A
formats and cache rules induce the same bounded loss on the declared
domain?} Behavioral Capacity Certificates address this question by
aggregating prior probability over complete implementations with the
same bounded loss function. Their shared mass determines the behavioral
complexity, allowing the bound to credit alternative implementations
of the same task behavior.
\paragraph{Background and prior art.}
PAC--Bayes relates population loss to empirical loss and divergence from a
sample-independent prior~\citep{shawewilliamson1997,mcallester1998,mcallester1999,maurer2004note}.
Compression analyses reduce that divergence by giving the predictor a short
description: noise-stable networks admit compressed surrogates~\citep{arora2018stronger},
and explicit codes yield nonvacuous bounds for stochastic networks and language
models~\citep{dziugaite2017,zhou2019,lotfi2024}. Our setting differs in two ways.
First, these bounds charge for the description of one predictor; we charge for
the pooled prior mass of every implementation inducing the same loss, so
activation and cache choices that leave the weights unchanged still enter the
certificate. Second, under a declared perturbation law that mass is the
probability that an implementation change leaves the loss unchanged, so one
quantity bounds both complexity and perturbation risk. Functional-equivalence
covers~\citep{shen2024functional} and symmetry-based PAC--Bayes
bounds~\citep{beck2025symmetry} exploit hidden-unit permutations or other group
actions that preserve the input--output map; we merge implementations across
quantization formats that share a loss on every declared input.
HAWQ-V2~\citep{hawqv2} and SmoothQuant~\citep{smoothquant} choose precisions with
curvature or rescaling heuristics defined on weights; we score candidates by the
behavior they preserve. 

\paragraph{Certifying shared behavior.}
Let $\mathcal E$ be the countable set of complete implementations, and let
$\Pi(e)=g_e$ map each implementation to its bounded loss function
$g_e:\mathcal X\to[0,1]$. The \emph{fiber} of a behavior $g$ is its preimage:

\[
\mathcal F(g):=\Pi^{-1}(g)
=\{e\in\mathcal E:\ g_e(X)=g(X)\ \text{for every }X\in\mathcal X\}.
\]
For a sample-independent prior $\nu$, its mass and \emph{behavioral complexity} are
\begin{equation}
\bar\nu(g)=\nu(\mathcal F(g))=\sum_{e\in\mathcal F(g)}\nu(e),\qquad
K_{W,A}(g)=-\ln\bar\nu(g).
\label{eq:behavioral_mass}
\end{equation}
The \emph{behavioral quotient} $\mathcal E/{\sim}$ is the set of
these fibers, grouping implementations that have identical loss
on every declared input; thus $e\sim e'$ exactly when $g_e=g_{e'}$.
The Occam--KL bound certifies deterministic deployments at this complexity. With a uniform $B$-bit prior, a size-$N_g$ fiber costs $B-\log_2 N_g$ bits. Certified subsets lower-bound its mass
and thus upper-bound its complexity. A three-implementation toy example can be a uniform prior over three complete implementations,
differing only in their quantized weight vectors:
$w_1=(-1,-1,+1)$, $w_2=(-1,+1,+1)$, and
$w_3=(+1,-1,+1)$, with $\nu(e_i)=1/3$.
Suppose $e_1$ and $e_2$ induce the same bounded loss function
$g_A$ on every declared input, while $e_3$ induces
$g_B\ne g_A$. The fiber $\mathcal F(g_A)=\{e_1,e_2\}$
therefore has mass $\bar\nu(g_A)=2/3$.
Naming $e_1$ costs $-\ln(1/3)=\ln3$, whereas naming its
behavior costs $K_{W,A}(g_A)=-\ln(2/3)=\ln(3/2)$.
BCC thus saves $\ln2$ nats, exactly one bit, in the
certificate's complexity penalty while keeping the same
empirical loss. This saving comes from shared behavior
and requires no parameter-space metric.
\begin{figure}[!t]
\centering
\begin{tikzpicture}[
 scale=1, transform shape,
 font=\scriptsize,>=Latex,
 box/.style={draw,rounded corners=2pt,align=center,inner xsep=3pt,inner ysep=4pt,minimum height=15mm,outer sep=0pt}
]
 \node[box,fill=gray!8,text width=28mm] (impl) {
  \textbf{Complete implementation $e$}\\[2pt]
  weights, scales, thresholds,\\residual and cache rules\\
  direct cost: $-\ln\nu(e)$};
 \node[box,fill=blue!6,text width=20mm,right=5mm of impl] (behavior) {
  \textbf{Behavior $g_e$}\\[2pt]
  bounded loss on the\\population domain};
 \node[box,fill=blue!10,text width=40mm,right=5mm of behavior] (complexity) {
  \textbf{Shared mass and complexity}\\[2pt]
  $\bar\nu(g)=\sum_{e':g_{e'}=g}\nu(e')$\\[2pt]
  $K_{W,A}(g)=-\ln\bar\nu(g)$};
 \node[box,fill=blue!4,text width=23mm,right=5mm of complexity] (bound) {
  \textbf{Risk certificate}\\[2pt]
  empirical loss\\plus complexity};
 \draw[->,thick] (impl)--(behavior);
 \draw[->,thick,blue!65!black] (behavior)--(complexity);
 \draw[->,thick,blue!65!black] (complexity)--(bound);
\end{tikzpicture}
\caption{BCC aggregates prior mass by bounded loss. Construction-derived
cells additionally pay independent-probe transfer.}
\label{fig:implementationmap}

\end{figure}
A practitioner chooses weight and activation
precision, compresses the KV cache, prunes weights, and must tolerate faults in
the stored ones; each of these changes the implementation, and a guarantee tied
to one implementation's description survives none of them. Because BCC
certifies behavior rather than a particular implementation, every change that
stays inside the certified fiber keeps the same guarantee, and the certificate identifies which changes stay inside it. This supports a simple workflow:
screen candidates cheaply, certify the changes that preserve behavior, and bound
the model finally deployed.
The main target is a certificate that stays valid through the decisions a
deployment actually makes. 
Our contributions are:
\begin{itemize}

\item \textbf{Nonvacuous bounds beyond lossless compression.}
BCC aggregates prior mass across complete implementations that induce
the same bounded loss function. We derive population-risk bounds and
identify when aggregation savings survive independent validation.
Complete-decoder experiments yield nonvacuous bounds tighter than
lossless histogram-and-order coding; exact finite-task comparisons
also improve on the cheapest Huffman-coded realization under the
same prior (\cref{tab:mainfullmodel,tab:compressioncomparison}).

\item \textbf{One mass for generalization and stability.}
Under a declared perturbation law, the probability of preserving the
entire loss function bounds both behavioral complexity and expected
perturbation damage, with the mixture-prior cost included.
The provided certificate does not need any parameter metric, differentiability, or local minimum
as discussed in \cref{prop:massstability}. \Cref{prop:coordinatecell} identifies weight signs that may be flipped and
weights that may be zeroed while every reference prediction on the declared
domain is preserved. Many such perturbations change the weights and not the
behavior: they remain in the same fiber, so a practitioner may deploy them and
keep the same certificate.
Margin certificates identify large families of simultaneous output-head
sign flips and pruning edits with prediction-preservation guarantees,
validated on GPT-2 and OLMoE across cache precisions
(\cref{tab:olmoe_head} and Appendix~\ref{core:cacheedits}).

\item \textbf{Practical guidance for quantized deployment.}
A forward-only screen shortlists weight-precision choices with
quality comparable to Hessian-guided selection at lower measured
preprocessing cost (\cref{tab:mainscreening}).
At equal cache memory, our audits show that assigning higher precision
to keys than values yields lower NLL and higher prediction agreement
on GPT-2, Qwen2.5, and SmolLM2
(\cref{tab:mainkvcache}; Appendix~\ref{app:modern_kv}).

\end{itemize}
\paragraph{Notation.}
An implementation $e=(a_q,\theta,s,t,r,k,m_T)$ specifies format, weights,
scales, thresholds, residual/normalization rules, cache rules, and
teacher/calibration metadata. Its predictor is $f_e$, and $e_0$ denotes the reference
implementation.
For cache comparisons, $e_{0,h}$ uses cache setting $h$. Let $S\sim\mathcal D^m$ and $Z\sim\mathcal D^n$ be independent document
samples for construction and probing. 
For $\ell(f_e,X)\in[a,a+\Delta]$, define
$g_e=(\ell(f_e,\cdot)-a)/\Delta$,
$R_0(g)=\E_{\mathcal D}g(X)$, and
$\widehat R_0(g)=m^{-1}\sum_i g(X_i)$.
Thus $\Risk(f_e)=a+\Delta R_0(g_e)$ and
$\Rh_S(f_e)=a+\Delta\widehat R_0(g_e)$.
For $q\in[0,1]$ and $c\ge0$,
$\operatorname{kl}^{-1}_+(q,c)
=\sup\{r\in[q,1]:\operatorname{kl}(q\|r)\le c\}$,
where $\operatorname{kl}$ is binary KL divergence.  In this paper, a \emph{certificate} is a mathematically established guarantee on
population loss or preservation of declared behavior, under the stated
assumptions.

\section{Joint W/A behavioral complexity}
\label{sec:framework}

We derive population guarantees for complete W/A behaviors and use them
to certify deployment choices. Starting from the behavioral mass in
\cref{eq:behavioral_mass}, \cref{thm:behavioral} establishes the population
bound, and \cref{cor:deployment} applies it to deployment targets.
We then compare the resulting complexity charge with direct
implementation coding.

Evaluating the full behavioral mass can require comparing many complete
implementations, so we develop certificates based on local audits and
verified subsets. \Cref{prop:routedfactorization} gives an exact
composition rule for routed architectures. For cells discovered from
data, \cref{thm:publicquotient} quantifies the independent-transfer cost,
and \cref{prop:quotientcrossover,cor:probebudget} determine when the
complexity savings outweigh this cost and how many probes are required
under a declared transfer scenario. A further construction uses margin
budgets to certify a product cell without enumerating its members
(\cref{prop:coordinatecell} in Section~\ref{sec:practitioner}). The next theorem turns behavioral mass into a population guarantee valid
after model selection.

\begin{theorem}[Behavioral Occam--KL certificate]
\label{thm:behavioral}
With probability at least $1-\delta$ over $S$, simultaneously for every
complete W/A behavior $g$,

\begin{equation}
R_0(g)\le \operatorname{kl}^{-1}_+\!\left(
\widehat R_0(g),
\frac{K_{W,A}(g)+\ln(1/\delta)}{m}\right).
\label{eq:behavioral_bound}
\end{equation}
A prefix code of length $B(e)$ with prior mass at least $2^{-B(e)}$
gives $K_{W,A}(g_e)\le B(e)\ln2$;
Appendix~\ref{app:proofs} derives a simpler explicit upper bound
using Pinsker's inequality.
\end{theorem}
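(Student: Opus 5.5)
This is the standard Occam-plus-Chernoff argument, applied on the behavioral quotient $\mathcal E/{\sim}$ instead of on individual implementations.

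\textbf{Step 1: countable hypotheses with prior weights.} Since $\mathcal E$ is countable, the set of behaviors $\mathcal G=\{g_e:e\in\mathcal E\}$ is countable, because it is the image of $\Pi$. The fibers $\mathcal F(g)$ partition $\mathcal E$. Hence $\sum_{g\in\mathcal G}\bar\nu(g)=\sum_{e}\nu(e)\le 1$, and $\bar\nu$ is a (sub-)probability on $\mathcal G$. It is sample-independent because $\nu$ and $\Pi$ are fixed before $S$ is drawn. Behaviors with $\bar\nu(g)=0$ have $K_{W,A}(g)=\infty$, and the bound is trivial for them.

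\textbf{Step 2: per-behavior concentration.} Fix $g\in\mathcal G$ with $\bar\nu(g)>0$ and allot it confidence $\delta_g=\delta\,\bar\nu(g)$. The values $g(X_i)\in[0,1]$ are i.i.d.\ with mean $R_0(g)$. The Chernoff--Hoeffding bound in binary-kl form extends to $[0,1]$-valued variables by convexity, since it is dominated by the Bernoulli case (Maurer's lemma or the standard argument). It gives, for $c\ge0$,
\[
\Pr_S\bigl[\widehat R_0(g)<R_0(g)\ \text{and}\ \operatorname{kl}(\widehat R_0(g)\|R_0(g))>c\bigr]\le e^{-mc}.
\]
Take $c=\ln(1/\delta_g)/m=(K_{W,A}(g)+\ln(1/\delta))/m$. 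Outside this event, either $R_0(g)\le\widehat R_0(g)$ or $\operatorname{kl}(\widehat R_0(g)\|R_0(g))\le c$ with $R_0(g)\ge\widehat R_0(g)$. In both cases $R_0(g)\le\operatorname{kl}^{-1}_+(\widehat R_0(g),c)$, by the definition of $\operatorname{kl}^{-1}_+$ as a supremum over $r\in[q,1]$.

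\textbf{Step 3: union bound.} Summing the failure probabilities over $\mathcal G$ gives $\sum_g\delta\bar\nu(g)\le\delta$. This yields simultaneity over all complete W/A behaviors, and therefore validity after any data-dependent selection of $g$.

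\textbf{Prefix-code claim.} The fiber contains $e$, so $\bar\nu(g_e)\ge\nu(e)\ge 2^{-B(e)}$. Taking $-\ln$ gives $K_{W,A}(g_e)\le B(e)\ln 2$. If $\nu(e)=2^{-B(e)}$ is induced by a prefix code, Kraft's inequality confirms that $\nu$ is a valid sub-probability.

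\textbf{Pinsker form.} Pinsker gives $\operatorname{kl}(q\|r)\ge 2(r-q)^2$. Hence $\operatorname{kl}^{-1}_+(q,c)\le q+\sqrt{c/2}$, which is the explicit bound deferred to the appendix.

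\textbf{Main obstacle.} The only delicate step is Step 2: the one-sided kl-Chernoff bound must be stated for bounded, not just Bernoulli, losses, and it must match the one-sided inverse $\operatorname{kl}^{-1}_+$. I would handle this by citing the Bernoulli-domination lemma (the MGF of a $[0,1]$ variable is dominated by that of a Bernoulli with the same mean) and then running the standard Chernoff optimization.
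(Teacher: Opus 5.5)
Your proposal is correct and matches the paper's proof essentially step for step: a one-sided kl-Chernoff bound for each fixed $[0,1]$-valued behavior, failure probability $\delta\bar\nu(g)$ allotted to each of the countably many behaviors, a union bound, the observation that the fiber of $g_e$ contains $e$ for the code-length relaxation, and Pinsker for the explicit form. Your extra care with the one-sided event, the Bernoulli-domination step for bounded losses, and zero-mass behaviors fills in details the paper leaves implicit, without changing the argument.
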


The complexity penalty depends on the combined prior probability of
implementations with the same loss function. Because the guarantee holds
simultaneously for all such functions, it remains valid when the deployed
implementation is selected using the certification data.

\paragraph{Why certify damage?}
Deployment decisions also require controlling the loss added by
quantization relative to a reference model. Here $f_0=f_{e_0}$, where
$e_0$ may be a teacher implementation fixed independently of $S$.
Positive damage counts loss increases without allowing improvements on
other inputs to cancel them. The following corollary certifies this
bounded target and supports selection under a permitted degradation budget:

\begin{corollary}[Certificate-guided deployment]
\label{cor:deployment}
Let $f_0$ be a reference predictor fixed independently of $S$, and let
$\ell(\cdot,X)\in[a,a+\Delta]$. For each candidate implementation $e$,
define the positive-damage behavior $d_e(X)=
[\ell(f_e,X)-\ell(f_0,X)]_+/\Delta\in[0,1]$,
or any measurable $[0,1]$-valued upper bound on this quantity, and let
$K_d(d_e)$ be its push-forward complexity under a sample-independent
candidate prior.
With probability at least $1-\delta$, simultaneously for all candidates,

\begin{equation}
\Risk(f_e)-\Risk(f_0)\le \Delta U_d(e),\qquad
U_d(e)=\operatorname{kl}^{-1}_+\!\left(
\widehat R_0(d_e),
\frac{K_d(d_e)+\ln(1/\delta)}{m}\right).
\label{eq:damagecertificate}
\end{equation}
Consequently, selecting the cheapest deployment satisfying
$U_d(e)\le\epsilon$ is valid without a post-selection penalty beyond
the declared prior.
\end{corollary}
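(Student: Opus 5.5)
The plan is to reduce \cref{cor:deployment} to \cref{thm:behavioral}, applied to a second implementation space whose ``behaviors'' are the damage functions $d_e$. Take the candidate set $\mathcal E_d$ with its sample-independent prior $\nu_d$, and let $\Pi_d(e)=d_e$. Each $d_e$ is a $[0,1]$-valued measurable function of $X$. This holds either because $\ell(f_e,X),\ell(f_0,X)\in[a,a+\Delta]$ makes $[\ell(f_e,X)-\ell(f_0,X)]_+\le\Delta$, or directly by assumption when an upper bound is used. Because $f_0$ is fixed independently of $S$, the map $e\mapsto d_e$ does not depend on the sample. The push-forward mass $\bar\nu_d(d)=\nu_d(\Pi_d^{-1}(d))$ is then a legitimate sample-independent prior on the countable set of damage behaviors, and $K_d(d)=-\ln\bar\nu_d(d)$ plays exactly the role of $K_{W,A}$.

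The key step is to check that the proof of \cref{thm:behavioral} uses only three facts:
\begin{itemize}
\item the behaviors are $[0,1]$-valued;
\item the prior over behaviors is fixed before $S$ is drawn and has total mass at most one;
\item the samples $X_i$ are i.i.d.\ from $\mathcal D$.
\end{itemize}
That proof is a Chernoff/kl concentration for each fixed $g$ at level $\delta\bar\nu(g)$, followed by a union bound over the countably many behaviors. To be safe I would re-derive it for the damage behaviors. For a fixed $d$, the standard bound $\Pr[\operatorname{kl}(\widehat R_0(d)\|R_0(d))>c]\le e^{-mc}$, restricted to the one-sided event $R_0(d)>\widehat R_0(d)$, applies with $c=(K_d(d)+\ln(1/\delta))/m$, giving failure probability at most $\delta\bar\nu_d(d)$. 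Summing over the distinct $d$ gives total failure probability at most $\delta$. On the complement, $R_0(d_e)\le U_d(e)$ holds for every $e$ at once, by the definition of $\operatorname{kl}^{-1}_+$. I expect this reuse, and in particular confirming that the union is over distinct damage functions rather than implementations so that masses are not double-counted, to be the main point requiring care.

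It then remains to translate back to risks. Pointwise, $\ell(f_e,X)-\ell(f_0,X)\le[\ell(f_e,X)-\ell(f_0,X)]_+\le\Delta d_e(X)$. Taking expectations under $\mathcal D$ gives $\Risk(f_e)-\Risk(f_0)\le\Delta R_0(d_e)\le\Delta U_d(e)$, which is \eqref{eq:damagecertificate}.

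For the selection claim, let $\hat e$ be any rule depending on $S$, such as the cheapest $e$ with $U_d(e)\le\epsilon$. Since the bound holds simultaneously for all candidates on a single event of probability at least $1-\delta$, it holds in particular for $\hat e$. So no extra penalty is incurred beyond the $K_d$ already built into the prior.
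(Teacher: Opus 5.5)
Your proposal is correct and follows the paper's proof: the paper likewise uses the pointwise bound $\ell(f_e,X)-\ell(f_0,X)\le[\ell(f_e,X)-\ell(f_0,X)]_+\le\Delta d_e(X)$, applies \cref{thm:behavioral} to the damage behaviors under their push-forward prior, and uses the simultaneity of the event to justify the selection claim. Your explicit re-derivation of the one-sided Chernoff bound with failure allocation $\delta\bar\nu_d(d)$ and a union bound over distinct damage functions spells out what the paper's proof does implicitly when it cites \cref{thm:behavioral}.
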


Positive clipped-NLL damage guides our GPT-2 allocation and scale-calibration
studies (Appendix~\ref{core:allocation}); Theorem~2.4 also accommodates
this bounded target when transferring a data-constructed posterior's
guarantee to a selected implementation.

\paragraph{Behavioral complexity and direct implementation codes.}
Use the sample-independent prior
$\nu(e)=\pi(a_q)P_{a_q}(e\setminus a_q)$ in
\cref{eq:behavioral_mass} to obtain the behavioral mass and complexity.
For a fixed format $c$, define the conditional complexity
$K(g\mid c)=
-\ln\sum_{e:a_q(e)=c,\,\Pi(e)=g}P_c(e\setminus c)$.
The joint quantity obeys the log-sum-exp identity
$K_{W,A}(g)=-\ln\sum_c\pi(c)e^{-K(g\mid c)}$. Conditional comparisons charge the format selector $-\ln\pi(c)$;
$K_{W,A}(g)$ combines all formats sharing the same loss behavior
under the declared prior. This push-forward exposes the difference from the implementation code.
For a reference $e_0$, define the behavioral compression gain

\begin{equation}
G_{\rm BCC}(e_0)=
\ln\frac{\sum_{e:\Pi(e)=\Pi(e_0)}\nu(e)}{\nu(e_0)}\ge0.
\label{eq:bqc_gain}
\end{equation}
Hence
$K_{W,A}(g_{e_0})=-\ln\nu(e_0)-G_{\rm BCC}(e_0)$.
For a uniform $B$-bit implementation family with behavioral multiplicity
$N_g$, this becomes $K_{W,A}=B\ln2-\ln N_g$.
Activation precision affects complexity through fiber prior mass
(multiplicity under a uniform prior) or coded activation metadata,
so BCC can distinguish A4 from A8 without counting transient activations
as persistent parameters.

\paragraph{From local audits to a full-model certificate.}
Direct evaluation of the mass in \cref{eq:behavioral_mass} can require
comparing many complete implementations. To reduce this work, we seek
a way to combine local fiber masses from separate component audits. Layer-wise auditing is not valid for dense models. 
In a sequential network, changing one layer can change the inputs to
later layers, so separate local loss equalities do not preserve the
final loss. However, mixture-of-experts (MoE) models route inputs to selected
experts~\citep{switch2022}; fixed top-one routing permits exact loss-fiber
composition when each input's loss depends only on its selected expert.
A learned router must be fixed with stable assignments, and top-$k$
mixing requires a joint transition certificate unless the stated
one-expert loss dependence holds.

\begin{proposition}[Exact composition under fixed routing]
\label{prop:routedfactorization}
Let $\mathcal X=\bigsqcup_{u=1}^U A_u$ be a fixed partition, with fixed
shared computation and router. For $e=(e_1,\ldots,e_U)$, assume
$g_e(X)=g_{u,e_u}(X)$ on $A_u$: the loss there depends only on expert $u$.
For a realized behavior $g$, put
$F_u(g)=\{e_u:g_{u,e_u}=g|_{A_u}\}$. Then

\begin{equation}
F(g)=\prod_u F_u(g),\qquad
\bar\nu(g)=\prod_u p_u(g),\qquad
K_{W,A}(g)=\sum_u-\ln p_u(g),
\label{eq:routedfactorization}
\end{equation}
where the mass identities use a sample-independent product prior
$\nu=\bigotimes_u\nu_u$ and $p_u(g)=\nu_u(F_u(g))>0$.
For local format mixtures,
$p_u(g)=\sum_f\pi_u(f)\nu_{u,f}(F_u(g))$.
The fiber identity holds for arbitrary joint priors; independence is
needed only for the probability product. Certified local subsets give
the corresponding mass lower bound. If the fixed shared fields have
mixture weight $\pi(a)$, their contribution gives
$K_{W,A}(g)\le-\ln\pi(a)-\sum_u\ln p_u(g)$.
\end{proposition}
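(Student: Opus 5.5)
The fiber identity is a set equality, so I will prove it by double inclusion, using only the partition $\mathcal X=\bigsqcup_u A_u$ and the locality hypothesis $g_e=g_{u,e_u}$ on $A_u$. If $e\in F(g)$, then $g_e(X)=g(X)$ for every $X$. Restricting to $X\in A_u$ gives $g_{u,e_u}=g|_{A_u}$, so $e_u\in F_u(g)$ for every $u$. Conversely, suppose $e_u\in F_u(g)$ for all $u$. Any $X\in\mathcal X$ lies in exactly one block $A_u$, and there $g_e(X)=g_{u,e_u}(X)=g(X)$. Hence $e\in F(g)$, which gives $F(g)=\prod_u F_u(g)$.

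This argument never refers to the prior. The fiber identity therefore holds for arbitrary joint priors, as the statement claims. The only thing needed is that the shared computation and router are fixed, so that each implementation is fully specified by the tuple $(e_1,\ldots,e_U)$ and the blocks $A_u$ do not depend on $e$.

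For the mass identity, I sum the product prior over the product set. Because $\mathcal E$ is countable and all terms are nonnegative, the sum can be rearranged (Tonelli):
\[
\bar\nu(g)=\sum_{e\in\prod_u F_u(g)}\prod_u\nu_u(e_u)=\prod_u\sum_{e_u\in F_u(g)}\nu_u(e_u)=\prod_u p_u(g).
\]
Taking $-\ln$ gives the additive form of $K_{W,A}$, and the assumption $p_u(g)>0$ ensures every term is finite. For local format mixtures, $\nu_u=\sum_f\pi_u(f)\nu_{u,f}$, so linearity yields $p_u(g)=\sum_f\pi_u(f)\nu_{u,f}(F_u(g))$.

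For certified local subsets $C_u\subseteq F_u(g)$, the product $\prod_u C_u$ is contained in $F(g)$ by the inclusion just proved. Monotonicity of $\nu$ then gives $\bar\nu(g)\ge\prod_u\nu_u(C_u)$.

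For the shared fields, I treat the prior as $\nu=\sum_{a'}\pi(a')\bigotimes_u\nu_u^{a'}$, a mixture over the shared configuration. The fiber of $g$ contains the slice where the shared configuration equals the fixed $a$. On that slice the product argument applies, so $\bar\nu(g)\ge\pi(a)\prod_u p_u(g)$, and taking $-\ln$ gives the stated upper bound on $K_{W,A}(g)$.

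The step that needs care is making precise that the router and shared computation are part of the fixed context rather than variable fields. If they varied with $e$, the partition would depend on the implementation and the product structure would fail. I will state this explicitly, since the rest of the proof is direct.
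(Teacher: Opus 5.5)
Your proposal is correct and follows the paper's proof essentially step for step. The matching steps are: the product-set fiber identity from the partition (your double inclusion is the paper's \emph{if and only if}), the mass product obtained by rearranging nonnegative countable sums under the product prior, linearity for local format mixtures, monotonicity for certified local subsets, and the contribution $\pi(a)\prod_u p_u(g)$ from the matching shared-field mixture component, with your explicit requirement that the router be fixed mirroring the paper's own caveat.
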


Appendix~\ref{app:routedproof} gives the proof and architectural conditions.
The results of the routed-composition are reported in \cref{tab:mainroutedgrid} of Appendix~\ref{core:routed}.
\paragraph{Why independent transfer?}
A cell found on $S$ may disagree outside it.
\Cref{thm:publicquotient} certifies its posterior risk and transfers the
guarantee to the selected implementation using the probe sample
$Z=(Z_1,\ldots,Z_n)\sim\mathcal D^n$, independent of $S$.
Each \emph{probe} $Z_j$ is a fresh input used to bound population
loss discrepancy from the reference when exhaustive verification
is infeasible.
For a distribution $Q$ on implementations define
$\widehat R_0(Q)=\E_{e\sim Q}\widehat R_0(g_e)$ and
$\widehat d_Z(Q,e_0)=
n^{-1}\sum_j\E_{e\sim Q}|g_e(Z_j)-g_{e_0}(Z_j)|$.

\begin{theorem}[Posterior transfer and layerwise sample quotients]
\label{thm:publicquotient}
After observing $S$, let $e_0$ be arbitrary and let $Q_S$ be any posterior
on implementations. If $(e_0,Q_S)$ is fixed before observing $Z$, then
with probability at least $1-\delta-\eta$ over $(S,Z)$,

\begin{equation}
\label{eq:posteriortransfer}
R_0(g_{e_0})\le
\eta_Z+\operatorname{kl}^{-1}_+\!\left(
\widehat R_0(Q_S),
\frac{\KL(Q_S\|\nu)+\ln(2\sqrt m/\delta)}{m}\right),
\end{equation}

with
$\eta_Z=\operatorname{kl}^{-1}_+\!\left(
\widehat d_Z(Q_S,e_0),\frac{\ln(1/\eta)}{n}\right)$.
In particular, let $\mathcal C_S(e_0)$ be a measurable cell whose members
have the same loss vector as $e_0$ on $S$, set
$\mu_S=\nu(\mathcal C_S(e_0))>0$, and take
$Q_S=\nu(\cdot\mid\mathcal C_S(e_0))$.
Then $\widehat R_0(Q_S)=\widehat R_0(g_{e_0})$,
$\KL(Q_S\|\nu)=-\ln\mu_S$, and

\begin{equation}
R_0(g_{e_0})\le
\eta_Z+\operatorname{kl}^{-1}_+\!\left(
\widehat R_0(g_{e_0}),
\frac{-\ln\mu_S+\ln(2\sqrt m/\delta)}{m}\right).
\label{eq:publicquotient}
\end{equation}
If the prior factorizes over layers and fixed metadata and
$\mathcal C_S=\prod_{\ell=1}^L\mathcal C_{\ell,S}$, then
$-\ln\mu_S=\sum_\ell-\ln\nu_\ell(\mathcal C_{\ell,S})$
plus the metadata cost.
\end{theorem}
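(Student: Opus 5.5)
The argument splits into a PAC--Bayes step on $S$, a transfer step on $Z$, and a union bound. For the first step, I apply the Maurer--Seeger PAC--Bayes-kl inequality to the $[0,1]$-valued behaviors $g_e$ under the sample-independent prior $\nu$. With probability at least $1-\delta$ over $S$, simultaneously for every posterior $Q$,
\[
\operatorname{kl}\bigl(\widehat R_0(Q)\,\big\|\,R_0(Q)\bigr)\le \frac{\KL(Q\|\nu)+\ln(2\sqrt m/\delta)}{m},
\qquad R_0(Q)=\E_{e\sim Q}R_0(g_e).
\]
This comes from the standard bound $\E_{S}\E_{e\sim\nu}e^{m\,\operatorname{kl}(\widehat R_0(g_e)\|R_0(g_e))}\le 2\sqrt m$, together with the Donsker--Varadhan change of measure, Markov's inequality and joint convexity of $\operatorname{kl}$. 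Because the statement holds uniformly in $Q$, it applies to the data-dependent $Q_S$. Inverting it gives $R_0(Q_S)\le \operatorname{kl}^{-1}_+(\widehat R_0(Q_S),\cdot)$. The inversion is legitimate because $\operatorname{kl}(q\|\cdot)$ is increasing on $[q,1]$: if $R_0(Q_S)<\widehat R_0(Q_S)$ the bound is trivial, and otherwise $R_0(Q_S)$ lies in the set whose supremum defines $\operatorname{kl}^{-1}_+$.

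For the transfer step, I write
\[
R_0(g_{e_0})\le R_0(Q_S)+d(Q_S,e_0),\qquad d(Q,e_0)=\E_{X\sim\mathcal D}\E_{e\sim Q}|g_e(X)-g_{e_0}(X)|,
\]
which is just the triangle inequality applied pointwise in $X$. I then condition on $S$. Since $(e_0,Q_S)$ is fixed before $Z$ is drawn, the variables $Y_j=\E_{e\sim Q_S}|g_e(Z_j)-g_{e_0}(Z_j)|\in[0,1]$ are i.i.d. given $S$, with mean $d(Q_S,e_0)$ and empirical mean $\widehat d_Z(Q_S,e_0)$. The Chernoff bound in kl form (Langford's test-set bound) then gives, with conditional probability at least $1-\eta$, that $d(Q_S,e_0)\le \operatorname{kl}^{-1}_+(\widehat d_Z,\ln(1/\eta)/n)=\eta_Z$. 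Integrating over $S$ and taking a union bound with the first step yields \cref{eq:posteriortransfer} with probability at least $1-\delta-\eta$.

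For the cell specialization, take $Q_S=\nu(\cdot\mid\mathcal C_S(e_0))$. Every member of the cell has the loss vector of $e_0$ on $S$, so $\widehat R_0(g_e)=\widehat R_0(g_{e_0})$ for every $e$ in the support of $Q_S$; averaging gives $\widehat R_0(Q_S)=\widehat R_0(g_{e_0})$. The density $dQ_S/d\nu$ equals $1/\mu_S$ on the cell, so $\KL(Q_S\|\nu)=-\ln\mu_S$. Substituting both identities gives \cref{eq:publicquotient}. If $\nu=\bigotimes_\ell\nu_\ell\otimes\nu_{\rm meta}$ and the cell is a product with the metadata fixed, then $\mu_S=\prod_\ell\nu_\ell(\mathcal C_{\ell,S})\cdot\nu_{\rm meta}(\text{meta})$. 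Taking $-\ln$ gives the stated sum plus the metadata cost.

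The main obstacle is measurability and the conditioning on $S$. I need $Q_S$ to be a measurable kernel from $S$, which is immediate here since $\mathcal E$ is countable. I also need the probe bound to hold conditionally on $S$ despite the data-dependent choice of $(e_0,Q_S)$. Independence of $Z$ from $S$ settles this, because it lets me apply the fixed-hypothesis Chernoff bound conditionally and then take expectations over $S$.
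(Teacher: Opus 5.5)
Your proposal is correct and follows essentially the same route as the paper. Both use a single PAC--Bayes--kl event, built from the $2\sqrt m$ exponential-moment bound, Markov's inequality and change of measure, so it covers the data-dependent $Q_S$. Both then apply a Chernoff--kl bound, conditional on $S$, to the fixed $[0,1]$-valued discrepancy function over the independent probes, use the pointwise triangle inequality $R_0(g_{e_0})\le R_0(Q_S)+\eta_Z$, and finish with a union bound; the cell case comes from conditioning the prior and the layerwise case from multiplying product masses.

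One small point: your remark that measurability of $S\mapsto Q_S$ is ``immediate'' from countability of $\mathcal E$ overstates things. Countability makes each $Q_S$ a point of a simplex, but measurability of the selection rule is a mild additional hypothesis, which the paper also leaves implicit.
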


\Cref{thm:publicquotient} converts a construction-sample cell into a
population guarantee for $e_0$, with $\eta_Z$ bounding the additional
risk incurred when replacing the posterior by that implementation.
Thus, observed merging can support certification even when cell members
differ on new inputs. Exact sample cells use the reference's empirical
loss; approximate posteriors use their own empirical mean loss.
Selection using the probes requires another split or a selector penalty.
Layerwise composition still requires complete transition equality in
dense networks, or the routed conditions of
\cref{prop:routedfactorization}.

\paragraph{Why a break-even law?}
A quotient lowers the complexity term, but it does not automatically lower
the bound. The quotient posterior is charged its own empirical loss, which
may exceed the reference implementation's; it carries a PAC--Bayes
confidence term in place of the direct code's; and a cell built from
construction data also pays a certified transfer $\tau$.
The next proposition combines these three charges into an exact threshold
$G_\star$: the transferred quotient improves the direct bound precisely
when the saving exceeds it. Corollary~\ref{cor:probebudget} converts that
threshold into a probe count, so the required budget can be assessed
under a declared transfer scenario before probes are drawn.

\begin{proposition}[Exact quotient pay-for-it criterion]
\label{prop:quotientcrossover}
Let a direct comparator use $B$ bits, empirical loss $q_0$, and failure
probability $\alpha$, and let a quotient posterior use empirical Gibbs
loss $q_Q$, saving $G\in[0,B]$ bits, PAC--Bayes failure probability $\delta$,
and an independently certified additive transfer $\tau$. Let $U_{\rm raw}=\operatorname{kl}^{-1}_+\!\left(
q_0,\frac{B\ln2+\ln(1/\alpha)}{m}\right)$. If $U_{\rm raw}-\tau\le q_Q$, the transferred quotient cannot be strictly
smaller. If $q_Q<U_{\rm raw}-\tau<1$, then

\begin{equation}
U_{\rm BCC}<U_{\rm raw}
\quad\Longleftrightarrow\quad
G>G_\star:=
B+\log_2\!\frac{2\sqrt m}{\delta}
-\frac{m}{\ln2}
\operatorname{kl}\!\left(q_Q\middle\|U_{\rm raw}-\tau\right).
\label{eq:quotientcrossover}
\end{equation}
Appendix~\ref{app:proofs} separates this threshold into confidence and
fit--transfer costs. Returning the smaller of both bounds requires
joint failure budget $\alpha+\delta+\eta$ when $\tau$ has failure
probability $\eta$.
\end{proposition}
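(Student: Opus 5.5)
The argument compares two explicit expressions. On the direct side, $U_{\rm raw}$ is the Occam--KL bound of \cref{thm:behavioral} with complexity $B\ln2$ and confidence $\alpha$. On the quotient side, I take
\[
U_{\rm BCC}=\tau+\operatorname{kl}^{-1}_+\!\left(q_Q,\frac{(B-G)\ln2+\ln(2\sqrt m/\delta)}{m}\right),
\]
which is the form delivered by \cref{thm:publicquotient}. There the transfer term $\eta_Z$ is replaced by the independently certified additive $\tau$, and the quotient's KL cost is $-\ln\mu_S=(B-G)\ln 2$. Write $c_Q$ for the second argument of $\operatorname{kl}^{-1}_+$. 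Then $U_{\rm BCC}<U_{\rm raw}$ is equivalent to $\operatorname{kl}^{-1}_+(q_Q,c_Q)<U_{\rm raw}-\tau=:r^\ast$. The proof therefore reduces to characterizing when the inverse kl lies strictly below a fixed level $r^\ast$.

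The key lemma concerns the map $r\mapsto\operatorname{kl}(q\|r)$ on $[q,1]$. It is continuous, equals $0$ at $r=q$, is strictly increasing on $[q,1)$ with derivative $(r-q)/(r(1-r))>0$ for $r>q$, and tends to $+\infty$ as $r\to1$ when $q<1$. Hence $\operatorname{kl}^{-1}_+(q,c)=\sup\{r\in[q,1]:\operatorname{kl}(q\|r)\le c\}$ is the unique $r$ solving $\operatorname{kl}(q\|r)=c$, or equals $1$ if that is unattained. From this:
\begin{itemize}
\item For $r^\ast\in(q,1)$, we have $\operatorname{kl}^{-1}_+(q,c)<r^\ast$ if and only if $c<\operatorname{kl}(q\|r^\ast)$. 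If $c<\operatorname{kl}(q\|r^\ast)$, every feasible $r$ satisfies $r<r^\ast$ by strict monotonicity, and the supremum is attained by continuity, so it is strictly below $r^\ast$. Conversely, if $c\ge\operatorname{kl}(q\|r^\ast)$, then $r^\ast$ itself is feasible.
\item For the degenerate case $r^\ast\le q_Q$: the inverse kl always satisfies $\operatorname{kl}^{-1}_+(q_Q,\cdot)\ge q_Q\ge r^\ast$. So $U_{\rm BCC}\ge U_{\rm raw}$, which gives the first claim.
\end{itemize}

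In the main case $q_Q<r^\ast<1$, the criterion is
\[
\frac{(B-G)\ln2+\ln(2\sqrt m/\delta)}{m}<\operatorname{kl}(q_Q\|r^\ast).
\]
Multiplying by $m/\ln2$ and solving for $G$ gives $G>B+\log_2(2\sqrt m/\delta)-\frac{m}{\ln2}\operatorname{kl}(q_Q\|r^\ast)=G_\star$, which is \cref{eq:quotientcrossover}. The constraint $G\in[0,B]$ only keeps the complexity nonnegative and plays no role in the equivalence.

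For the decomposition deferred to the appendix, I split $\operatorname{kl}(q_Q\|r^\ast)$ against the direct term. When $U_{\rm raw}<1$, it satisfies $\operatorname{kl}(q_0\|U_{\rm raw})=(B\ln2+\ln(1/\alpha))/m$. Substituting $B\ln 2=m\operatorname{kl}(q_0\|U_{\rm raw})-\ln(1/\alpha)$ writes $G_\star$ as a confidence cost, $\log_2(2\sqrt m\,\alpha/\delta)$, plus a fit--transfer cost, $\frac{m}{\ln2}\bigl[\operatorname{kl}(q_0\|U_{\rm raw})-\operatorname{kl}(q_Q\|U_{\rm raw}-\tau)\bigr]$.

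The joint failure budget follows from a union bound over three events: the direct Occam bound (probability $\alpha$), the PAC--Bayes bound (probability $\delta$), and the transfer certificate (probability $\eta$). The only delicate step is the strict-inequality boundary behaviour of $\operatorname{kl}^{-1}_+$, namely attainment of the supremum and the case $r^\ast$ near $1$. That is why $r^\ast<1$ is assumed and why strict monotonicity is needed rather than mere monotonicity.
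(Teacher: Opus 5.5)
Your proposal is correct and follows the paper's proof. Both arguments reduce $U_{\rm BCC}<U_{\rm raw}$ to $\operatorname{kl}^{-1}_+(q_Q,c_Q)<U_{\rm raw}-\tau$ and use strict monotonicity to turn this into $c_Q<\operatorname{kl}(q_Q\|U_{\rm raw}-\tau)$, which is then solved for $G$; you phrase the monotonicity through $r\mapsto\operatorname{kl}(q_Q\|r)$, while the paper phrases it through the inverse's radius. Both then substitute the interior-root identity to get the decomposition and apply a union bound over the three events, and your explicit handling of supremum attainment and of the degenerate case $U_{\rm raw}-\tau\le q_Q$ just fills in steps the paper states in one line.
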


\begin{corollary}[Independent-probe budget for a quotient win]
\label{cor:probebudget}
Fix the direct bound $U_{\rm raw}$ and the quotient core

\[
U_{\rm core}=\operatorname{kl}^{-1}_+\!\left(
q_Q,\frac{(B-G)\ln2+\ln(2\sqrt m/\delta)}{m}\right),
\qquad
\Delta_{\rm core}=U_{\rm raw}-U_{\rm core}.
\]
For a probe count fixed before observing $Z$ and its valid transfer bound
$\tau_n$, the transferred quotient is strictly better when
$\Delta_{\rm core}>0$ and $\tau_n<\Delta_{\rm core}$. Consequently $n_\star=
\min\{n\in\mathbb N_{\ge1}:\tau_n<\Delta_{\rm core}\}$
is the break-even budget for a declared transfer scenario. In the zero-disagreement scenario, for
$0<\Delta_{\rm core}<1$ and transfer failure probability $\eta$,
$\tau_n=1-\eta^{1/n}$ and 

\begin{equation}
n_\star=\left\lfloor
\frac{\ln(1/\eta)}{-\ln(1-\Delta_{\rm core})}
\right\rfloor+1.
\label{eq:zerodisagreementbudget}
\end{equation}
Larger certified fibers decrease $U_{\rm core}$ and the required
zero-disagreement budget; Appendix~\ref{app:proofs} gives the derivative and
limiting-transfer analysis.
\end{corollary}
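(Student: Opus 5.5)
The argument has two parts. The first is a direct comparison that uses only the definitions. The second computes the zero-disagreement budget explicitly.

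\textbf{Strict improvement.} Here the transferred quotient bound is $U_{\rm BCC}=\tau_n+U_{\rm core}$. This is exactly the right-hand side of \cref{eq:posteriortransfer}, with $\KL(Q_S\|\nu)$ replaced by the cell cost $(B-G)\ln2$ from \cref{thm:publicquotient}. The term $\tau_n$ plays the role of $\eta_Z$, which is valid because $n$ is fixed before $Z$ is drawn. By definition $U_{\rm raw}=U_{\rm core}+\Delta_{\rm core}$. So
\[
U_{\rm BCC}<U_{\rm raw}\iff \tau_n+U_{\rm core}<U_{\rm core}+\Delta_{\rm core}\iff \tau_n<\Delta_{\rm core}.
\]
The hypothesis $\Delta_{\rm core}>0$ is needed so that some nonnegative $\tau_n$ can satisfy the condition at all. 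The set $\{n:\tau_n<\Delta_{\rm core}\}$ is then the win region, and $n_\star$ is its least element whenever it is nonempty. Calling $n_\star$ a ``break-even budget'' means exactly this minimality. If $\tau_n$ is nonincreasing in $n$, as it is in the scenario below, every $n\ge n_\star$ also wins.

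\textbf{Zero-disagreement scenario.} Suppose $\widehat d_Z=0$. Then
\[
\tau_n=\operatorname{kl}^{-1}_+\bigl(0,\ln(1/\eta)/n\bigr)=\sup\bigl\{r:-\ln(1-r)\le\ln(1/\eta)/n\bigr\}.
\]
This holds because $\operatorname{kl}(0\|r)=-\ln(1-r)$. Solving gives $\tau_n=1-\eta^{1/n}$. The function $\tau_n$ is strictly decreasing in $n$, since $\eta^{1/n}$ increases toward $1$ for $\eta\in(0,1)$. For $0<\Delta_{\rm core}<1$ we can rearrange:
\[
1-\eta^{1/n}<\Delta_{\rm core}\iff \eta^{1/n}>1-\Delta_{\rm core}\iff \tfrac1n\ln\eta>\ln(1-\Delta_{\rm core}).
\]
The last inequality is equivalent to
\[
n>\frac{\ln(1/\eta)}{-\ln(1-\Delta_{\rm core})}=:c>0.
\]
The smallest integer strictly exceeding $c$ is $\lfloor c\rfloor+1$. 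This holds even when $c$ is itself an integer, because the inequality is strict. That gives \cref{eq:zerodisagreementbudget}.

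\textbf{Monotonicity in fiber size.} Enlarging the certified fiber increases $G$, which lowers the complexity argument $(B-G)\ln2+\ln(2\sqrt m/\delta)$. The function $\operatorname{kl}^{-1}_+(q,\cdot)$ is nondecreasing in its second argument. So $U_{\rm core}$ decreases and $\Delta_{\rm core}$ increases. Since $c$ is decreasing in $\Delta_{\rm core}$, $n_\star$ decreases as well.

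\textbf{Main obstacle.} The delicate point is not any calculation. It is justifying that $\tau_n$ is a valid additive transfer, which requires $n$ and $(e_0,Q_S)$ to be fixed before $Z$ is observed. It also requires taking the failure budgets of the two bounds jointly, as stated after \cref{prop:quotientcrossover}. I would address this by pointing explicitly to the independence hypothesis of \cref{thm:publicquotient}. I would also note that choosing $n=n_\star$ depends only on quantities computed from $S$, so no selection over $Z$ takes place.
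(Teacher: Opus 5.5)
Your proposal is correct and follows the paper's proof essentially step for step. The strict win reduces to $\tau_n<U_{\rm raw}-U_{\rm core}$, the identity $\operatorname{kl}(0\|r)=-\ln(1-r)$ gives $\tau_n=1-\eta^{1/n}$, and solving the strict inequality yields $\lfloor c\rfloor+1$. The one minor difference is the fiber-size claim: you get it from monotonicity of $\operatorname{kl}^{-1}_+$ in its radius, whereas the paper implicitly differentiates $\operatorname{kl}(q_Q\|U_{\rm core})$ to obtain the explicit strict rate in \cref{eq:fiberscaling}, which the qualitative claim in the statement does not require.
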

%

\section{Evidence for behavioral complexity and stability}
\label{sec:experiments}

We test exact merging, certificate tightening after validation, and stability
under specified implementation changes.

\paragraph{Protocol and scope.}
Local audits use a predeclared finite universe of complete implementations
that vary specified layers or operations while fixing the remaining
fields. This makes prior masses computable under a common comparison
prior; local-prior guarantees are conditional on those fixed fields.
An exact sample cell $C_S(e_0)$ contains implementations satisfying
$g_e(X_i)=g_{e_0}(X_i)$ for every $X_i\in S$.
A uniformly verified approximate cell lies in
$E_{\eta_0}(e_0)=
\{e:\sup_{X\in\mathcal X}|g_e(X)-g_{e_0}(X)|\le\eta_0\}$.
Cells inferred from construction data require independent transfer
(\cref{thm:publicquotient}); cells verified over the entire declared
population require no transfer probes.

\subsection{Exact merging and net certificate gains}
\paragraph{Lower weight and activation precision can preserve behavior.}
A direct code charges each implementation separately; the question is
whether implementations at different budgets that share the same behavior
can be charged as one.
We enumerate all $2^9=512$ binary and $3^9=19{,}683$ ternary
$3\times3$ weight matrices and evaluate their predictions on all
125 inputs.
For both binary and ternary weights, distinct weight assignments
evaluated at A4 and A8 can preserve every prediction on this complete
population (\cref{tab:exactquotient}).
Each row fixes a task and weight format and pools A4/A8 implementations
with equal prior weight.
Individual costs of 10.000 and 15.265 bits become shared behavioral
costs of 8.415 and 11.943 bits for the binary and ternary tasks,
respectively. Appendix~\ref{core:exact} also reports matching behavior across
weight formats: one W1/A8 and two W1.58/A8 implementations have
identical 0--1 loss vectors.
The binary implementation also matches at A4, preserving predictions
across both precision reductions. On three fixed decoder checkpoints, 18--29 W/A/K/V format choices
preserve all four-token continuations on 256 construction prefixes,
reducing the conditional complexity charge by 4.17--4.86 bits
(see Appendix~\ref{app:format_only}).
BCC further tightens compressed-code bounds under both a
histogram-Huffman code and a distribution-sensitive ternary prior,
relative to the cheapest individual realization of the shared
behavior under each prior
(\cref{tab:compressioncomparison};
Appendix~\ref{app:ternary_mixture}).
\begin{table}[!t]
\caption{Shared behavior across activation precisions.
Each row fixes a separate task on 125 inputs and a weight format.
A4/A8 entries give matching weight assignments divided by the total.
The prior is uniform over assignments and gives A4/A8 equal weight.
Direct cost names one implementation; BCC cost names the shared
behavior using the pooled prior mass. Costs are in bits.}
\label{tab:exactquotient}
\label{tab:scaleone}
\centering\footnotesize
\setlength{\tabcolsep}{4pt}
\begin{tabular}{@{}lrrrrr@{}}
\toprule
Weights
& \shortstack{A4 fiber/total}
& \shortstack{A8 fiber/total}
& \shortstack{Pooled prior mass}
& \shortstack{Direct cost}
& \shortstack{BCC cost}\\
\midrule
Binary
& $2/512$ & $1/512$ & $3/1024$
& 10.000 & \textbf{8.415}\\
\bccrowsep
Ternary
& $6/19{,}683$ & $4/19{,}683$ & $5/19{,}683$
& 15.265 & \textbf{11.943}\\
\bottomrule
\end{tabular}

\end{table}

\begin{table}[!t]
\caption{Nonvacuous complete-decoder NLL bounds (bits/target;
$m=530{,}000$; joint 95\% coverage). Prefixes are drawn independently with replacement
from the declared finite population, so \cref{thm:behavioral}'s independence
hypothesis holds with respect to that population.}
\label{tab:mainfullmodel}
\centering\footnotesize
\setlength{\tabcolsep}{3pt}       
\renewcommand{\arraystretch}{1} 

\begin{tabular}{@{}lrrrrr@{}}
\toprule
Format & Seed & $G$ (bits) & Literal & Compressed & BCC\\
\midrule
W4/A4 & 191 & 3206.4 & 0.6838 & 0.6328 & 0.6017\\
\bccrowsep
W4/A8 & 191 & 1459.3 & 0.7044 & 0.6647 & 0.6505\\
\bccrowsep
W4/A4 & 193 & 3107.1 & 0.6802 & 0.6285 & 0.5985\\
\bccrowsep
W4/A8 & 193 & 1550.2 & 0.7177 & 0.6723 & 0.6572\\
\bccrowsep
W4/A4 & 197 & 3125.0 & 0.6839 & 0.6355 & 0.6052\\
\bccrowsep
W4/A8 & 197 & 1440.6 & 0.6810 & 0.6363 & 0.6224\\
\bottomrule
\end{tabular}

\end{table}

\paragraph{Useful bounds for complete models.}
Complete codes give nonvacuous error and smoothed-NLL bounds in six
controlled four-layer runs (\cref{tab:mainfullmodel}) and six natural-text
deployments (\cref{tab:naturalcomplete} in Appendix~\ref{core:naturalcomplete}). BCC improves the compressed bound
in every row. Exact input-transition cells preserve all logits, without
transfer probes (Appendix~\ref{core:complete}).
Construction-selected 0--1 cells and Brier posteriors improve decoder
bounds in three replications each; probe-budget controls appear in \cref{tab:maincertificates} in Appendix~\ref{core:transfer}.
All four GPT-2 FFN local-prior bounds tighten after transfer
(Appendix~\ref{core:gpt}).

\begin{table}[!t]
\caption{Cache memory, task quality, and certified complexity at a
common GPT-2 checkpoint: 64 WikiText windows, 1,008-token prefill,
and 16 steps. Each row certifies a subset of its reference
deployment's behavioral fiber by varying bias codes at a fixed
cache format. This supplies $G$ bits of complexity reduction,
giving $K_{W,A}/\ln2\le B-G$ under the common code cost $B$.}
\label{tab:mainkvcache}
\centering\footnotesize
\setlength{\tabcolsep}{3pt}       
\renewcommand{\arraystretch}{1}  

\begin{tabular}{@{}lrrrr@{}}
\toprule
Cache & MiB/stream & NLL (nats) & Agree. (\%) & $G$ (bits)\\
\midrule
FP16/FP16 & 36.000 & 3.2137 & 100.00 & 1079.91\\
\bccrowsep
K8/V8 & 19.125 & 3.2135 & 99.22 & 398.19\\
\bccrowsep
K4/V8 & 14.625 & 3.2966 & 85.06 & 405.07\\
\bccrowsep
\textbf{K8/V4} & \textbf{14.625} & \textbf{3.2139} & \textbf{95.80} & \textbf{593.00}\\
\bccrowsep
K4/V4 & 10.125 & 3.2868 & 84.38 & 599.88\\
\bottomrule
\end{tabular}

\end{table}

\paragraph{Matched-memory cache finding: favor key precision.}
At 14.625 MiB and unchanged checkpoint bytes, K8/V4 beats K4/V8 on NLL,
agreement, and certified credit (+187.93 bits; \cref{tab:mainkvcache}).
This lower complexity upper bound holds at both prefix lengths
(Appendix~\ref{core:cache}). \emph{This preference agrees with prior reports that keys are more sensitive to
quantization}~\citep{tang2025spindlekv,kivi2024,hariri2026quantize}.

\paragraph{Memory, quality, and certified complexity.}
Results reported in \cref{tab:mainkvcache} show that K8/V8 saves 46.9\% of FP16 cache memory with near-identical NLL,
but provides 681.72 fewer bits of certified complexity reduction. In this experiment $G=\log_2 \left|\mathcal{C}\right|$, where $\mathcal{C}$ is the certified family of bias-code combinations that all produce identical logits at that cache setting, giving
the complexity upper bound $B-G$.
The key/value decomposition explains the observed ordering
(\cref{tab:kvcachecells} in Appendix~\ref{core:cache}): changing V8 to V4 adds 194.81 credit
bits, whereas changing K8 to K4 adds only 6.88 bits.
The coarser formats admit more bias-code combinations on the
same variable coordinates.
FP16 uses a different construction: it preserves half words
directly, while the low-bit construction also preserves shared
dynamic scales by freezing observed maximizers.
The resulting FP16 cell permits 251 key-bias coordinates to vary,
compared with only one under either low-bit key format.
These credits describe certified subsets of each setting's own
behavioral fiber; their ordering does not necessarily follow prediction
agreement or NLL.
The equal-memory quality advantage of higher key precision also
holds in our Qwen2.5 and SmolLM2 audits
(Appendix~\ref{app:modern_kv}).

\section{Certified families and deployment decisions}
\label{sec:practitioner}

We consider \emph{weight edits} such as sign flips ($w\mapsto -w$),
pruning ($w\mapsto 0$), and magnitude reductions that replace a
quantized weight with an allowed value of smaller magnitude.
A certified edit \emph{family} specifies which weights may change and
their allowed values.
The preceding bounds certify the nominal model; practitioners also need to know how implementation edits affect that guarantee. This requires explicit families of tolerated changes and risk bounds
under the intended perturbation law. We derive these using prediction margins and retained cell
mass, then connect them to screening, pruning, and fault audits. 

\begin{samepage}
\paragraph{Certifying simultaneous weight edits.}
Testing individual edits cannot establish that their combinations are
safe. A Hamming ball $\mathcal{B}_r(e_0)$ also includes every edit within its radius $r$, so
one sensitive coordinate can invalidate the whole family
(\cref{fig:behavioralfibers}A).
We instead select editable coordinates and bound how their joint changes
can close each winner--competitor logit gap.
\Cref{prop:coordinatecell} turns these margin budgets into a product cell
whose every permitted combination preserves predictions, without
enumerating its members. This supplies the guarantees used for sign-flip
assurance, pruning, and audit reduction.
We also test sign flips and pruning on GPT-2; Appendix~\ref{core:head}
reports these experiments.

\begin{proposition}[Margin-certified coordinate cells]
\label{prop:coordinatecell}
Fix the representation $\phi$, scale $\alpha>0$, and tie rule of a head
$\alpha c$ with $N$ symbols in an alphabet $\mathcal A$ of size $q$.
Let $v_j=\sup_{X\in\mathcal X}|\phi_j(X)|<\infty$. For editable coordinates
$J_k$ in row $k$, define
$L_k=\sum_{j\in J_k}\max_{a\in\mathcal A}|a-c_{kj}|v_j$.
For every input, let $w$ be its winner and
$m_{wk}=\langle c_w-c_k,\phi(X)\rangle$. If, for every $k\ne w$, $L_w+L_k<m_{wk}\quad\text{or}\quad L_w=L_k=0$, all $q^{D_*}$ joint edits, $D_*=\sum_k|J_k|$, preserve every prediction.
For prediction-error loss, an independent uniform head prior gives
$K_{W,A}(g)\le K_{\rm upstream}+(N-D_*)\ln q$, including upstream
and metadata cost. Proof: Appendix~\ref{app:coordinateproof}.
\end{proposition}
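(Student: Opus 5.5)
The proof has two parts: a deterministic margin argument showing every permitted joint edit preserves predictions, and a counting argument that converts the resulting product cell into a mass lower bound.

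\emph{Step 1: the margin argument.} The head computes logits $\alpha\langle c_k,\phi(X)\rangle$. Since $\alpha>0$, the argmax and the tie rule depend only on the unscaled scores $\langle c_k,\phi(X)\rangle$. Fix an input $X$ with winner $w$ and any joint edit $c'$. The edit changes only coordinates in $J_k$ of each row $k$, and each new entry lies in $\mathcal A$. So for every row,
\[
|\langle c'_k-c_k,\phi(X)\rangle|\le\sum_{j\in J_k}\max_{a\in\mathcal A}|a-c_{kj}|\,|\phi_j(X)|\le L_k ,
\]
using $|\phi_j(X)|\le v_j$.

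For each competitor $k\ne w$ this gives
\[
\langle c'_w-c'_k,\phi(X)\rangle\ge m_{wk}-L_w-L_k .
\]
If $L_w+L_k<m_{wk}$, the edited gap stays strictly positive, so $w$ strictly beats $k$. If instead $L_w=L_k=0$, both rows are unchanged on the support where $v_j>0$. Coordinates with $v_j=0$ have $\phi_j\equiv0$. Hence the scores of $w$ and $k$ are identical to the reference scores, and the fixed tie rule resolves the pair exactly as before.

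\emph{Step 2: the winner survives.} I expect this to be the main obstacle: one must check that the tie rule cannot flip when some pairs are strict and others are exactly preserved. I would handle it by arguing pairwise. The tie rule is a deterministic function of the score vector, and $w$ weakly dominates every $k$ in the edited scores. Every strict inequality is preserved, and every tie occurs only among rows whose scores are literally unchanged. So the set of maximizers is a subset of the original maximizers that still contains $w$. Assuming the tie rule is consistent under such restrictions (for example, lowest index), $w$ remains the prediction. This holds for every $X\in\mathcal X$, every edited coordinate ranges freely over $\mathcal A$, and there are $q^{D_*}$ such edits.

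\emph{Step 3: the complexity bound.} All members of the product cell $\mathcal C$ induce the same predictions, so they have the same 0--1 loss function $g$. Hence $\mathcal C$, together with fixed upstream fields, is contained in the fiber $\mathcal F(g)$. Under an independent uniform prior over the $N$ head symbols, the head marginal gives
\[
\nu_{\rm head}(\mathcal C)=q^{D_*}q^{-N}.
\]
By independence, the upstream and metadata factor contributes mass $e^{-K_{\rm upstream}}$. Therefore
\[
\bar\nu(g)\ge e^{-K_{\rm upstream}}q^{D_*-N},
\]
and taking $-\ln$ gives
\[
K_{W,A}(g)\le K_{\rm upstream}+(N-D_*)\ln q .
\]
This is the certified-subset lower bound noted after \cref{eq:behavioral_mass}, applied in the same factorized way as in \cref{prop:routedfactorization}.
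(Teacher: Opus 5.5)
Your proof is correct and follows the paper's own argument. Bound each row's score change by $L_k$ (the paper's $\alpha L_k$ on logits), so the adverse change against the winner is at most $L_w+L_k$; treat zero-budget pairs by exact invariance of both scores; and count the $q^{D_*}$ disjoint-coordinate codes of conditional mass $q^{D_*-N}$.

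Your extra consistency assumption on the tie rule is unnecessary. Any reference tie with $w$ forces $L_w=L_k=0$ for every tied row, so the edited maximizer set equals the reference one exactly, not merely a subset.
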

\end{samepage}

\begin{figure}[!tbp]
\centering
\includegraphics[width=\linewidth]{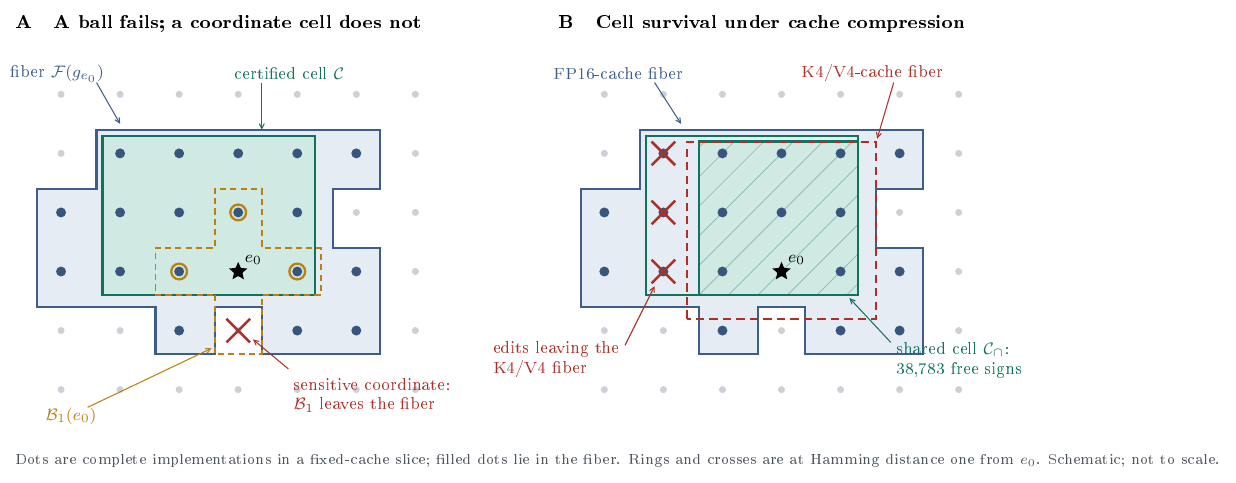}
\caption{Behavioral fibers under weight perturbations.
\textbf{A:} A weight perturbation can move the model outside its
reference fiber and change its predictions. The certified cell
contains combinations of perturbations that stay inside.
\textbf{B:} Cache precision changes the fibers. The shared cell
preserves each cache setting's own reference predictions.}
\label{fig:behavioralfibers}

\end{figure}

\paragraph{Population fibers without probe transfer.}
We test whether large families of simultaneous output-head edits can
preserve every declared prediction and reduce the complexity charged
by BCC. On the complete declared population of 23,735 GPT-2 contexts,
margin-certified head cells preserve every prediction and yield
explicit behavioral complexity savings under a uniform head prior:
50,623 bits for binary weights and approximately 75,565 bits for
ternary weights (for details see Appendix~\ref{core:head}).

\paragraph{Weight edits across cache precisions.}
Cache quantization changes the head's inputs and can reduce the margins
protecting weight edits. We therefore check whether edits certified
before cache compression still preserve the unedited model's predictions
at each cache setting. In the GPT-2 binary/A8 head experiment on 23,735
declared contexts, the full 50,623-sign bank remains safe at K8/V8,
whereas stronger compression invalidates some permitted edits.
A shared subset of 38,783 signs (76.61\%) remains certified across
all five tested cache formats; Appendix~\ref{core:cacheedits}
reports the matched-memory K8/V4 versus K4/V8 comparison.
\Cref{fig:behavioralfibers}B illustrates this restriction: the shared
cell preserves each cache setting's own unedited predictions, so its
weight edits add no prediction changes to those caused by cache compression.

The following modern-model experiments test whether BCC can certify large
families of weight changes at realistic model scale. The family's
prior mass determines its complexity charge, while independent
probes bound the probability that any permitted change alters a
reference prediction on a new input. For prediction-error loss,
this disagreement bound controls the transfer cost in
\cref{thm:publicquotient}.

\paragraph{Certified output-head edits.}
Using the margin construction of \cref{prop:coordinatecell}, we
certify simultaneous edit families so that the prediction-risk
guarantee applies to any permitted sign-flip pattern or pruning
subset. On \textbf{OLMoE-1B-7B}, one shared bank of 27,550 output-head
coordinates covers all $2^{27{,}550}$ sign patterns across the
K8/V8 and K8/V4 deployments. Margin bounds certify their joint
effect without enumerating the patterns. Evaluation on 8,192 independent prefixes
bounds the probability that any allowed edit changes a required
prediction by $0.217\%$ and $0.260\%$, respectively, relative to
each deployment's own unedited head. Both deployments meet all
seven certification criteria and have lower failure risk than
equal-size magnitude and activation-aware banks
(\cref{tab:olmoe_head}). Every head in the selected family also
has a population added clipped-NLL bound below $0.0233$ nats
per token. The cache savings come from KV quantization;
the edit certificate applies to the output head with expert and
router weights fixed. Appendix~\ref{app:olmoe_head} gives the
construction, confidence ledger, and complete diagnostics. At context length 1,024, the margin construction certifies a
shared family of 21,874 output-head coordinates on \textbf{SmolLM2-1.7B},
covering all $2^{21{,}874}$ sign patterns and their permitted
pruning combinations. On 8,192 independent prefixes, the
prediction-risk upper bounds are $0.121\%$ for K8/V8 and
$0.292\%$ for K8/V4. Both deployments meet all seven certification
criteria and have lower all-pattern failure risk than equal-size
magnitude and activation-cost controls. This extends the
edit-certificate evidence to a modern dense model at longer
context (Appendix~\ref{app:smollm2_head}).

\paragraph{Certified edit capacity on SmolLM2.}
Beyond comparing safety at a fixed family size, we ask how many
output-head weights can change together under a fixed risk limit.
At a 1\% continuation-disagreement limit, BCC certifies 32,768
coordinates jointly across K8/V8 and K8/V4, versus 1,024 for
activation-cost selection: a $32\times$ gain on the declared grid, while
magnitude selection passes no tested size
(Appendix~\ref{app:smollm_capacity}). For losses determined by the generated tokens and target,
the continuation-disagreement bound controls population transfer
(\cref{thm:publicquotient}).
\begin{table}[!tbp]
\centering\footnotesize
\setlength{\tabcolsep}{4pt}
\caption{OLMoE output-head edit certification at context length 256,
with four teacher-forced and four greedy continuation steps.
All banks contain 27,550 editable coordinates.
Advantages are simultaneous lower bounds on control failure risk
minus selected-family failure risk, in percentage points (pp).
All bounds share $98.33\%$ confidence. Cache savings include scales.}
\label{tab:olmoe_head}

\begin{tabular}{@{}lrrrrr@{}}
\toprule
Cache & \shortstack{Failed\\prefixes} & \shortstack{Risk upper\\(\%)}
& \shortstack{Magnitude\\advantage (pp)}
& \shortstack{Activation-cost\\advantage (pp)}
& \shortstack{Cache saved\\(\%)}\\
\midrule
K8/V8 & 4/8192 & 0.217 & 0.978 & 0.689 & 48.44 \\
K8/V4 & 6/8192 & 0.260 & 0.823 & 0.625 & 60.94 \\
\bottomrule
\end{tabular}

\end{table}

\paragraph{From certified edits to perturbation-risk bounds.}
\Cref{prop:coordinatecell} identifies a safe family; its probability under
a declared perturbation law measures how often an edit is guaranteed
to preserve behavior.
Unlike geometric flatness, which can change under function-preserving
reparameterizations~\citep{dinh2017sharp}, this probability requires
no parameter metric, differentiability, or local minimum.
When a sample-independent coding prior mixes the declared perturbation
laws, \cref{prop:massstability} uses fiber probability to bound both
behavioral complexity and expected loss increase, accounting for the
mixture weight. Certified cells supply lower bounds on
that probability.

\begin{proposition}[Fiber mass, a charged kernel prior, and stability]
\label{prop:massstability}
Fix distributions $\kappa_j$ and positive weights $\pi_j$ summing to one,
independently of the certification sample, and set
$\nu=\sum_j\pi_j\kappa_j$. For the population fiber $F_e$, let
$s_j=\kappa_j(F_e)>0$, $L_j=-\ln\pi_j$, and $R(e)=\mathbb E_X g_e(X)$.
With independent draws of $e'\sim\kappa_j$ and $X$,

\begin{equation}
\label{eq:kernelcode}
 K_{W,A}(g_e)\le L_j-\ln s_j,,\quad
 D_j^+(e):=\mathbb E\bigl[(g_{e'}(X)-g_e(X))_+\bigr]
 \le(1-s_j)(1-R(e)).
\end{equation}
Writing $R_{\kappa_j}(e)=\mathbb E_{e'\sim\kappa_j}R(e')$, if
$R(e)\le U\le1$, then $R_{\kappa_j}(e)\le1-s_j+s_jU$.
For $\nu=\kappa_j$, $K_{W,A}(g_e)=-\ln s_j$. Certified positive lower
bounds on $s_j$ retain these inequalities (see Appendix~\ref{app:massstability}).
\end{proposition}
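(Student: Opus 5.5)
The proof splits into three elementary parts: the complexity bound, the damage bound, and the perturbed-risk bound. Each uses only the definition of the population fiber $F_e=\{e':g_{e'}=g_e\}$ and the mixture structure $\nu=\sum_j\pi_j\kappa_j$.

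\textbf{Complexity.} Since every term of the mixture is nonnegative, $\bar\nu(g_e)=\nu(F_e)=\sum_i\pi_i\kappa_i(F_e)\ge\pi_j\kappa_j(F_e)=\pi_j s_j$. Taking $-\ln$ gives $K_{W,A}(g_e)\le L_j-\ln s_j$. When $\nu=\kappa_j$ there is a single component with $\pi_j=1$, so $L_j=0$ and the fiber mass is exactly $s_j$. Hence $K_{W,A}(g_e)=-\ln s_j$ holds with equality.

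\textbf{Damage.} Condition on the event $e'\in F_e$ versus its complement. On $F_e$ we have $g_{e'}=g_e$ pointwise on $\mathcal X$, so the integrand $(g_{e'}(X)-g_e(X))_+$ vanishes. Off $F_e$ we use $g_{e'}\le1$ to get $(g_{e'}(X)-g_e(X))_+\le 1-g_e(X)$. Because $e'\sim\kappa_j$ and $X$ are independent, Tonelli factorizes the expectation:
\[
D_j^+(e)\le \mathbb E\bigl[\mathbf 1\{e'\notin F_e\}(1-g_e(X))\bigr]=(1-s_j)(1-R(e)).
\]
The only point needing care is measurability of $F_e$ and of $(e',X)\mapsto g_{e'}(X)$. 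On a countable $\mathcal E$ this is automatic, and it is the main place where I would state an assumption explicitly. Note that the bound never uses $g_{e'}\ge g_e$ off the fiber. It only uses the range $[0,1]$, so no metric or differentiability enters.

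\textbf{Perturbed risk.} Write $R(e')=R(e)+\mathbb E_X[g_{e'}(X)-g_e(X)]\le R(e)+\mathbb E_X[(g_{e'}(X)-g_e(X))_+]$ and average over $e'\sim\kappa_j$. This gives
\[
R_{\kappa_j}(e)\le R(e)+(1-s_j)(1-R(e))=1-s_j+s_jR(e).
\]
Since $s_j\ge0$, the right side is increasing in $R(e)$, so $R(e)\le U$ yields $1-s_j+s_jU$.

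\textbf{Certified lower bounds.} Suppose a certified $\underline s_j\in(0,s_j]$ replaces $s_j$. Each right-hand side is monotone in $s_j$:
\begin{itemize}
\item $-\ln s$ is decreasing in $s$;
\item $(1-s)(1-R)$ is decreasing in $s$ because $1-R\ge0$;
\item $1-s+sU$ is decreasing in $s$ because $U\le1$.
\end{itemize}
Therefore all three inequalities persist with $\underline s_j$ in place of $s_j$.

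I expect no real obstacle in this argument. The most delicate step is the damage bound, because it requires the fiber to be a population fiber with equality on all of $\mathcal X$, not just on $S$. With only a sample-level cell, $g_{e'}=g_e$ would fail off the sample, and the zero-integrand step on $F_e$ would break.
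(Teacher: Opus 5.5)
Your proof is correct and follows the paper's own argument step for step: the mixture lower bound $\nu(F_e)\ge\pi_j s_j$, the split of the damage expectation over $F_e$ and its complement using $(g_{e'}-g_e)_+\le 1-g_e$ and independence, then $R_{\kappa_j}(e)\le R(e)+D_j^+(e)$, and finally monotonicity of each right-hand side in the substituted mass for certified lower bounds. One small wording fix: the equality case needs only $\nu(F_e)=\kappa_j(F_e)=s_j$. It does not require the mixture to have a single component with $\pi_j=1$, and that can fail even when $\nu=\kappa_j$.
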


\paragraph{Training and behavioral mass.}
Training affects the approximate cell mass under fixed quantization and
perturbation rules. In
Appendix~\ref{app:qat_mass}, Tables~\ref{tab:qat_mass_results}--\ref{tab:qat_mass_tolerance}, we compare quantization-aware
training (QAT) from scratch with deterministic post-training
quantization (PTQ), matched by format and initialization seed, across
four low-bit formats and three seeds. At mean absolute Brier tolerance
$1/1024$, construction-cell coverage averages 86.0\% under QAT and
3.2\% under PTQ over the declared nontrivial single-symbol edits.
QAT also yields lower independently evaluated positive Brier damage
in all 12 format--seed comparisons.
For fixed nominal-risk and within-cell discrepancy bounds $U$ and
$\tau$, larger mass tightens the perturbed-risk bound in
\cref{prop:coveredrisk} whenever $U+\tau<1$.
These measurements associate training with greater approximate cell
mass and perturbation robustness.

\begin{table}[!tbp]
\caption{Screening twelve four-layer references at three weight budgets
with FP32 activations. Alice reuses Shakespeare-selected schedules.
BFMS takes 3.40--6.15\,s versus 17.58--34.13\,s for Hessian. Disagreement is to FP; $\Delta$NLL is in nats.}
\label{tab:mainscreening}
\centering\footnotesize
\setlength{\tabcolsep}{7pt}
\begin{tabular}{@{}lrrrr@{}}
\toprule
& \multicolumn{2}{c}{Disagreement (\%)} & \multicolumn{2}{c}{$\Delta$NLL}\\
\bccrowsep
Selector & Shakespeare & Alice & Shakespeare & Alice\\
\midrule
BFMS: natural contexts & 39.61 & 39.90 & 0.3936 & 0.3415\\
\bccrowsep
HAWQ-style trace & 40.40 & 40.38 & 0.3974 & 0.3441\\
\bccrowsep
Covariance-aware Hessian & 40.12 & 40.38 & 0.3907 & 0.3402\\
\bccrowsep
Weight-MSE & 45.56 & 45.72 & 0.6340 & 0.5307\\
\bottomrule
\end{tabular}

\end{table}

\paragraph{Screening implementation choices.}
Because the space of layer precisions, scales and clipping rules is too large to evaluate exhaustively, behavioral fiber mass sensitivity (BFMS) screens layer/candidate pairs by forward-only FP prediction retention under declared perturbations.
It gives allocation quality comparable to Hessian-guided selection at lower
measured cost (\cref{tab:mainscreening}). Appendix~\ref{core:screen} specifies the
unlabeled probes, timing controls, joint W/A extension, and subsequent certification. Sequential damage allocation improves bounds, held-out damage, and NLL over
HAWQ at matched cost in both families (\cref{tab:practitionerdeployment} in Appendix~\ref{core:allocation}).
Binary joint scale--allocation selection reduces held-out damage by 19.2\%
and NLL increases by 37.4\% against checkpoint-MSE selection
(\cref{tab:scalecalibrationfull} in Appendix~\ref{core:allocation}). Certified zero-damage mass also reduces the sampling cost of
perturbation audits (Appendix~\ref{core:audit}).


\section{Conclusion}

BCC charges a deployment for its behavior rather than for one
implementation's description, so a single certificate covers every
implementation in the certified fiber. Verified transitions, fixed
routing, margin budgets, and independent probes make this mass
computable. The experiments use it to screen precisions, certify families
of weight edits on GPT-2, OLMoE, and SmolLM2, and compare cache
allocations at equal memory.

\bibliography{references}
\bibliographystyle{bcc_references}

\appendix
\raggedbottom
\clearpage
\section{Proofs and certified constructions}
\label{app:proofs}
This appendix proves every theorem, proposition, and corollary stated in
Sections~2 and~4, together with the re-centering inequality and the audit
budget used in the numerical results. All logarithms are natural unless a
base is shown; the main text defines the bounded losses and sampling events.
\begin{table}[H]
\caption{Location of the main results' proofs. The cited pages are in this
PDF.}
\label{tab:proofmap}
\centering\small
\begin{tabularx}{\textwidth}{@{}lXr@{}}
\toprule
Main result & Proof or construction & Page\\
\midrule
Proposition~\ref{prop:routedfactorization} & Fixed-routing factorization and feasible-set inclusion & \pageref{app:routedproof}\\
\bccrowsep
Theorem~\ref{thm:behavioral}; Corollary~\ref{cor:deployment} & Countable behavioral prior and positive damage & \pageref{proof:occam}\\
\bccrowsep
Theorem~\ref{thm:publicquotient} & Shared PAC--Bayes event and independent transfer & \pageref{proof:transfer}\\
\bccrowsep
Proposition~\ref{prop:quotientcrossover} & Exact inversion of the comparison & \pageref{proof:crossover}\\
\bccrowsep
Corollary~\ref{cor:probebudget} & Strict probe-budget threshold and integer rounding & \pageref{proof:probebudget}\\
\bccrowsep
Proposition~\ref{prop:coordinatecell} & Simultaneous coordinate edits and half-margin budgets & \pageref{app:coordinateproof}\\
\bccrowsep
Proposition~\ref{prop:massstability} & Coding mass, perturbation risk, and recoding invariance & \pageref{app:massstability}\\
\bccrowsep
Equation~\ref{eq:coveredrisk} & Covered-risk decomposition & \pageref{prop:coveredrisk}\\
\bccrowsep
Audit variance and budget & Proposition~\ref{prop:residualaudit} and Corollary~\ref{cor:auditplanning} & \pageref{app:residualtheory}\\
\bottomrule
\end{tabularx}
\end{table}

\subsection{Coding and entropy interpretation}
\label{app:codingbackground}
\begin{proposition}[Coding interpretation and entropy baseline]
\label{prop:canonicalrepresentative}
For every countable implementation prior $\nu$ and behavior map $\Pi$, fix one representative $s(g)\in\Pi^{-1}(g)$ for each $g$ with $\bar\nu(g)>0$ and define the sample-independent prior $\nu^\dagger(s(g))=\bar\nu(g)$, with zero mass on the other implementations.  Then the ordinary implementation-level Occam complexity of $s(g)$ is exactly
$-\ln\nu^\dagger(s(g))=K_{W,A}(g)$, and $s(g)$ has the same bounded loss function as every implementation in its fiber.  Thus BCC is the direct Occam code of an \emph{oracle} codebook containing one canonical representative per behavior.  It may strictly improve coding under the original declared implementation prior, but an oracle representative prior can match, rather than be strictly beaten by, the quotient.

More generally, let $E$ be the random implementation returned by any predeclared selection procedure, let $G=\Pi(E)$, and assume $H(E)<\infty$.  Among all sample-independent direct and behavioral priors,
\begin{equation}
\inf_\rho\E[-\ln\rho(E)]-\inf_\sigma\E[-\ln\sigma(G)]
=H(E)-H(G)=H(E\mid G).
\label{eq:entropyquotient}
\end{equation}
Hence conditional selection entropy within behavior cells is the exact optimal \emph{expected} coding value of quotienting.
\end{proposition}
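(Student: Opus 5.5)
The statement has two parts, and I would prove them separately.

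For the representative part, the key facts are that the fibers $\Pi^{-1}(g)$ partition $\mathcal E$ and that $\mathcal E$ is countable. Hence $\sum_g\bar\nu(g)=\sum_e\nu(e)=1$, so $\nu^\dagger$ is a probability distribution. It depends only on $\nu$, $\Pi$ and the fixed choice function $s$, so it is sample-independent.

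By construction, $-\ln\nu^\dagger(s(g))=-\ln\bar\nu(g)=K_{W,A}(g)$. Since $s(g)\in\Pi^{-1}(g)$, we have $g_{s(g)}=g$, so the representative shares the loss function of its fiber. Applying the ordinary Occam bound (the singleton case of \cref{thm:behavioral}, with $\nu^\dagger$ as the prior) therefore reproduces exactly the BCC penalty.

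For the remark on ``match rather than be strictly beaten'', I would note that the quotient under $\nu^\dagger$ has trivial fibers in mass terms. Every $g$ has $\bar\nu^\dagger(g)=\nu^\dagger(s(g))$, so the gain $G_{\rm BCC}$ in \cref{eq:bqc_gain} is zero for every representative. Strict improvement under the original $\nu$ happens exactly when some fiber contains at least two implementations of positive mass.

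For the entropy identity, I would use Gibbs' inequality. For any sample-independent $\rho$, $\E[-\ln\rho(E)]=H(E)+\KL(P_E\|\rho)\ge H(E)$, with equality at $\rho=P_E$, the law of $E$. The law of $E$ is fixed before sampling because the selection procedure is predeclared. So the first infimum equals $H(E)$, and it is attained.

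The same argument applied to $G$ gives $\inf_\sigma\E[-\ln\sigma(G)]=H(G)$, provided $H(G)$ is finite. Finiteness follows because $G$ is a deterministic function of $E$, which gives $H(G)\le H(E)<\infty$.

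Finally, the chain rule gives $H(E)=H(E,G)=H(G)+H(E\mid G)$, using $H(G\mid E)=0$. Subtracting the two infima yields \cref{eq:entropyquotient}. All quantities are finite, so the subtraction is legitimate.

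The main subtlety is making sense of the infima and their difference. One needs finiteness, which was handled above. One also needs to clarify that the infimum ranges over all distributions, including those that vanish off the support of $E$, in which case the cost may be $+\infty$. Gibbs' inequality handles this directly, since $\KL(P_E\|\rho)=\infty$ whenever $\rho$ misses mass of $P_E$.

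A second point is interpretive: ``sample-independent'' here means $\rho$ is allowed to depend on the law of $E$ but not on the realized sample. I would state this explicitly, because this is what makes the bound an oracle benchmark, and it is why the result concerns expected rather than pointwise coding value.
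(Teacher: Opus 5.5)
Your proposal is correct and follows the paper's own argument: the fiber masses sum to one, so $\nu^\dagger$ is a valid sample-independent prior whose cost at $s(g)$ is $K_{W,A}(g)$, and the entropy identity follows from the cross-entropy decomposition $\E[-\ln\rho(E)]=H(E)+\KL(P_E\|\rho)$, minimized at the two marginals, together with the chain rule for the function $G=\Pi(E)$. Your additional checks fill in details the paper leaves implicit: finiteness via $H(G)\le H(E)$, the infinite-KL case, and the zero gain under $\nu^\dagger$, which explains the ``match'' remark.
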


\paragraph{Proof of \cref{prop:canonicalrepresentative}.}
The masses $\{\bar\nu(g)\}_g$ sum to one, so $\nu^\dagger$ is a sample-independent prior.  By construction, $-\ln\nu^\dagger(s(g))=-\ln\bar\nu(g)=K_{W,A}(g)$, and $\Pi(s(g))=g$ gives the same population loss behavior.  For the entropy statement, cross-entropy gives
$\E[-\ln\rho(E)]=H(E)+\KL(P_E\|\rho)$ and
$\E[-\ln\sigma(G)]=H(G)+\KL(P_G\|\sigma)$; minimizing at the two marginals and using that $G$ is a function of $E$ yields $H(E)-H(G)=H(E\mid G)$.

\subsection{Exact routing composition and certified feasibility}
\label{app:routedproof}

\begin{proof}[Proof of \cref{prop:routedfactorization}]
For any implementation tuple $e$, the identity $g_e=g$ on $\mathcal X$
holds if and only if $g_{u,e_u}=g|_{A_u}$ for every $u$, since the regions
partition the domain. Therefore its preimage is exactly $\prod_uF_u(g)$.
For countable implementation spaces, nonnegative summation under a product
prior gives
\[
 \nu(F(g))=\sum_{e_1\in F_1}\cdots\sum_{e_U\in F_U}
              \prod_u\nu_u(e_u)
          =\prod_u\sum_{e_u\in F_u}\nu_u(e_u).
\]
Taking negative logarithms proves additivity. Expanding each local mixture
before summing proves the format formula. Local certified subsets give a
subset of this product fiber, so their mass is a lower bound. A component
of a shared-field mixture contributes $\pi(a)\prod_up_u(g)$ to total mass;
other components can add further mass. No assumption of independence of
data, expert losses, or routing frequencies enters this identity. Sample
independence is required when the prior is used for certification.
\end{proof}

\paragraph{A mixture of products retains an explicit mass formula.}
The composition statement also permits correlated expert priors. For the
same fixed partition and loss maps, if
$\nu=\sum_h\lambda_h\bigotimes_u\nu_{u,h}$, then
\[
 \bar\nu(g)=\sum_h\lambda_h\prod_u\nu_{u,h}(F_u(g)).
\]
Thus independence can be weakened to an explicit mixture of products, with
the mixture weights retained. Under an arbitrary joint prior the fiber
still factorizes as a set, and its joint probability must be used.

\paragraph{Feasibility dominance under a common prior.}
For each fixed candidate, $\bar\nu(g_e)\ge\nu(e)$, so its exact-fiber
Occam--KL bound $U_{\rm BCC}(e)$ is no greater than $U_{\rm raw}(e)$ under
the same empirical loss, sample size, and confidence charge. Consequently,
for a common candidate set $\mathcal H$, physical cost $B_{\rm phys}$, and
threshold $\varepsilon$,
\begin{equation}
 \{e\in\mathcal H:U_{\rm raw}(e)\le\varepsilon\}
 \subseteq\{e\in\mathcal H:U_{\rm BCC}(e)\le\varepsilon\},\qquad
 B^*_{\rm BCC}(\varepsilon)\le B^*_{\rm raw}(\varepsilon),
 \label{eq:routedfeasibility}
\end{equation}
where $B^*$ minimizes physical cost over the indicated set and is $+\infty$
when empty. The inclusion holds for any architecture with known fiber mass.
Fixed routing supplies an exact local computation of that mass. The
100-seed study uses the same prior and candidate budgets for both routes,
so this inclusion explains the ordering across its entire tolerance grid.
Its strict gains depend on the measured losses and masses. A transferred
empirical cell additionally pays its transfer and confidence terms.

\paragraph{Applicability to MoE architectures.}
The result applies when each input's loss depends on one edited expert and
fixed shared fields. A router selecting an entire predictor for a document
is one example. Token-routed transformer blocks require checking the loss
dependence of the full computation: later attention can mix tokens assigned
to different experts, and later routers can see states changed by earlier
edits. Switch Transformers use sparse expert selection~\citep{switch2022};
Mixtral selects two experts per token and combines their outputs~\citep{mixtral2024}.
These architectures motivate certification of routing and expert interfaces.
For a routed block with independently editable experts, preserving each
expert's complete output on its fixed reachable region supplies a
composable block transition. Combining such transition cells across layers
gives a certified product subset of the full-model fiber. The present routed study
tests the exact loss-local premise on a fully known finite state domain.

\paragraph{Counterexamples identify the required structure.}
With one scalar output $ab$ and reference $(a,b)=(0,0)$, either individual
edit to one preserves the output, while editing both changes it. Separate
loss equality therefore does not compose in general sequential systems.
Even when a routed fiber is a product set, a correlated prior assigning
mass $1/2$ to each of $(0,0)$ and $(1,1)$ gives the singleton fiber
$\{(0,0)\}$ mass $1/2$, although the product of its marginal masses is
$1/4$. Finally, changing the router changes the regions themselves; a
certificate must fix the router, prove its assignments stable, or include
the changed routing in the certified object.
\subsection{Proof of the behavioral certificate}
\label{proof:occam}
\paragraph{Code-length relaxation.}
If the implementation prior assigns a reconstructed prefix code of length $B(e)$ mass at least $2^{-B(e)}$, then $K_{W,A}(g_e)\le B(e)\ln2$, and Pinsker gives the simpler consequence
\begin{equation}
\Risk(f_e)\le\Rh_S(f_e)+\Gamma_m(B,\Delta,\delta),\quad
\Gamma_m=\Delta\sqrt{\frac{B\ln2+\ln(1/\delta)}{2m}}.
\label{eq:gamma_main}
\end{equation}

For a fixed $[0,1]$-valued behavior $g$, the one-sided Chernoff inequality gives
$\Pr\{R_0(g)>\operatorname{kl}^{-1}_+(\widehat R_0(g),c)\}\le e^{-mc}$.  Set
$c_g=[K_{W,A}(g)+\ln(1/\delta)]/m$ and allocate failure probability
$\delta\bar\nu(g)$ to each countable behavior.  A union bound proves \cref{eq:behavioral_bound}.  If the implementation prior assigns the decoded implementation mass at least $2^{-B}$, then the whole equivalence class has at least that mass and $K_{W,A}\le B\ln2$.  Pinsker's inequality yields \cref{eq:gamma_main}.  All sample-dependent scales, allocation choices, masks, and checkpoints must therefore be encoded.

\paragraph{Proof of \cref{cor:deployment}.}
Pointwise,
$\ell(f_e,X)-\ell(f_0,X)\le[\ell(f_e,X)-\ell(f_0,X)]_+\le\Delta d_e(X)$;
taking expectations and applying \cref{thm:behavioral} proves
\cref{eq:damagecertificate}.  The event is simultaneous, so any cost-constrained
selection made on that event remains covered.  

\paragraph{The shared PAC--Bayes event.}
For a fixed bounded loss of mean $p\in(0,1)$, convexity reduces its KL
exponential moment to the Bernoulli case~\citep{maurer2004note}, giving
\[
\E_S e^{m\operatorname{kl}(\widehat R_0(g)\|R_0(g))}
\le \xi_m:=\sum_{k=0}^m\binom mk
(k/m)^k(1-k/m)^{m-k}\le2\sqrt m,
\]
with $0^0=1$.  The last inequality follows from that reference for $m\ge8$;
substitution in the finite sum verifies $\xi_m^2\le4m$ for $1\le m\le7$.
Means zero and one give a moment of one.  Averaging over $\nu$ and applying
Markov's inequality yields a single event of probability at least $1-\delta$.
On this event, KL convexity and the variational change-of-measure inequality
give, simultaneously for all $Q\ll\nu$,
\[
m\operatorname{kl}(\widehat R_0(Q)\|R_0(Q))
\le\KL(Q\|\nu)+\ln(2\sqrt m/\delta).
\]
For posteriors with infinite KL, use $R_0(Q)\le1$.  Thus the posterior
may be selected on $S$ without an extra posterior-selection union bound.

\label{proof:transfer}
For \cref{thm:publicquotient}, use the PAC--Bayes--KL inequality with the arbitrary data-dependent posterior $Q_S$ to obtain the core term in \cref{eq:posteriortransfer}.  Conditional on $S$, the function
$X\mapsto\E_{e\sim Q_S}|g_e(X)-g_{e_0}(X)|$ is fixed and $[0,1]$-valued.  The one-sided Chernoff bound on the independent probes therefore gives
$\E_X\E_{e\sim Q_S}|g_e(X)-g_{e_0}(X)|\le\eta_Z$.  Finally,
$R_0(g_{e_0})\le R_0(Q_S)+\eta_Z$ and a union bound prove \cref{eq:posteriortransfer}.  For the exact-cell specialization, conditioning the prior gives
$\KL(Q_S\|\nu)=-\ln\mu_S$ and cell membership gives
$\widehat R_0(Q_S)=\widehat R_0(g_{e_0})$, proving \cref{eq:publicquotient}.  Under a product prior and product cell, masses multiply.  Layerwise transition equality on the reference reached states makes the training computations equal inductively; per-layer loss equality alone does not, and the claim fails if a bypass crossing the cut is omitted.

\paragraph{Reading the transfer argument.}
The posterior is an accounting device: conditioning the prior on a computable
cell turns its mass into a KL complexity.  PAC--Bayes controls this posterior's
average risk even though the cell was selected on $S$.  The deployed object,
however, is the chosen implementation $e_0$.  Independent probes certify the
remaining discrepancy needed to pass from the average to that implementation.
Freezing both objects before $Z$ makes this last discrepancy a fixed bounded
function for concentration.  Thus \cref{thm:publicquotient} closes the gap
between a computable empirical cell and a population guarantee for deployment.

\paragraph{Confidence and fit--transfer decomposition.}
When $U_{\rm raw}$ is the interior KL root, the same threshold decomposes exactly as
\begin{equation}
G_\star=\log_2\!\frac{2\sqrt m\,\alpha}{\delta}
+\frac{m}{\ln2}\left[
\operatorname{kl}(q_0\|U_{\rm raw})-
\operatorname{kl}(q_Q\|U_{\rm raw}-\tau)
\right].
\label{eq:quotientcrossoverdecomp}
\end{equation}
The first term is the confidence toll; the second is the exact empirical-fit-and-transfer toll in equivalent bits.  In particular, $\log_2(2\sqrt m)$ is the full threshold only when $q_Q=q_0$, $\tau=0$, and $\alpha=\delta$.

\paragraph{Proof of \cref{prop:quotientcrossover}.}
\label{proof:crossover}
The one-sided inverse is strictly increasing in its radius on the interior.  Therefore
$\tau+\operatorname{kl}^{-1}_+(q_Q,c_Q)<U_{\rm raw}$ is possible only when $U_{\rm raw}-\tau>q_Q$, and in that case it is equivalent to
$c_Q<\operatorname{kl}(q_Q\|U_{\rm raw}-\tau)$.  Substituting
$c_Q=[(B-G)\ln2+\ln(2\sqrt m/\delta)]/m$ and solving for $G$ gives \cref{eq:quotientcrossover}.  If $U_{\rm raw}$ is the interior root, then
$m\operatorname{kl}(q_0\|U_{\rm raw})=B\ln2+\ln(1/\alpha)$; substitution gives \cref{eq:quotientcrossoverdecomp}.
The arithmetic comparison itself does not select a new confidence event.
To return the smaller bound adaptively, cover both events by a union bound;
for overall failure $\varepsilon$, preallocate
$\alpha+\delta+\eta\le\varepsilon$ and recompute both ledgers.
Tables comparing separately valid 95\% bounds do not by themselves certify
their adaptively chosen minimum at 95\%.

\paragraph{Fiber growth and the limiting transfer cost.}
Under the notation of \cref{cor:probebudget}, for a uniform implementation prior with fiber size $F$, $G=\log_2F$.  At an
interior KL root $q_Q<U_{\rm core}<1$, holding the other ledger terms fixed,
\begin{equation}
\frac{\partial\Delta_{\rm core}}{\partial G}
=\frac{\ln2}{m}\frac{U_{\rm core}(1-U_{\rm core})}
{U_{\rm core}-q_Q}>0.
\label{eq:fiberscaling}
\end{equation}
Holding the transfer scenario fixed, larger certified fibers lower its budget,
and as $\Delta_{\rm core}\downarrow0$ the zero-disagreement law is
$n_\star=\Theta(\ln(1/\eta)/\Delta_{\rm core})$.  More generally, if a sequence
of valid transfer bounds satisfies $\tau_n\to d_\infty$, then every sufficiently
large $n$ wins when $d_\infty<\Delta_{\rm core}$ and no sufficiently large $n$ wins when
$d_\infty>\Delta_{\rm core}$.
If $\Delta_{\rm core}\le0$, no number of zero-disagreement probes can make
that ledger beat the direct bound; more probes can remove transfer cost but
cannot repair an unfavorable confidence-and-complexity core.

\paragraph{Proof of \cref{cor:probebudget}.}
\label{proof:probebudget}
The transferred bound is $U_{\rm core}+\tau_n$, so strict improvement over
$U_{\rm raw}$ is equivalent to $\tau_n<U_{\rm raw}-U_{\rm core}$.  With zero empirical disagreement,
$\operatorname{kl}(0\|r)=-\ln(1-r)$ gives
$\tau_n=1-\exp[-\ln(1/\eta)/n]=1-\eta^{1/n}$.  Solving the strict inequality
$1-\eta^{1/n}<\Delta_{\rm core}$ for the least integer $n$ gives
\cref{eq:zerodisagreementbudget}.  Since
$\partial_u\operatorname{kl}(q_Q\|u)=(u-q_Q)/[u(1-u)]$, implicit
differentiation of
$\operatorname{kl}(q_Q\|U_{\rm core})=[(B-G)\ln2+C]/m$ gives
\cref{eq:fiberscaling}.  The elementary inequalities
$x\le-\ln(1-x)\le x/(1-x)$ for $x\in(0,1)$ give the stated small-gap
scaling.  The limit claim follows directly from $\tau_n\to d_\infty$.
When $\Delta_{\rm core}\le0$, the nonnegative transfer term cannot yield a
strict win.  Equality of $d_\infty$ and $\Delta_{\rm core}$ alone does not
determine the asymptotic strict ordering.

\paragraph{Interpreting the probe budget.}
There is no width-only probe law.  If a width-$w$ universe contains
$F(w)$ genuinely equivalent implementations, its saving is exactly
$\log_2F(w)$ bits: exponential fiber growth gives a saving linear in the
number of redundant coded choices, whereas polynomial growth gives only
$O(\log w)$ bits.  Quotienting is never worthwhile for the stated comparator
when its saving fails to overcome the confidence and empirical-fit tolls
($\Delta_{\rm core}\le0$), or, on the transfer-coverage event, when the true
population disagreement is at least $\Delta_{\rm core}$.  It can be valid
yet impractical at any realistic probe budget.  The corollary therefore serves
as a prospective resource-allocation rule: spend validation effort only when
the available core improvement can cover the declared transfer scenario.

\paragraph{Prospective planning versus sequential validation.}
The zero-disagreement budget is conditional on that declared scenario; it
does not predict that future probes will agree.  At a fixed $n$, the observed
disagreement determines the certificate.  Repeatedly checking probes and
stopping at the first favorable fixed-$\eta$ bound is not justified by
\cref{thm:publicquotient}.  One valid sequential version preallocates
$\eta_n>0$ with $\sum_{n\ge1}\eta_n\le\eta$, for example
$\eta_n=\eta/[n(n+1)]$, and uses
$\tau_n=\operatorname{kl}^{-1}_+(\widehat d_{Z_{1:n}},\ln(1/\eta_n)/n)$.
Conditionally on $S$, a union bound then covers all inspected $n$; the core
event gives total failure at most $\delta+\eta$.  The stopping threshold must
be recomputed with these charges.  The reported fixed-budget audits use
predeclared probe counts, so their certificates require no such adjustment.

\subsection{Computable cells and the margin certificate}
\label{app:quotient}

The exact population quotient in \cref{eq:behavioral_mass} is generally
difficult to compute.  Certification only needs a valid upper bound on
complexity: a verified subset of a population fiber lower-bounds its mass and
therefore upper-bounds its negative logarithm.  The following constructions
make this route computable under explicit hypotheses.

\paragraph{Architecture determines the composition rule.}
The certificate indexes complete population behaviors. For fixed routed
losses satisfying \cref{prop:routedfactorization}, local fibers compose
exactly, so the global mass follows from local masses. For a sequential
network, preserving every complete transition on reachable states gives
a sufficient composition rule. Residual, attention, normalization, scale,
and cache paths must be included. The controlled complete-decoder and
GPT-2 cache studies certify such transitions
(\cref{core:complete,core:cache}); the FFN and head audits
use their stated local universes. The constructions below give computable
fiber subsets without recovering the entire equivalence class.

\begin{proposition}[A computable output-head quotient]
\label{prop:headquotient}
Fix an activation format and a population set $\mathcal X$.  Let $\phi(x)\in\mathbb R^d$ be the complete fixed representation reaching a $C$-class quantized linear head, and let each of its $N=Cd$ symbols lie in a numerical alphabet $\mathcal A$ of size $q\ge2$ and diameter $D_{\mathcal A}$, including fixed scales.  Let $r\in\{0,\ldots,N\}$.  For a reference code $W$, write $\hat y(x)=\arg\max_c\langle W_c,\phi(x)\rangle$ and
\[
m_k(x)=\langle W_{\hat y(x)}-W_k,\phi(x)\rangle,
\qquad k\ne\hat y(x).
\]
For each $(x,k)$, assign sensitivity $s_{c,j}(x,k)=D_{\mathcal A}|\phi_j(x)|$ when $c\in\{\hat y(x),k\}$ and zero otherwise, and let $S_r(x,k)$ be the sum of its $r$ largest coordinate sensitivities.  If
\begin{equation}
\inf_{x\in\mathcal X}\min_{k\ne\hat y(x)}[m_k(x)-S_r(x,k)]>0,
\label{eq:head_margin}
\end{equation}
then every head code within Hamming radius $r$ has the same predictions, hence the same 0--1 loss behavior, on $\mathcal X$.  Under an independent uniform head prior, with $K_{\rm upstream}$ charging the fixed upstream implementation and all other metadata,
\begin{equation}
K_{W,A}(g)\le K_{\rm upstream}+N\ln q-
\ln\sum_{j=0}^{r}\binom Nj(q-1)^j.
\label{eq:hamming_cell}
\end{equation}
\end{proposition}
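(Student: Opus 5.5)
The argument has two parts. First we show that a fixed edit cannot overturn any margin. Then we turn the resulting Hamming ball into a lower bound on fiber mass and apply the code-length relaxation of \cref{thm:behavioral}.

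\emph{Margin preservation.} Fix $x\in\mathcal X$, its reference winner $w=\hat y(x)$, and a competitor $k\ne w$. Let $W'$ differ from $W$ in a set $E$ of at most $r$ coordinates $(c,j)$. We expand
\[
\langle W'_w-W'_k,\phi(x)\rangle
=m_k(x)+\sum_{(c,j)\in E}\sigma_c\,(W'_{cj}-W_{cj})\,\phi_j(x),
\]
where $\sigma_w=1$, $\sigma_k=-1$, and $\sigma_c=0$ for every other class.

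Every changed symbol satisfies $|W'_{cj}-W_{cj}|\le D_{\mathcal A}$, because both values lie in $\mathcal A$. So each term in the sum has absolute value at most $s_{c,j}(x,k)$. In particular, edits in rows other than $w$ and $k$ contribute nothing to this gap.

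Since $|E|\le r$, the whole perturbation is at least $-S_r(x,k)$: the sum of any $r$ sensitivities is bounded by the sum of the $r$ largest. By \cref{eq:head_margin}, the edited gap is therefore strictly positive.

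Because $k$ was arbitrary, $w$ remains the unique strict argmax under $W'$. Tie rules therefore never enter. The edited head predicts $\hat y(x)$ for every $x$, so its 0--1 loss function equals the reference's on all of $\mathcal X$, and $W'$ lies in the population fiber $\mathcal F(g)$. The infimum in \cref{eq:head_margin} is what makes this uniform over the declared population rather than only a sample. Even so, we only need the pointwise strict inequality for each $x$.

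\emph{Mass bound.} The Hamming ball of radius $r$ in $\mathcal A^N$ contains exactly $\sum_{j=0}^r\binom Nj(q-1)^j$ codes. Under the independent uniform head prior, each code has conditional mass $q^{-N}$. The fixed upstream implementation and metadata have mass $e^{-K_{\rm upstream}}$ under the factorized prior.

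The ball is a subset of the fiber. Following the remark after \cref{thm:behavioral} that certified subsets lower-bound fiber mass, this gives
\[
\bar\nu(g)\ge e^{-K_{\rm upstream}}\,q^{-N}\sum_{j\le r}\binom Nj(q-1)^j.
\]
Taking $-\ln$ yields \cref{eq:hamming_cell}.

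\emph{Main obstacle.} The delicate step is the simultaneity of the edits. A single coordinate $(c,j)$ with $c\notin\{w,k\}$ has zero sensitivity for the pair $(x,k)$ but may matter for another pair. The argument handles this by bounding every pair $(x,k)$ separately against an arbitrary edit set of size at most $r$. The worst-case top-$r$ sum then covers every ball member at once, so no enumeration of the ball is needed.

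We must also check that the fixed scales are absorbed into the numerical alphabet, so that $D_{\mathcal A}$ really bounds the scaled symbol differences. Finally, the representation $\phi$ must be unaffected by head edits, which holds because $\phi$ is fixed upstream.
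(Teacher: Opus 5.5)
Your proposal is correct and follows the paper's argument. The paper's proof is the one-line observation that changing at most $r$ symbols can lower any winner--competitor margin by at most $S_r(x,k)$; \cref{eq:hamming_cell} then follows by counting the radius-$r$ Hamming ball under the uniform head prior and using that a verified fiber subset lower-bounds $\bar\nu(g)$. You spell out the per-pair expansion and the counting explicitly, and what you actually use at the end is subset monotonicity together with $K_{W,A}=-\ln\bar\nu$, not the code-length relaxation of \cref{thm:behavioral}.
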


For a finite declared population, \cref{eq:head_margin} is evaluated by enumeration; for an infinite population, any sound neural-network verifier may lower-bound the margins and upper-bound the sensitivities.  The representation $\phi$ is recomputed for A4 and A8, so activation precision can change the certified radius.  This is a sufficient construction rather than an assumption that a Hamming ball happens to be equivalent.  Its proof is immediate: changing at most $r$ symbols decreases any winning logit margin by at most $S_r$.

\paragraph{Proof and half-margin specialization of \cref{prop:coordinatecell}.}
\label{app:coordinateproof}
\begin{proof}
For every allowed simultaneous edit, row $k$ changes its logit by at most
$\alpha L_k$. Its adverse change relative to winner $w$ is therefore at
most $\alpha(L_w+L_k)$. The strict inequality in
\cref{prop:coordinatecell} preserves a positive winning gap. When both
budgets are zero, both logits and their tie order stay unchanged.
All predictions are preserved. The editable coordinates are disjoint,
giving $q^{D_*}$ complete head codes. Their uniform conditional prior mass
is $q^{D_*-N}$; including upstream and metadata cost proves the bound.
\end{proof}

The rowwise half-margin construction used in the experiments is a
sufficient specialization. Let $b_k$ be the infimum of the unscaled
winner--runner-up margin when $k$ wins and the winner--$k$ margin otherwise.
Choosing $L_k=0$ or $0<L_k<b_k/2$ implies the pairwise condition: at any
positive gap $m_{wk}$, both $b_w\le m_{wk}$ and $b_k\le m_{wk}$, while a
zero gap forces both budgets to zero. Joint budgets can permit more edits:
for a unit gap, $(L_w,L_k)=(0.7,0.2)$ satisfies the pairwise test although
the first row exceeds half the margin. This strengthens the sufficient
condition; the reported experiment counts retain their original budgets.

This cell can be large even when no radius-one ball lies in the fiber,
since it can exclude sensitive directions. The population bounds on
margins and feature magnitudes must be sound. The head experiment evaluates
them over its complete declared domain (\cref{core:head}).

\paragraph{Reachable transitions and finite cuts.}
At layer $\ell$, let $\Omega_\ell$ be a sample-independent, certified finite set of states that can reach the quantized boundary, and let
$T_\ell(\theta_\ell)=(F_{\ell,\theta_\ell}(u):u\in\Omega_\ell)$.  Use a sample-independent code that indexes each distinct transition tuple uniformly.  If all paths crossing the boundary are included, then
\begin{equation}
K_{W,A}(g)\le\sum_\ell\ln|\{T_\ell(\theta_\ell):\theta_\ell\in\Theta_\ell\}|+B_{\rm downstream}\ln2.
\label{eq:transition_code}
\end{equation}
Here $K_{W,A}$ refers to this explicitly chosen tuple-code prior, not necessarily
the original implementation prior.  The reachable-state sets must cover every
admissible upstream choice.  A full-precision residual, dynamic scale,
normalization state, attention probability, or KV-cache path crossing the cut
must be included.  Thus activation precision enters the code geometry.

\subsection{Behavioral mass and perturbation risk}
\label{app:perturbationcoverage}

\paragraph{The question answered by this appendix.}
Nominal damage compares a deployed implementation with a useful reference.
It does not specify how that implementation reacts to a subsequent change in
its weight symbols or quantizer thresholds.  Behavioral multiplicity can
address this second question when mass is measured under a declared law of
implementation changes.  The relevant object is the probability of preserving
behavior under that law.  A large coding-prior fiber alone does not establish
a worst-case margin, tolerance to arbitrary hardware faults, or robustness to
another input distribution.

\subsubsection{Coverage and re-centering}

Let $\kappa_e$ be a specified distribution over complete perturbed
implementations and write $R_{\kappa}(e)=\mathbb E_{e'\sim\kappa_e}R(e')$.
The kernel describes implementation variation; it need not equal the coding
prior $\nu$, and an adaptively centered kernel is not automatically a legal
sample-independent coding prior.  For the exact population fiber
$F_e=\{e':g_{e'}=g_e\}$, define
\[
 s_e=\kappa_e(F_e),\qquad K_{\kappa}(g_e)=-\ln s_e.
\]
Thus the same push-forward construction has an operational interpretation:
$s_e$ is the probability that a perturbation preserves the chosen loss
function.  It is not the probability of identical logits unless the verifier
establishes that stronger equivalence.

\begin{proposition}[Covered perturbation risk]
\label{prop:coveredrisk}
Let $g_e\in[0,1]$, let $\kappa_e$ be a declared perturbation law over complete
implementations centered at $e$, and write
$R_\kappa(e)=\mathbb{E}_{e'\sim\kappa_e}R(e')$. Let $C$ be a
construction-selected cell with known kernel mass $s=\kappa_e(C)$ and, when
$s>0$, set $Q=\kappa_e(\cdot\mid C)$. If simultaneous valid bounds give
$R(e)\le U$ and $\mathbb{E}_{X,\,e'\sim Q}\lvert g_{e'}(X)-g_e(X)\rvert\le\tau$,
then
\begin{equation}
  R_\kappa(e)\le 1-s+s\min\{1,U+\tau\}.
  \label{eq:coveredrisk}
\end{equation}
For $s=0$ the bound is one and $Q$ need not be defined. Exact cells have
$\tau=0$; empirical cells obtain $\tau$ from independent probes
(\cref{thm:publicquotient}). For the full population fiber $F_e$, $\tau=0$ and
$s=\kappa_e(F_e)=e^{-K_\kappa(g_e)}$, so
$R_\kappa(e)\le 1-e^{-K_\kappa(g_e)}(1-U)$.
\end{proposition}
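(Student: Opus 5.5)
The proof splits the kernel average over the cell $C$ and its complement. Write
\[
R_\kappa(e)=\mathbb E_{e'\sim\kappa_e}R(e')=s\,\mathbb E_{e'\sim Q}R(e')+\mathbb E_{e'\sim\kappa_e}\bigl[R(e')\1\{e'\notin C\}\bigr].
\]
Since $g_{e'}\in[0,1]$, we have $R(e')\le1$ for every $e'$. The second term is therefore at most $\kappa_e(C^c)=1-s$.

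If $s=0$, the first term vanishes, $Q$ is never used, and the bound equals one. This already covers the case $s=0$.

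For $s>0$, control the conditional average pointwise in $X$. For each $X$,
\[
g_{e'}(X)\le g_e(X)+|g_{e'}(X)-g_e(X)|.
\]
Take expectations over $X\sim\mathcal D$ and $e'\sim Q$, using Fubini/Tonelli since everything is nonnegative and bounded. This gives
\[
\mathbb E_{e'\sim Q}R(e')\le R(e)+\mathbb E_{X,e'\sim Q}|g_{e'}(X)-g_e(X)|\le U+\tau,
\]
which holds on the event where the two bounds are simultaneously valid. Also, $\mathbb E_Q R(e')\le1$ trivially, so
\[
\mathbb E_Q R(e')\le\min\{1,U+\tau\}.
\]
Combining the two pieces gives $R_\kappa(e)\le 1-s+s\min\{1,U+\tau\}$.

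The remaining claims are specializations. For an exact cell, every $e'\in C$ satisfies $g_{e'}=g_e$ on the relevant domain, so the discrepancy is zero and $\tau=0$. For an empirical cell, $\tau$ is the probe bound $\eta_Z$ from \cref{thm:publicquotient}, applied with $Q$ as the posterior. For the full population fiber $C=F_e$, the integrand $|g_{e'}-g_e|$ vanishes identically, so $\tau=0$. By definition $s=\kappa_e(F_e)=e^{-K_\kappa(g_e)}$, and since $U\le1$ we get $\min\{1,U\}=U$. Hence $1-s+sU=1-e^{-K_\kappa(g_e)}(1-U)$.

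The only delicate point is measurability and the confidence bookkeeping. The cell $C$ must be measurable, so that $s$ and $Q$ are defined. The bounds $U$ and $\tau$ must hold on a common event, which is why the statement says ``simultaneous''. The inequality itself is deterministic on that event, so no further union bound is needed here.
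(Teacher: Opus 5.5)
Your proof is correct and follows the paper's argument essentially step for step: split the kernel expectation over $C$ and its complement, bound the complement by $1-s$, control the conditional risk on $C$ with the pointwise inequality $g_{e'}(X)\le g_e(X)+|g_{e'}(X)-g_e(X)|$ and boundedness by one, and then specialize to the exact fiber. One small point: the statement does not assume $U\le1$, so your step $\min\{1,U\}=U$ needs a word, but when $U>1$ the final bound $1-e^{-K_\kappa(g_e)}(1-U)$ exceeds one and holds trivially, so nothing is lost.
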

\begin{proof}
Split the kernel expectation over $C$ and its complement.  The complement
contributes at most $1-s$.  On $C$, the pointwise inequality
$g_{e'}(X)\le g_e(X)+|g_{e'}(X)-g_e(X)|$ gives conditional risk at most
$R(e)+\tau$, and boundedness also gives an upper bound of one.  Substitute
$R(e)\le U$.  For the exact fiber, conditional discrepancy is zero and
$s=e^{-K_{\kappa}}$.  No independence between layers or geometric regularity
of the fiber is required.
\end{proof}
The guarantee therefore, depends on three quantities: nominal risk $U$, retained
mass $s$, and within-cell discrepancy $\tau$. All members of a fiber share the
same nominal loss, so deploying the member whose kernel retains the most mass
tightens the bound at no cost in accuracy (re-centering).
For empirical cells, freeze $C$ and $Q$ before probes and independently draw
$X_i\sim\mathcal D$ and $E_i\sim Q$.  The observations
$|g_{E_i}(X_i)-g_e(X_i)|\in[0,1]$ have the conditional discrepancy as their
mean.  One-sided KL inversion and a union bound over the frozen candidates
give $\tau$.  Nominal bounds can use the earlier BCC or direct-code route.
This applies the independent-validation principle of
\cref{thm:publicquotient} to a specified perturbation law.

\begin{corollary}[Required coverage]
\label{cor:requiredcoverage}
Put $v=\min\{1,U+\tau\}$ and fix a risk threshold $\varepsilon<1$.
If $v>\varepsilon$, no $s\le1$ makes \cref{eq:coveredrisk} pass the threshold.
If $v\le\varepsilon$, it passes exactly when
\[
s\ge\frac{1-\varepsilon}{1-v},
\quad\text{equivalently}\quad
-\ln s\le\ln\frac{1-v}{1-\varepsilon}.
\]
\end{corollary}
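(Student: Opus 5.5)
The statement is Corollary~\ref{cor:requiredcoverage}, and the proof is an elementary monotonicity argument applied to the right-hand side of \cref{eq:coveredrisk}. I would view that bound as the affine function $b(s)=1-s+sv=1-s(1-v)$ of the retained mass $s\in[0,1]$, with $v=\min\{1,U+\tau\}\in[0,1]$ fixed by the nominal and transfer bounds. Since $1-v\ge0$, $b$ is nonincreasing in $s$, so the best achievable value of the bound is $b(1)=v$.

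The first case is $v>\varepsilon$. For every admissible $s\le1$, $b(s)\ge b(1)=v>\varepsilon$, so no coverage makes the bound pass. The case $v=1$ belongs here automatically, because $\varepsilon<1$.

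The second case is $v\le\varepsilon<1$. Here $1-v>0$, so $b(s)\le\varepsilon$ is equivalent to $s(1-v)\ge1-\varepsilon$, which is equivalent to $s\ge(1-\varepsilon)/(1-v)$. Both quantities are positive, so taking negative logarithms gives the equivalent form $-\ln s\le\ln\bigl((1-v)/(1-\varepsilon)\bigr)$.

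I read ``passes'' as $b(s)\le\varepsilon$, consistent with the non-strict inequality in the displayed condition. The threshold value $(1-\varepsilon)/(1-v)$ lies in $(0,1]$, so the condition is achievable, and $s=0$ is correctly excluded because $b(0)=1>\varepsilon$.

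There is no real obstacle. The only care needed is the boundary bookkeeping: keep $v=1$ out of the division, and check that the log form is legitimate because both sides are positive.
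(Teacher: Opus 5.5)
Your proposal is correct and follows essentially the same route as the paper. The paper rearranges $1-s(1-v)\le\varepsilon$ using $1-v>0$ and then takes logarithms. Your observation that $b(s)\ge b(1)=v$ for $s\le1$ spells out the infeasible case $v>\varepsilon$, which the paper leaves implicit.
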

\begin{proof}
Rearrange $1-s(1-v)\le\varepsilon$; here $1-v>0$.  Taking logarithms
gives the equivalent complexity condition.  This is a conditional rule for
given $U$ and $\tau$; changing the cell can change $\tau$.
\end{proof}

The mixture bound is no larger than
$\min\{1,U+1-s+s\tau\}$, which separately charges the full outside-cell
damage.  It need not beat direct certification of $R_\kappa$ from perturbed
predictions. A minimum of the covered and direct routes is valid when their
confidence events share a common ledger.

\subsubsection{Damage audits and construction costs}
\label{app:residualtheory}
The behavioral certificate supplies an exactly integrable zero-damage
stratum and its exact probability mass from a proof, with zero perturbation
queries. The population-margin construction uses nominal activations and
margins, whose acquisition and processing carry a separate cost. Conditional
sampling then spends perturbation queries on the complement and reweights
by its known mass, an application of classical stratified
Monte Carlo~\citep{owen2013mc}.

\begin{proposition}[Mass-guided damage auditing]
\label{prop:residualaudit}
Fix an implementation, a perturbation law $\kappa$, and a certified cell $C$
with known mass $s=\kappa(C)$. Let $Z(E,X)\in[0,1]$ be identically zero on
$C$ for every population input, and let $w=1-s>0$. Write
$Y\sim Z(E,X)\mid E\notin C$, with mean $\mu$ and variance $\sigma^2$,
where inputs and implementations are independent. For $n$ iid queries,
the direct estimator $\widehat D_{\rm dir}=n^{-1}\sum_i Z_i$ and
$\widehat D_C=w n^{-1}\sum_iY_i$ are both unbiased for $D=\mathbb E Z$,
and
\begin{align}
\operatorname{Var}(\widehat D_{\rm dir})
 &=\frac{w\sigma^2+w(1-w)\mu^2}{n},
 &\operatorname{Var}(\widehat D_C)&=\frac{w^2\sigma^2}{n}
 \le w\operatorname{Var}(\widehat D_{\rm dir}).
\label{eq:residualvariance}
\end{align}
Any valid upper bound $u$ on $\mu$ gives $D\le wu$ at the same confidence
level. If $s=1$, then $D=0$ without queries.
\end{proposition}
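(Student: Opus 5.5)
The plan is to treat this as a stratified-sampling identity and compute each estimator's first two moments from the law of total variance. Let $I=\1\{E\notin C\}$, so $\Pr(I=1)=w$. Since $Z\equiv0$ whenever $E\in C$ (for every population input), we have $Z=I\,Z$. Conditioning on $I$ then gives
\[
D=\E Z=w\,\E[Z\mid E\notin C]=w\mu .
\]
Unbiasedness follows at once. $\E\widehat D_{\rm dir}=\E Z=D$ holds because the $Z_i$ are iid copies of $Z$. $\E\widehat D_C=w\mu=D$ holds because the $Y_i$ are iid draws from the conditional law of $Z$ given $E\notin C$; inputs are independent of implementations, so conditioning on $E\notin C$ does not distort the input distribution.

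For the variances, independence of the $n$ queries reduces everything to single-query variances divided by $n$. The conditional estimator is immediate: $\operatorname{Var}(wY)=w^2\sigma^2$. For the direct estimator, apply the law of total variance conditioning on $I$:
\begin{itemize}
\item $\E\operatorname{Var}(Z\mid I)=w\sigma^2+(1-w)\cdot0$, since $Z$ is degenerate at zero on $\{I=0\}$.
\item $\operatorname{Var}\E(Z\mid I)=\operatorname{Var}(\mu I)=\mu^2w(1-w)$, since $\E(Z\mid I)=\mu I$ and $I$ is Bernoulli$(w)$.
\end{itemize}
Adding the two terms gives $\operatorname{Var}(Z)=w\sigma^2+w(1-w)\mu^2$, which is the stated formula for $\operatorname{Var}(\widehat D_{\rm dir})$.

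The comparison is then elementary. We need $w^2\sigma^2\le w\bigl(w\sigma^2+w(1-w)\mu^2\bigr)$, which reduces to $0\le w^2(1-w)\mu^2$. This holds because $w\le1$.

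For the last two claims, the identity $D=w\mu$ is deterministic and $w>0$ is known. Any bound $u\ge\mu$ valid with confidence $1-\eta$ therefore yields $D\le wu$ on the same event, with no additional failure probability. If $s=1$, then $C$ has full kernel mass, so $Z=0$ almost surely and $D=0$ with no queries.

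The only delicate point I anticipate is justifying that sampling $Y$ conditionally, i.e.\ drawing $E$ from $\kappa(\cdot\mid E\notin C)$ together with an independent $X$, has exactly the conditional law of $Z$ given $E\notin C$. This uses the stated independence of inputs and implementations and the fact that $C$ is fixed before sampling. The rest is routine algebra.
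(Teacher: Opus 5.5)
Your proof is correct and is essentially the paper's own: both rest on $D=w\mu$ because $Z$ vanishes on $C$, and your law-of-total-variance step, conditioning on the indicator of $E\notin C$, is the same calculation as the paper's $\E Z^2=w(\sigma^2+\mu^2)$ minus $(w\mu)^2$. The comparison via $w\operatorname{Var}(\widehat D_{\rm dir})-\operatorname{Var}(\widehat D_C)=w^2(1-w)\mu^2/n\ge0$, the transfer of the confidence bound, and the $s=1$ case also match the paper's argument.
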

\begin{proof}
The zero contribution on $C$ gives $D=w\mu$ and
$\mathbb E Z^2=w(\sigma^2+\mu^2)$. Subtracting $(w\mu)^2$ and dividing
by $n$ gives the direct variance; iid conditional sampling gives the second.
Their difference is $w(1-w)(\sigma^2+\mu^2)/n\ge0$; the stronger displayed
factor follows because
$w\operatorname{Var}(\widehat D_{\rm dir})-
\operatorname{Var}(\widehat D_C)=w^2(1-w)\mu^2/n\ge0$.
Multiplying a valid mean upper bound by the known nonnegative $w$ preserves
its confidence event.
\end{proof}

For prediction disagreement, a prediction-fiber subset supplies $C$.
For positive 0--1 damage, a loss-fiber subset suffices. If the nominal
correctness probability $c$ is known, both routes can also sample inputs
conditional on nominal correctness: their estimators become
$c\overline Z$ and $cw\overline Y$, respectively, and both variances are
multiplied by $c^2$. Our comparator uses this improvement. For Bernoulli
observations, we use the exact one-sided binomial upper limit for $u$.
The procedure requires a certified cell, its mass, and a sampler for its
complement. Query savings reuse that certificate; constructing it has a
separate cost. The variance comparison is an expectation over audits and
is not a pointwise ordering of every realized confidence bound.

\begin{corollary}[Prospective query budget and construction-cost crossover]
\label{cor:auditplanning}
Under \cref{prop:residualaudit}, let $0<s<1$, $w=1-s$, and let $N\ge1$
be an integer direct-audit budget. The conditional estimator with
$n=\lceil wN\rceil$ queries has MSE no larger than the direct estimator
with $N$ queries. If cell construction and sampler setup cost $b\ge0$
direct-query equivalents, and each conditional query costs $r>0$ such
equivalents, this allocation costs less whenever
\begin{equation}
b+r\lceil(1-s)N\rceil<N.
\label{eq:auditcrossover}
\end{equation}
For $r=1$, the smallest integer budget passing the test is \mbox{$N_{\min}=\lceil(\lfloor b\rfloor+1)/s\rceil$}.
Ignoring rounding gives $N>b/s$. More generally, when $r(1-s)<1$,
the unrounded threshold is $N>b/[1-r(1-s)]$.
\end{corollary}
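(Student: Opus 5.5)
Both estimators in \cref{prop:residualaudit} are unbiased for $D$, so their MSEs equal their variances and the comparison reduces to \cref{eq:residualvariance}. With $n=\lceil wN\rceil\ge wN>0$ conditional queries I would bound
\[
\operatorname{Var}(\widehat D_C)=\frac{w^2\sigma^2}{n}\le\frac{w^2\sigma^2}{wN}=\frac{w\sigma^2}{N}\le\frac{w\sigma^2+w(1-w)\mu^2}{N}=\operatorname{Var}(\widehat D_{\rm dir}),
\]
where the direct estimator uses $N$ queries. The last step only uses $w(1-w)\mu^2\ge0$. This settles the MSE claim.

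For the cost statement, I would simply tally costs. The conditional route pays the setup cost $b$ plus $r$ per query, so $b+r\lceil(1-s)N\rceil$ in total. The direct route pays $N$. The conditional allocation is therefore cheaper exactly when \cref{eq:auditcrossover} holds; this is a restatement rather than a derivation.

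The only step with content is the integer threshold for $r=1$, which is where I expect care is needed with the rounding. Since $N$ is an integer, I would use
\[
N-\lceil wN\rceil=\lfloor N-wN\rfloor=\lfloor sN\rfloor,
\]
so the test $b+\lceil wN\rceil<N$ becomes $\lfloor sN\rfloor>b$. Because $\lfloor sN\rfloor$ is an integer, $\lfloor sN\rfloor>b$ is equivalent to $\lfloor sN\rfloor\ge\lfloor b\rfloor+1$. This holds whether or not $b$ is an integer, since the least integer strictly above $b$ is $\lfloor b\rfloor+1$. As $\lfloor b\rfloor+1$ is itself an integer, the condition is in turn equivalent to $sN\ge\lfloor b\rfloor+1$, that is, $N\ge(\lfloor b\rfloor+1)/s$.

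The left side $\lfloor sN\rfloor$ is nondecreasing in $N$. Hence the set of passing budgets is an upward-closed set of integers, and its least element is $N_{\min}=\lceil(\lfloor b\rfloor+1)/s\rceil$. This value is at least $1$, so it respects the requirement $N\ge1$.

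For the unrounded statements, I would drop the ceiling and write the test as $b+r(1-s)N<N$, equivalently $N\,[1-r(1-s)]>b$. When $r(1-s)<1$ the coefficient $1-r(1-s)$ is positive, so dividing by it gives $N>b/[1-r(1-s)]$. Setting $r=1$ makes $1-r(1-s)=s$, which recovers $N>b/s$.
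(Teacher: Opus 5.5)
Your proposal is correct and follows the paper's proof essentially step for step. Both arguments use unbiasedness, $n\ge wN$ and $w\sigma^2\le w\sigma^2+w(1-w)\mu^2$ for the MSE comparison (you apply the two inequalities in the opposite order, which is immaterial), the same cost tally, and the integer identity $N-\lceil wN\rceil=\lfloor sN\rfloor$ leading to $sN\ge\lfloor b\rfloor+1$. Your remarks on upward closure and on $N_{\min}\ge1$ are harmless additions that the paper leaves implicit.
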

\begin{proof}
Put $V=w\sigma^2+w(1-w)\mu^2$, the direct one-query variance.
By \cref{eq:residualvariance}, $w^2\sigma^2\le wV$. Since both estimators
are unbiased and $n\ge wN$,
\[
\operatorname{MSE}(\widehat D_C;n)
 =\frac{w^2\sigma^2}{n}
 \le\frac{wV}{n}\le\frac{V}{N}
 =\operatorname{MSE}(\widehat D_{\rm dir};N).
\]
The cost is $b+rn$, giving \cref{eq:auditcrossover}. For $r=1$,
$N-\lceil(1-s)N\rceil=\lfloor sN\rfloor$, so the test is equivalent to
$\lfloor sN\rfloor>b$, or $sN\ge\lfloor b\rfloor+1$. This gives
$N_{\min}$. Dropping the ceiling gives the unrounded thresholds.
\end{proof}

Thus $s$ supplies a guaranteed variance-reduction factor of at least
$1/(1-s)$ before audit outcomes are observed. For Bernoulli conditional
outcomes with $0<\mu<1$, the exact factor is
$(1-w\mu)/[w(1-\mu)]$; it approaches $1/w$ as $\mu\downarrow0$, so this
mass-only guarantee is sharp. Reusing a cell over several audits shares
$b$ across them: the total-cost test is
$b+r\sum_j\lceil wN_j\rceil<\sum_jN_j$.
For $s=1$, the risk is exactly zero and no perturbation queries are needed;
$s=0$ provides no mass-based query saving. These rules compare expected
squared error. Equal confidence-bound width and elapsed-time savings require
their own cost and precision analysis.

\subsubsection{One mass for coding and stability}
\label{app:massstability}

\paragraph{Invariance under behavior-preserving recoding.}
\label{app:massinvariance}
Let $T:\mathcal E\to\mathcal E'$ map countable implementation spaces and
satisfy $g'_{T(e)}=g_e$ for every $e$. Transport the prior and each declared
kernel by $\nu'=T_\#\nu$ and $\kappa'_j=T_\#\kappa_j$, where
$T_\#\mu(A)=\mu(T^{-1}(A))$. For the fibers
$F_g=\{e:g_e=g\}$ and $F'_g=\{e':g'_{e'}=g\}$,
\begin{equation}
\bar\nu'(g)=\bar\nu(g),\qquad
\kappa'_j(F'_g)=\kappa_j(F_g).
\label{eq:massinvariance}
\end{equation}
Indeed, $T^{-1}(F'_g)=F_g$; applying the push-forward definition proves both
equalities. Also $\mathbb E_{e'\sim\kappa'_j}R'(e')=
\mathbb E_{e\sim\kappa_j}R(e)$ by loss preservation. Thus behavioral
complexity, kernel coverage, actual perturbation risk, and the bounds in
\cref{prop:massstability} are unchanged when mixture weights are retained.
Injectivity, differentiability, a volume element, and a group structure are
unnecessary. Statistical use retains the sample-independence condition on
the prior and any charged kernel dictionary.

For example, three equally probable implementations with behaviors
$(g_A,g_A,g_B)$ give fiber masses $(2/3,1/3)$. Merging the first two
implementations and adding their masses preserves those probabilities and
the complexity $\ln(3/2)$ of $g_A$. Assigning a new uniform prior to the two
resulting codes gives masses $(1/2,1/2)$ and complexity $\ln2$ instead.
Likewise, a coordinate-defined edit or noise law must be transported to
represent the same perturbation experiment. The invariant object consists
of the loss map together with its declared measures.

The coding prior and perturbation kernel need not be identical.
\Cref{prop:massstability} allows a declared dictionary of kernels and
charges for choosing one, including its center.

\paragraph{Proof of \cref{prop:massstability}.}

\begin{proof}
The prior gives $\nu(F_e)\ge\pi_j\kappa_j(F_e)=\pi_js_j$; taking negative
logarithms gives \cref{eq:kernelcode}. Positive damage is zero on $F_e$.
Outside it, $[g_{e'}(X)-g_e(X)]_+\le1-g_e(X)$, because both losses lie
in $[0,1]$. Independence of $e'$ and $X$ therefore bounds the complement
contribution by $(1-s_j)(1-R(e))$. Finally
$R_{\kappa_j}(e)\le R(e)+D_j^+(e)\le1-s_j+s_jR(e)$, and $s_j\ge0$
allows substitution of $U$. The first inequality is simultaneous over
implementations and dictionary indices; using it in a statistical certificate
still requires the sample-independent prior and confidence event of
\cref{thm:behavioral}. No zero-cost adaptive prior is introduced.
\end{proof}

Thus one mass has two meanings: it is the probability of preserving the
entire chosen loss function, and, after charging the kernel index, it bounds
the code assigned to that function. Increasing $s_j$ tightens both upper
bounds for fixed $L_j$ and nominal risk. Actual damage also depends on
losses outside the fiber. \Cref{eq:massinvariance} preserves these code and
risk statements under behavior-preserving recoding with transported measures.

\paragraph{Using a certified subset.}
A complete fiber need not be recovered. If $C\subseteq F_e$ has a certified
mass lower bound $0<\underline s_j\le\kappa_j(C)$ and $R(e)\le U$ with $U\in[0,1]$,
then on their joint validity event the same construction gives
\[
K_{W,A}(g_e)\le L_j-\ln\underline s_j,\qquad
R_{\kappa_j}(e)\le1-\underline s_j+\underline s_j U.
\]
Indeed, $s_j\ge\underline s_j$, and both right-hand sides are nonincreasing
in the substituted mass. A margin-certified coordinate bank therefore
supplies both bounds without enumerating every allowed fault combination.
Under a kernel confined to a verified prediction-preserving cell,
$s_j=1$ and every permitted implementation retains the reference
prediction-error risk. A data-selected bank must still use a charged
sample-independent kernel dictionary or another valid prior construction
for the coding statement.

The risk bound is sharp given only $s_j$ and $R(e)=u<1$: on a singleton
input domain take reference loss $u$, give its fiber mass $s_j$, and assign
loss one to all remaining mass. Then
$R_{\kappa_j}(e)=s_j u+(1-s_j)$ and
$D_j^+(e)=(1-s_j)(1-u)$. A stronger uniform bound therefore requires
information about losses outside the fiber.

\clearpage
\section{Experimental protocols, principal ablations, and controls}
\label{core:protocols}
\long\def\scaleonesummarytable{%
\begin{table}[H]
\caption{Scale-I exact-population summary.  Weights is the implementation
alphabet; A4 fiber and A8 fiber count minimum-error implementations with the
displayed population behavior; Joint $K$ is the negative base-two logarithm
of their combined push-forward mass under equal A4/A8 format priors; and A8
lower error counts seeds, out of 50 unselected exhaustive replications, in
which A8 has lower minimum population error.}
\label{tab:scaleone_summary}
\centering
\small
\begin{tabular}{lrrrr}
\toprule
Weights & A4 fiber & A8 fiber & Joint $K$ (bits) & A8 lower error\\
\midrule
Binary  & 2 & 1 & 8.415  & 29/50\\
\bccrowsep
Ternary & 6 & 4 & 11.943 & 37/50\\
\bottomrule
\end{tabular}
\end{table}
}

\long\def\scaletwosummarytable{%
\begin{table}[H]
\caption{Scale-II certificate summary.  Audit identifies the fixed-center or
data-selected protocol; $n$ is the number of independent transfer probes;
Formats gives the evaluated W/A rows; $G$ is behavioral saving in bits;
$U_{\rm raw}$ and $U_{\rm BCC}$ are direct and transferred quotient bounds;
and Wins counts strict improvements.  Intervals span the stated rows or seeds,
and Gain always means $U_{\rm raw}-U_{\rm BCC}$.}
\label{tab:scaletwo}
\centering
\scriptsize
\setlength{\tabcolsep}{3.6pt}
\begin{tabular}{llrrrr}
\toprule
Audit & Formats & $n$ & $G$ (bits) & $U_{\rm raw}\rightarrow U_{\rm BCC}$ & Wins\\
\midrule
Selected exact & T/A8 & 20,000 & 12.000 & 0.02868 $\rightarrow$ 0.02035--0.02060 & 3/3\\
\bccrowsep
Selected Brier & T/A8 & 20,000 & 12.000 & 0.02890--0.03022 $\rightarrow$ 0.02065--0.02215 & 3/3\\
\bccrowsep
Fixed center & T/A8 & 200,000 & 12.000 & 0.76625980 $\rightarrow$ 0.76625510--0.76625744 & 5/5\\
\bccrowsep
Fixed center & W1/W1.58 $\times$ A4/A8 & 4,000 & 0--12.000 & 0.67842--0.76687 $\rightarrow$ 0.67952--0.76777 & 0/4\\
\bottomrule
\end{tabular}
\end{table}
}

\long\def\scalethreesummarytable{%
\begin{table}[H]
\caption{Scale-III GPT-2-small quotient summary.  Format gives weight/activation
precision; $B_{\rm full}$ is raw complexity of both final-FFN matrices in
Mbit; $M_S/|\mathcal H|$ is construction multiplicity over local-universe size;
$G=\log_2 M_S$ is quotient saving in bits; $\eta_Z$ is independent transfer
cost; and $U_{\rm raw}^{\rm loc}\rightarrow U_{\rm BCC}^{\rm loc}$ gives the
direct and transferred quotient per-format 95\% local-prior certificates. Both full-FFN bounds equal one.}
\label{tab:scalethree}\label{tab:gpt2quotient}
\centering
\scriptsize
\begin{tabular}{lrrrrr}
\toprule
Format & $B_{\rm full}$ (Mbit) & $M_S/|\mathcal H|$ & $G$ (bits) & $\eta_Z$ &
$U_{\rm raw}^{\rm loc}\rightarrow U_{\rm BCC}^{\rm loc}$\\
\midrule
W1/A4 & 4.719 & $12{,}071/24{,}577$ & 13.559 & 0.00595 & 0.08062 $\rightarrow$ 0.07256\\
\bccrowsep
W1/A8 & 4.719 & $15{,}611/24{,}577$ & 13.930 & 0.00601 & 0.07851 $\rightarrow$ 0.06988\\
\bccrowsep
W1.58/A4 & 7.479 & $28{,}917/49{,}153$ & 14.820 & 0.00594 & 0.08094 $\rightarrow$ 0.07058\\
\bccrowsep
W1.58/A8 & 7.479 & $35{,}518/49{,}153$ & 15.116 & 0.00564 & 0.07984 $\rightarrow$ 0.06863\\
\bottomrule
\end{tabular}
\end{table}
}

\long\def\practitionerrankingsummarytable{%
\begin{table}[H]
\caption{Prospective GPT-2 practitioner ranking.  Each score is fixed before
this holdout and lower means safer; $\rho_{\Delta\mathrm{NLL}}$ and $\rho_d$
are pooled within-family Spearman correlations with previously unused held-out
NLL increase and positive clipped-NLL damage; Top-3 gives binary/ternary
overlap with the three least-damaging NLL layers.  Higher positive correlations
and overlap are better.  All 24 branches enter each pooled correlation.}
\label{tab:practitionerranking}
\centering
\small
\begin{tabular}{lrrc}
\toprule
Score & $\rho_{\Delta\mathrm{NLL}}$ & $\rho_d$ & Top-3 (B/T)\\
\midrule
Positive-damage BCC & \textbf{0.867} & \textbf{0.930} & 3/1\\
\bccrowsep
Teacher-JS BCC & 0.864 & 0.892 & 2/1\\
\bccrowsep
Original BCC & 0.591 & 0.465 & 0/1\\
\bccrowsep
HAWQ-V2 score & 0.594 & 0.668 & 2/2\\
\bottomrule
\end{tabular}
\end{table}
}

The studies distinguish three objects: exact equality on a declared population,
construction-sample equality validated by independent probes, and empirical
screening followed by evaluation of the assembled model. The loss and sampling
unit below specify each guarantee. 

\subsection{Exact merging across weight and activation precision}
\label{core:exact}
The complete population is the Cartesian cube of
$(-1,-0.55,-0.1,0.35,0.9)$ in float32 arithmetic. For a fixed bias $b$ and
three-class decoder $D$, the network predicts
$\hat y_W(x)=\arg\max_c[D Q_A(\tanh(Wx+b))]_c$, where $W$ is $3\times3$.
The symmetric per-vector quantizer has levels seven for A4 and 127 for A8.
We enumerate all 512 binary and 19,683 ternary codes and compare complete
125-entry loss vectors $g_e(x,y)=\mathbf1\{\hat y_e(x)\ne y\}$.
Equality of these vectors preserves correctness at every input. Prediction
agreement is checked separately, since different wrong classes can have the
same 0--1 loss. Neither condition alone preserves NLL.

The within-alphabet examples in Table~1 use disclosed seeds 110 and 49,
with labels supplied by binary code 502 and ternary code 741 at A8,
respectively. Cross-weight
comparisons hold the task, decoder, bias, and labels fixed. In the seed-110
task, the matrices
\[
W_{\rm B}=\begin{pmatrix}-1&1&1\\-1&1&1\\1&1&1\end{pmatrix},\qquad
W_{\rm T}=\begin{pmatrix}-1&0&0\\-1&1&1\\1&1&1\end{pmatrix}
\]
have identical predictions at A8 on all 125 inputs; the binary matrix also
matches at A4. All are error-free, with class counts $(2,74,49)$.
The A8 zero-loss fiber contains one binary and two ternary codes. Equal
alphabet priors give mass $1/(2\cdot512)+2/(2\cdot19{,}683)$, or 9.926832
bits of behavioral complexity, compared with direct costs 10.000 and
15.264663 bits including the alphabet tag. The 36.907\% reduction in
weight cost compares ideal code lengths $9$ and $9\log_2 3$; fixed-length
blocks require 9 and 15 bits before common metadata.
\paragraph{Behavioral aggregation after lossless compression.} \label{app:compressioncomparison} Compression bounds reward short descriptions of individual predictors~\citep{lotfi2024}. We test whether shared behavior supplies additional savings after replacing the uniform weight code by a lossless entropy code. The comparison uses both existing tasks, seeds 110 and 49, and all four binary/ternary--A4/A8 formats. Their zero-loss fibers contain seven and ten complete implementations, respectively. Bias, decoder, and other inference rules remain public fixed side information; the comparison charges the conditional first-layer code and the two-bit W/A format tag. For an alphabet of size $q$ and a nine-symbol weight code $w$ with symbol counts $n_1,\ldots,n_q$, define the task-independent source distribution \[ \rho_q(w)=\left[\binom{9+q-1}{q-1} \frac{9!}{\prod_{j=1}^q n_j!}\right]^{-1}. \] This chooses a histogram uniformly and then an arrangement uniformly. A deterministic Huffman tree for $\rho_q$, with ties broken by code/node index, assigns length $L_q(w)$. The actual prefix lengths define $\nu_H(w,A,q)=2^{-2-L_q(w)}$; their Kraft sums are exactly one within each alphabet. The tree is determined by $q$, the length nine, and the fixed enumeration, with no fitted histogram or task-dependent codebook. Every code round-trips to its original weights. Direct certification uses the largest individual prior mass in the zero-loss fiber, and BCC uses the sum over that same fiber. Thus the direct comparator may choose the cheapest compressed realization of the shared behavior. 
\begin{table}[H] 
\caption{Behavioral aggregation with a compressed-code prior on the two declared exact tasks. $K_{\rm dir}^{\star}$ is the cheapest direct charge within the zero-loss fiber; $K_{\rm BCC}$ pools that fiber under the same prior. All complexities are in bits and include the W/A format tag. Uniform-code charges are ideal lengths; Huffman direct charges are actual prefix lengths. Bounds concern the bounded 0--1 loss, with the joint confidence ledger stated in the text.} 
\label{tab:compressioncomparison} 
\centering\small \begin{tabular}{l l r r r r} 
\toprule Task seed & Coding prior & $K_{\rm dir}^{\star}$ & $K_{\rm BCC}$ & $U_{\rm dir}^{\star}$ & $U_{\rm BCC}$\\ \midrule 110 & Uniform & 11.000 & 9.366 & 0.02318 & 0.02101\\ \bccrowsep 110 & Huffman & 10.000 & 8.656 & 0.02185 & 0.02007\\ \bccrowsep 49 & Uniform & 16.265 & 12.943 & 0.03012 & 0.02574\\ \bccrowsep 49 & Huffman & 13.000 & 11.046 & 0.02582 & 0.02324\\ \bottomrule \end{tabular} 
\end{table} 
The Huffman code reduces the best direct charge from 11 to 10 bits in seed 110 and from 16.265 to 13 bits in seed 49. Aggregation supplies a further 1.344 and 1.954 bits, tightening the compressed-code bounds in both tasks. The fiber membership and decoded predictions are unchanged; the numerical bounds change because the new code assigns different prior masses. The displayed bounds use $m=512$ fresh iid draws with replacement from each task's uniform 125-input population. Every member of the reported fiber has zero loss on every population input, so its empirical loss is zero for every possible sample. Assigning $\delta=0.05/4=1/80$ to each task/prior event gives joint coverage at least 95\%; the zero-loss Occam--KL inverse is $U(\mu)=1-(\mu/80)^{1/512}$. Exact population membership requires no transfer term. Since the population error is already known to be zero, these bounds illustrate the additional coding benefit of aggregation; they provide no new estimate of that finite risk or whole-model language-model guarantee.
\subsection{Shared behavior across W/A/K/V formats}
\label{app:format_only}

We test whether different weight, activation, and cache formats can
share behavior while the source checkpoint remains fixed. This
isolates the complexity saving obtained by pooling format choices.

\paragraph{Construction and protocol.}
We use three existing four-layer character decoders with width 16,
two attention heads, and a 65-character vocabulary. For each
checkpoint, we evaluate all 256 tuples
\[
(W,A,K,V)\in\{4,8,12,16\}^{4}.
\]
The learned source parameters remain fixed. Quantized values and scales
are determined by the chosen formats and a common deterministic
quantizer. W applies to all linear weight matrices, including the
output head; A applies to linear inputs; K and V apply to the cached
keys and values in every attention layer. Embeddings remain FP16,
while normalization, residual operations, and arithmetic remain FP32.
Sixteen-bit fields use FP16 rounding; lower precisions use symmetric
integer rounding with per-row weight scales, per-token activation
scales, and per-token/head cache scales. All scales are FP32.
The formats are evaluated through numerical emulation, including
the twelve-bit settings.

The declared behavior is the four-token greedy continuation following
a 28-character prefix. We draw 256 construction prefixes uniformly
with replacement from the fixed Shakespeare text. For each checkpoint,
the primary family contains every format tuple whose continuation
matches W16/A16/K16/V16 on every construction prefix. We freeze these
families before evaluating 8,192 independent prefixes from the same
finite text population.

\paragraph{Prior mass and complexity.}
Conditioned on the fixed source checkpoint, the prior is uniform
over the 256 format tuples, so an individual format choice costs
eight bits. A family of $m$ matching implementations contributes
mass $m/256$ and supplies
\[
G=\log_2 m,
\qquad
\overline K_S=-\log_2(m/256)=8-G.
\]
Here, $\overline K_S$ is a conditional complexity upper bound in
bits for the loss behavior restricted to the construction domain.
Continuation equality preserves any declared loss determined by
the generated tokens and target, so the matching implementations
belong to the same loss fiber on that domain. Other implementations
may contribute additional mass.

\begin{table}[t]
\centering
\small
\setlength{\tabcolsep}{4pt}
\caption{Behavioral aggregation across W/A/K/V formats with each
source checkpoint fixed. Every reported family preserves all
four-token continuations on the 256 construction prefixes.
$\overline K_S$ and $G$ are conditional charges in bits.
The last column bounds the probability that any family member
changes any continuation token on a fresh prefix, with joint
95\% confidence across the three rows.}
\label{tab:format_only}
\begin{tabular}{@{}rrrrrrr@{}}
\toprule
Seed & Formats $m$ & Mass $m/256$ &
$\overline K_S$ & $G$ & Failures & Risk upper \\
\midrule
1 & 22 & 0.08594 & 3.541 & 4.459 & 116/8192 & 1.721\% \\
2 & 29 & 0.11328 & 3.142 & 4.858 & 97/8192  & 1.466\% \\
3 & 18 & 0.07031 & 3.830 & 4.170 & 127/8192 & 1.868\% \\
\bottomrule
\end{tabular}
\end{table}

\paragraph{Results and population transfer.}
All four precision fields vary within each reported family.
For example, seed 2 groups W16/A16/K16/V16 with
W12/A12/K8/V8 while preserving every declared continuation.
Its 29 matching implementations reduce the conditional complexity
upper bound from 8 to 3.142 bits, saving 4.858 bits.

On independent probes, the failure event is disagreement with the
reference by any member of the frozen family at any continuation
position. One-sided Clopper--Pearson bounds use three ledger entries
with $\delta=0.05/3$, giving joint 95\% confidence. The resulting
population-disagreement bounds are 1.47--1.87\%; the predeclared
1\% transfer criterion is not met. Exact equality therefore applies
to the construction domain, while the independent bounds quantify
transfer to the declared finite text population. This experiment
uses a separate confidence budget from the other studies.
\subsection{Complete four-layer decoders and lossless compression}
\label{core:complete}
This experiment certifies every learned parameter of a four-layer causal
decoder. The pre-LayerNorm model has width 16, two attention heads, FFN width
32, and 9,472 parameters. A length-nine input begins with one of eight
selector symbols, followed by eight independent uniform tokens from a
16-symbol alphabet; the label is the selected token. The declared population
has $8\cdot16^8=34{,}359{,}738{,}368$ prefixes. Prefixes are drawn independently
and uniformly with replacement, so Theorem~\ref{thm:behavioral}'s sampling
hypothesis holds with respect to that population.

Linear matrices use W4 symbols $\{-7,\ldots,7\}$ and FP32 matrix scales;
embeddings are FP16 and biases and normalization parameters FP32. Linear
inputs use A4 or A8 dynamic max-absolute quantization. A further quantizer
rounds the token-plus-position embedding sum before the first block, with
no bypass. Its FP32 scale is computed from the complete token--position
alphabet during training and frozen for deployment. This input transition
is part of the trained architecture.

Seeds 191, 193, and 197 are evaluated at both activation precisions.
Each model trains for 16 epochs on 30,000 examples (data seed 7241), using
AdamW with learning rate 0.003, weight decay $10^{-4}$, batch size 512,
gradient clipping at one, and straight-through quantizer gradients.
The certificate sample includes these 30,000 examples and 500,000 fresh
examples (seed 8123); a separate 20,000-example set (seed 1049) measures
held-out performance. Fresh examples do not enter training or selection.

\paragraph{Loss and complete code.}
The deployed distribution is $p_{s,e}=(15/16)p_e+1/256$. Set
$a=\log_2(256/241)$ and $\Delta=8-a$. We certify
\[
g_e^{01}(x,y)=\mathbf1\{\arg\max_k p_{s,e}(k\mid x)\ne y\},\qquad
g_e^{\rm NLL}(x,y)=\frac{-\log_2p_{s,e}(y\mid x)-a}{\Delta}.
\]
Both are in $[0,1]$. The likelihood bound converts back to bits as
$a+\Delta U$ and applies to this smoothed deployment.
A fixed decoder reconstructs the architecture, every learned word, quantizer
scales, activation tag, smoothing constant, and codec. A 32-bit byte-length
prefix followed by an $n$-byte payload has prior mass $2^{-32-8n}$;
summing over all $2^{32}$ possible lengths gives total mass one.
Invalid records decode to a fixed fallback.

The literal codec uses W4 nibbles. The compressed codec stores each matrix
histogram and its exact symbol-order rank, charging
$\lceil\log_2 {N+14\choose14}\rceil+
\lceil\log_2(N!/\prod_jn_j!)\rceil$ bits and its scale. Embeddings remain
literal FP16; remaining values are literal FP32, with byte padding charged.
The literal code costs 58,928 bits and compressed codes 53,560--54,816 bits.
Both codecs reconstruct all parameters exactly and belong to the same prior.

\paragraph{Exact cell and confidence.}
For token embedding $E_{vj}$, position embedding $P_{tj}$, and fixed input
quantizer $Q_{b,s}$, let
\begin{align*}
I_{vj}&=\{h\text{ a finite FP16 word}:Q_{b,s}(h+P_{tj})=
Q_{b,s}(E_{vj}+P_{tj})\ \text{for all }t=0,\ldots,8\},\\
\mathcal C&=\prod_{v=0}^{23}\prod_{j=1}^{16} I_{vj}.
\end{align*}
Position embeddings and downstream fields are fixed. Every member preserves
all input transitions and therefore all logits by induction. This covers
the whole task population, without enumerating its prefixes. Monotone
word-interval searches give $G=\sum_{v,j}\log_2|I_{vj}|$; the compressed
records alone imply $K\le(B_{\rm cmp}-G)\ln2$. There is no transfer term.
Independent enumeration of all finite FP16 words verifies the intervals.

Allocating $0.025$ to each loss event gives joint 95\% coverage across all
six models and all code routes under the common prior, with radius
$(K+\ln40)/530{,}000$. Table~2 reports NLL; \cref{tab:fullmodelerror}
reports error. The BCC NLL bound tightens the compressed bound by
2.13--4.91\% in every run. Complete codecs, checkpoints, per-example losses,
and independent inversions are retained.
\begin{table}[H]
\caption{Complete-model task losses and absolute error certificates.
The 20,000-example held-out set is separate from the certificate sample.
All bounds are simultaneous under the two-loss 95\% ledger and are rounded
upwards; all six declared runs are retained. $B_{\rm cmp}$ is the complete
compressed record length.}
\label{tab:fullmodelerror}
\centering\footnotesize
\setlength{\tabcolsep}{4pt}
\begin{tabular}{@{}llrrrrrr@{}}
\toprule
Format & Seed & $B_{\rm cmp}$ & Test error & Test NLL & $U_{01}^{\rm lit}$ & $U_{01}^{\rm cmp}$ & $U_{01}^{\rm BCC}$\\
\midrule
W4/A4 & 191 & 53,632 & 0.00000 & 0.08855 & 0.07418 & 0.06775 & 0.06383\\
\bccrowsep
W4/A8 & 191 & 54,816 & 0.00040 & 0.09309 & 0.07520 & 0.07020 & 0.06842\\
\bccrowsep
W4/A4 & 193 & 53,560 & 0.00000 & 0.08796 & 0.07418 & 0.06766 & 0.06387\\
\bccrowsep
W4/A8 & 193 & 54,248 & 0.00080 & 0.09678 & 0.07767 & 0.07196 & 0.07006\\
\bccrowsep
W4/A4 & 197 & 53,896 & 0.00000 & 0.08858 & 0.07427 & 0.06816 & 0.06434\\
\bccrowsep
W4/A8 & 197 & 54,280 & 0.00000 & 0.08810 & 0.07418 & 0.06854 & 0.06678\\
\bottomrule
\end{tabular}
\end{table}

\subsubsection{Natural-text feasibility with complete codes}
\label{core:naturalcomplete}
To test whether the same accounting yields useful guarantees on natural
text, we certify six complete four-layer character decoders. The existing
width-16 checkpoints from seeds 1, 2, and 3 of \cref{core:screen} are
each deployed at W4/A4 and W4/A8. The model has two attention heads, FFN
width 32, context length 32, a 65-character vocabulary, and 11,072 learned
parameters. All six choices are fixed before evaluation, with no further
training or calibration search.

\paragraph{Deployment, population, and loss.}
Every linear matrix uses per-row W4 quantization with stored FP32 row
scales; embeddings use FP16 and normalization parameters FP32. Linear
inputs use dynamic per-token A4 or A8 quantization. A stored-scale
quantizer also rounds token-plus-position embeddings before the first
block. Its scale is computed over the complete token--position alphabet
after post-training quantization and then frozen. The complete record
includes the architecture, vocabulary, all learned values, quantizer tags,
scales, smoothing, byte-length prefix, and padding. Raw W4 nibbles cost
89,680 bits; the histogram/order codec costs 87,608--87,680 bits. Both
records reconstruct the same deployed model exactly.

The declared population is uniform over all 1,115,362 length-32
context/next-character pairs in the fixed Shakespeare text. One pair is
one observation. We draw $m=10^6$ population offsets independently with
replacement (seed 3501701), then draw $n=50{,}000$ indices independently
with replacement into that sample (seed 3501702). The deployments are
fixed before this index draw. This is a finite-corpus risk guarantee.
It does not assume independent successive characters or certify an
unseen text distribution.

For $V=65$, the deployed probabilities are
$p_{s,e}=(15/16)p_e+1/(16V)$. Define
$a=-\log_2(15/16+1/(16V))$ and $b=\log_2(16V)$.
The targets are
\[
g_e^{01}(x,y)=\mathbf1\{\arg\max_k p_{s,e}(k\mid x)\ne y\},
\qquad
g_e^{\rm NLL}(x,y)=\frac{-\log_2p_{s,e}(y\mid x)-a}{b-a}.
\]
Both lie in $[0,1]$; the likelihood interval is approximately
$[0.09163,10.02237]$ bits. We compare NLL bounds with the stronger
uniform-predictor baseline $\log_2 65=6.02237$ bits, as well as their
bounded-loss ceiling.

\paragraph{Exact mass and confidence.}
For each token/channel coordinate, form the set of finite FP16 words
preserving its input code at every one of the 32 positions, as in
\cref{core:complete}. Their Cartesian product preserves the entire
input transition for every vocabulary sequence of length at most 32.
All downstream logits, and hence both loss targets, agree. The cell
varies only literal embedding words, so every compressed record has the
same length. Its count gives $G=\sum_{v,j}\log_2|I_{vj}|$ and
$K\le(B_{\rm cmp}-G)\ln2$. Independent exhaustive enumeration of all
63,488 finite FP16 words verifies each interval. No probe-transfer
penalty is needed for this exact cell.

Conditional on the full sample and each fixed
deployment, Hoeffding gives the full empirical normalized risk at most
$q=\min\{1,\widehat R_{\rm sub}+\epsilon\}$, where
$\epsilon=\sqrt{\ln480/(2n)}=0.00785734$, with failure $1/480$.
The twelve model/loss pairs consume $12/480$ confidence; two simultaneous
Occam events, one per loss, each consume $1/80$ and cover every code route
and model. Monotonicity of KL inversion and a union bound therefore give
joint 95\% coverage for
\[
U=\operatorname{kl}^{-1}_+\!\left(q,
\frac{(B-G)\ln2+\ln80}{10^6}\right),
\qquad U_{\rm NLL}=a+(b-a)U.
\]
For raw and compressed controls, set $G=0$ and use their respective
complete record lengths. Raw, compressed, and BCC controls share the
same evaluated losses and confidence allocation.

\begin{table}[!htbp]
\caption{Complete natural-text NLL bounds (bits/character; joint 95\% coverage). One million prefixes are drawn independently with replacement from the declared finite population; 50,000 sampled indices estimate their empirical loss with charged uncertainty. All bounds are below the uniform baseline, $\log_2 65=6.0224$. Empirical denotes the evaluated subsample; values are rounded.}
\label{tab:naturalcomplete}
\centering\footnotesize
\setlength{\tabcolsep}{5pt}
\begin{tabular}{@{}lrrrrrr@{}}
\toprule
Format & Seed & $G$ (bits) & Empirical & Raw & Compressed & BCC\\
\midrule
W4/A4 & 1 & 6840.5 & 3.5146 & 5.3340 & 5.3138 & 5.2446\\
\bccrowsep
W4/A8 & 1 & 2645.6 & 3.3011 & 5.1139 & 5.0936 & 5.0670\\
\bccrowsep
W4/A4 & 2 & 6986.0 & 3.6270 & 5.4485 & 5.4289 & 5.3585\\
\bccrowsep
W4/A8 & 2 & 2589.3 & 3.2965 & 5.1091 & 5.0893 & 5.0633\\
\bccrowsep
W4/A4 & 3 & 6804.5 & 3.4993 & 5.3183 & 5.2978 & 5.2290\\
\bccrowsep
W4/A8 & 3 & 2657.0 & 3.2776 & 5.0894 & 5.0688 & 5.0421\\
\bottomrule
\end{tabular}
\end{table}

Every NLL bound in \cref{tab:naturalcomplete} is below the uniform
baseline; BCC tightens compressed bounds by 0.51--1.30\%. A4 has
larger cells at unchanged stored weight cost, while A8 has lower
empirical NLL and final bounds.

\subsection{Independent transfer: exact, Brier, and clipped-NLL controls}
\label{core:transfer}
One observation is a labeled selector sequence $X=(x,y)$.
The exact target is 0--1 error; the Brier target is
$g_e^{\rm B}(X)=\tfrac12\sum_{k=1}^{16}(p_e(k\mid x)-\mathbf1\{k=y\})^2$.
Both are bounded by one. Checkpoint seeds 89, 97, and 101 and all selection
rules are frozen before the confirming runs. Conditioned on the independent
four-format checkpoint tuple, the prior selects a format uniformly and then
a code uniformly from the center and its one-symbol final-FFN replacements.
A 512-example construction sample selects the minimum-Brier implementation
using fixed tie rules. Exact and approximate transfer use separate untouched
20,000-example probe samples; separate test samples are descriptive.

For 0--1, the posterior conditions the prior on the selected reference's
entire construction-loss vector. For Brier, candidates in the selected
format are sorted by construction mean loss, and the uniform posterior on
the nonempty prefix minimizing the PAC--Bayes core is selected. This need
not be a Brier equality cell: its actual Gibbs loss and independent Brier
discrepancy enter Theorem~\ref{thm:publicquotient}. The selected support is the
whole ternary/A8 format in all three confirming runs. Each loss block uses
failure probability $0.05/9$ per seed/event for the raw comparator,
posterior core, and transfer. \Cref{tab:maincertificates} summarizes the gains;
\cref{tab:selectedquotient} gives every confirming row.
\begin{table}[!t]
\caption{Risk bounds after independent transfer. Decoder ranges span three
seeds; GPT-2 uses the local prior. T denotes ternary and $G$ is the saving
in bits. Protocols and confidence ledgers:
\cref{core:transfer,core:gpt}.}
\label{tab:maincertificates}
\centering\footnotesize
\setlength{\tabcolsep}{4pt}
\begin{tabular}{@{}llrrrr@{}}
\toprule
Audit & Format & Probes & $G$ (bits) & $U_{\rm raw}$ & $U_{\rm BCC}$\\
\midrule
Decoder: exact 0--1 & T/A8 & 20,000 & 12.000 & 0.02868 & 0.02035--0.02060\\
\bccrowsep
Decoder: Brier & T/A8 & 20,000 & 12.000 & 0.02890--0.03022 & 0.02065--0.02215\\
\bccrowsep
GPT-2 FFN & W1/A4 & 1,024 & 13.559 & 0.08062 & 0.07256\\
\bccrowsep
GPT-2 FFN & W1/A8 & 1,024 & 13.930 & 0.07851 & 0.06988\\
\bccrowsep
GPT-2 FFN & T/A4 & 1,024 & 14.820 & 0.08094 & 0.07058\\
\bccrowsep
GPT-2 FFN & T/A8 & 1,024 & 15.116 & 0.07984 & 0.06863\\
\bottomrule
\end{tabular}
\end{table}

\begin{table}[H]
\caption{Fresh data-selected quotient confirmation.  Each block uses the conservative per-event failure probability $0.05/9$ for three seeds and the PAC--Bayes core, public transfer, and raw Occam comparator; this is the predeclared primary allocation for Brier.  $|Q|/U$ is the posterior fraction within the selected format, $\widehat R_S(Q)$ is construction Gibbs loss, $\widehat d_Z$ is public mean loss distortion, $U_{\rm raw}^{\star}$ is the best direct hierarchical certificate, $U_{\rm BCC}$ is the transferred quotient certificate, and Gain is the bound-unit difference $U_{\rm raw}^{\star}-U_{\rm BCC}$.}
\label{tab:selectedquotient}
\centering
\scriptsize
\setlength{\tabcolsep}{3.2pt}
\begin{tabular}{llrrrrrr}
\toprule
Loss & Seed & $|Q|/U$ & $\widehat R_S(Q)$ & $\widehat d_Z$ & $U_{\rm raw}^{\star}$ & $U_{\rm BCC}$ & Gain\\
\midrule
0--1 & 89  & $4{,}097/4{,}097$ & 0 & $9.06\!\times\!10^{-5}$ & 0.02868 & 0.02060 & 0.00808\\
\bccrowsep
0--1 & 97  & $4{,}097/4{,}097$ & 0 & $4.76\!\times\!10^{-7}$ & 0.02868 & 0.02035 & 0.00832\\
\bccrowsep
0--1 & 101 & $4{,}097/4{,}097$ & 0 & 0 & 0.02868 & 0.02035 & 0.00833\\
\midrule
Brier & 89  & $4{,}097/4{,}097$ & $1.71\!\times\!10^{-4}$ & $4.91\!\times\!10^{-5}$ & 0.02958 & 0.02148 & 0.00810\\
\bccrowsep
Brier & 97  & $4{,}097/4{,}097$ & $3.42\!\times\!10^{-4}$ & $1.77\!\times\!10^{-5}$ & 0.03022 & 0.02215 & 0.00807\\
\bccrowsep
Brier & 101 & $4{,}097/4{,}097$ & $3.74\!\times\!10^{-5}$ & $7.11\!\times\!10^{-6}$ & 0.02890 & 0.02065 & 0.00825\\
\bottomrule
\end{tabular}
\end{table}

The bounded-likelihood control uses $g_e=\min\{-\ln p_e(y\mid x),10\}/10$
and transfers a 0--1-selected posterior using this loss's own discrepancy. The
4,000-probe control fails to improve the direct certificate in every format:
for T/A8, 12.000 saved bits are insufficient against a 122.710-bit threshold.
The binary posterior cores already exceed their direct bounds, so additional
probes alone cannot fix them. These outcomes distinguish useful fiber credit
from a guarantee that every empirical cell will pay for its validation.

\subsection{GPT-2 final-FFN local-prior audit}
\label{core:gpt}
The pinned GPT-2-small checkpoint has 124,439,808 parameters. Only block 11's
$768\times3072$ and $3072\times768$ FFN matrices are quantized; other fields
are fixed. W1 and ternary maps use the disclosed per-tensor scales, and the
LayerNorm output and post-GELU input use dynamic per-token A4/A8 rounding.
A fixed FP16 cut stores the final-MLP input for replay. The universe contains
the reference and every one-symbol first-FFN replacement in 32 output
channels chosen by a checkpoint/layer hash before reading text: 24,577
binary or 49,153 ternary implementations, costing 14.585 or 15.585 local bits.

The first 512/1,024/1,024 nonempty train/validation/test entries, independently
tokenized and truncated to 64 tokens, provide construction/probe/test sets.
For target $y$, a fixed public modular map chooses distractor $d(y)$.
The document loss is
$g_e(X)=|T_X|^{-1}\sum_{t\in T_X}\mathbf1\{z_{e,y_t}\le z_{e,d(y_t)}\}$,
with zero for an empty target set. Documents receive equal weight; this is
a pairwise ranking target. A cell preserves every construction-document
loss and the dynamic post-GELU scale. Scale-changing candidate--document
pairs receive discrepancy one on probes, a conservative transfer charge.

All four local-prior bounds tighten after transfer
(\cref{tab:maincertificates}). The savings of 13.559--15.116 bits are
93.0--97.0\% of the local code but only 0.000198--0.000295\% of the full
FFN code. The public fixed reference also admits a tighter point prior.

\scalethreesummarytable

\subsection{Forward-only screening and Hessian controls}
\label{core:screen}
Screening measures whether a layer perturbation retains the full-precision
reference's predictions on fixed unlabeled contexts. The tested decoders
have four pre-normalized attention--MLP blocks, GELU, no dropout, and untied
embeddings. Widths 16 and 32 use two and four heads, FFN widths 32 and 64,
and 11,072 and 38,528 parameters. Only QKV, attention-output, and FFN matrices
are perturbed; other fields and activations stay FP32. Seeds 0--1 are pilots;
seeds 2--7 give twelve confirming references across the two widths.

Tiny Shakespeare supplies a 65-character vocabulary. Contiguous regions
allocate 80\% to training, 5\% to Hessian calibration, 5\% to screening,
and 10\% to audit. Training uses length-32 contexts, batch size 32, 1,500
AdamW steps, learning rate 0.003 decaying to 0.0003, weight decay 0.01, and
unit gradient clipping. Each screen uses 96 natural contexts; unigram and
uniform random contexts are distribution controls. Audits use 2,048
nonoverlapping Shakespeare contexts and, after selection is frozen, 2,048
normalized contexts from \emph{Alice's Adventures in Wonderland}.

For a weight row $w_a$ and candidate $b\in\{2,3,4\}$, set
$s_{a,b}=\|w_a\|_\infty/(2^{b-1}-1)$. The primary screen draws 64 independent
uniform perturbations per block/scale, with coordinate ranges
$[-s_{a,b}/2,s_{a,b}/2]$. If $k_{i,\ell,b}$ draws retain context $i$'s
reference prediction, its score is
\[
\widehat S_{\ell,b}=-\frac1{96}\sum_{i=1}^{96}
\ln\frac{k_{i,\ell,b}+1/2}{65}.
\]
Each perturbed block and its complete downstream network are reevaluated;
full-precision prefixes are cached. The score is a smoothed marginal
retention statistic, rather than a joint fiber-mass estimate.

Each selector minimizes its summed block scores on the same 39 W2/W3/W4
schedules with total widths 10, 12, or 14 bits, corresponding to mean block
precisions 2.5, 3, and 3.5. The complete quantized model is then evaluated;
score additivity is a search heuristic and does not assert layerwise
factorization. HAWQ-style scores use 128 Rademacher Hessian--vector products
on 128 calibration contexts, scoring
$\operatorname{tr}(H_\ell)\|Q_b(W_\ell)-W_\ell\|_F^2/n_\ell$.
Covariance-aware Hessian, directional Taylor, weight-MSE, uniform schedule,
and mean-escape controls are retained. Quantized forward evaluation uses
dequantized floating-point weights and FP32 activations.

Table~\ref{tab:mainscreening} shows comparable observed quality to Hessian selection and better
prediction retention than weight-MSE on both corpora. The mean-escape score
slightly improves on the logarithmic score; the covariance-aware Hessian
has slightly lower Shakespeare NLL increase. Seed-level inference averages
budgets and widths within each confirming seed; 20,000 bootstrap resamples
use those six seeds. The HAWQ sign-test $p$-value is 0.3125 on each corpus,
with no claimed statistical equivalence margin.

Timing uses two CPU threads, identical 96-context sets, and three alternating
repetitions on pilot seed zero. BFMS takes 3.40--6.15 seconds versus
17.58--34.13 for 128 Hessian products; ratios against 32 products are
1.15 and 1.55. Dense width-16 Hessian checks have symmetry residual at most
$1.06\times10^{-15}$ and finite-difference curvature error at most
$4.43\times10^{-7}$. Independent replay checks 1,380 schedule arrays and one
complete candidate per reference. The study supports preprocessing;
random perturbation retention alone does not certify deterministic rounding.
Joint W/A screening is specified as an extension, with its population
certification and large-model cost still to be evaluated.

\begin{table}[H]
\caption{Full-precision screening followed by matched-budget quantization. Each row averages twelve confirming references (six seeds at each width) and three mean block-weight precisions, 2.5, 3, and 3.5 bits. Disagreement is relative to the FP reference; error increase is in percentage points and NLL increase in nats. Alice uses unchanged Shakespeare-selected schedules. Natural-context BFMS with uniform step noise is the fixed extension selector; alternatives are retained controls. The random row is the exact uniform mean over each menu.}
\label{tab:fpoutcomes}
\centering\footnotesize
\begin{tabular}{@{}lrrrrrr@{}}
\toprule
& \multicolumn{3}{c}{Shakespeare} & \multicolumn{3}{c}{Alice}\\\cmidrule(lr){2-4}\cmidrule(lr){5-7}Selector & Disag. (\%) & $\Delta$err. (pp) & $\Delta$NLL & Disag. (\%) & $\Delta$err. (pp) & $\Delta$NLL\\
\midrule
BFMS: natural contexts & 39.61 & 8.27 & 0.3936 & 39.90 & 7.35 & 0.3415\\
\bccrowsep
BFMS: unigram-random & 39.50 & 8.24 & 0.3912 & 39.74 & 7.28 & 0.3392\\
\bccrowsep
BFMS: uniform-random & 40.06 & 8.34 & 0.3901 & 40.40 & 7.37 & 0.3384\\
\bccrowsep
BFMS: Gaussian noise & 39.97 & 8.34 & 0.3906 & 40.38 & 7.31 & 0.3387\\
\bccrowsep
Mean escape: natural contexts & 39.26 & 8.23 & 0.3900 & 39.51 & 7.28 & 0.3393\\
\bccrowsep
Radius-calibrated mass & 39.67 & 8.33 & 0.3887 & 39.99 & 7.39 & 0.3401\\
\bccrowsep
HAWQ-style trace & 40.40 & 8.49 & 0.3974 & 40.38 & 7.55 & 0.3441\\
\bccrowsep
Covariance-aware Hessian & 40.12 & 8.38 & 0.3907 & 40.38 & 7.36 & 0.3402\\
\bccrowsep
Weight-MSE & 45.56 & 11.19 & 0.6340 & 45.72 & 9.68 & 0.5307\\
\bccrowsep
Directional Taylor & 42.38 & 8.81 & 0.4059 & 42.63 & 7.87 & 0.3514\\
\bccrowsep
Uniform allocation mean & 54.24 & 13.83 & 0.7658 & 54.43 & 12.46 & 0.6528\\
\bottomrule
\end{tabular}
\end{table}

\subsection{Allocation, calibration, and independent selection}
\label{core:allocation}
For reference FP32 GPT-2 and document $X$, the certified target is
\[
g_e^{\rm dam}(X)=\frac1{10|T_X|}\sum_{t\in T_X}
[\min\{-\ln p_e(y_t\mid X_{<t}),10\}
-\min\{-\ln p_0(y_t\mid X_{<t}),10\}]_+.
\]
The positive part precedes averaging; empty-target documents have zero loss.
Teacher-JS divides mean tokenwise JS by $\ln2$. Documents have equal weight;
ordinary token-averaged NLL is a separate endpoint. These studies use
candidate-prior certificates, with no additional merging inferred from
similar scores.

Sequential selection reevaluates assembled three-block FFN allocations.
The common original prior is uniform over $2{12\choose3}=440$ family/allocation
choices. Six finalists are frozen before independent draws from the 15,575
nonempty training entries after the first 8,192, truncated to 64 tokens.
Certification draws 4,096 entries and evaluation 1,024 entries independently
with replacement, retaining multiplicities. The original-440 and independent
six-finalist routes have radii 0.00238663 and 0.00133805 and share total failure
0.05. This finite-corpus sampling supports the reported confidence ledger.
\begin{table}[H]
\caption{Fresh certification of six frozen GPT-2 deployments on 4,096 iid draws from the declared finite WikiText population, followed by 1,024 independent evaluation draws. $U_{440}$ uses the full allocation prior; $U_6$ uses the independently frozen finalists. Each prior receives $\delta=0.025$, giving joint 95\% coverage. All bounds target normalized positive clipped-NLL damage; $\Delta\mathrm{NLL}$ is token-averaged evaluation NLL increase. Costs match within each weight family.}
\label{tab:practitionerdeployment}
\label{tab:recentobjectiveablation}
\centering\small
\begin{tabular}{llcrrrr}
\toprule
Weights & Selection rule & Blocks & $U_{440}$ & $U_6$ & $d_{\rm eval}$ & $\Delta\mathrm{NLL}$\\
\midrule
W1 & Positive damage & 3,8,10 & \textbf{0.06479} & \textbf{0.06042} & \textbf{0.04948} & \textbf{0.38234}\\
\bccrowsep
W1 & Teacher-JS & 7,9,10 & 0.06867 & 0.06419 & 0.05288 & 0.39389\\
\bccrowsep
W1 & HAWQ-V2 score & 3,8,9 & 0.06706 & 0.06262 & 0.05151 & 0.40241\\
\bccrowsep
Ternary & Positive damage & 1,6,9 & \textbf{0.04973} & \textbf{0.04586} & \textbf{0.03604} & \textbf{0.25209}\\
\bccrowsep
Ternary & Teacher-JS & 3,6,9 & 0.05041 & 0.04652 & 0.03659 & 0.25623\\
\bccrowsep
Ternary & HAWQ-V2 score & 3,8,9 & 0.05094 & 0.04703 & 0.03742 & 0.25424\\
\bottomrule
\end{tabular}
\end{table}

Positive-damage selection improves the certificate, held-out damage, and
NLL over the tested HAWQ score in both families. The earlier isolated-layer
selection underperformed HAWQ on NLL, motivating complete-model reevaluation.
Teacher-JS improves its fidelity target but has a different ternary NLL
ordering.

For independent scale calibration, each FFN scale becomes
$c\operatorname{mean}|W|$ with
$c\in\{1/2,1/\sqrt2,1,\sqrt2,2\}$; the global multiplier is shared across
selected blocks. Public entries 6,144--6,399 select $c=1/2$ before entries
6,912--7,167 are used for certification/selection. The chosen deployment is
frozen before evaluating entries 7,168--7,679. Fixed-block comparisons use a
two-family prior; sequential comparisons share the 440-choice prior.
Fixed-block certificates improve for both families. Joint W1 selection
reduces held-out damage by 19.2\% and NLL increase by 37.4\% versus
unit-scale/checkpoint-MSE selection. The ternary checkpoint-MSE selector is
better than the calibrated selector on both damage and NLL. The full ledger
and paired intervals appear in \cref{tab:scalecalibrationfull}. A same-sample
scale control pays a five-way selector charge that outweighs its empirical
gain, illustrating the value of independent calibration.

\begin{table}[H]
\caption{Full independent-calibration ledger.  Stage C holds blocks fixed and
stage D reruns three-step interaction-aware selection.  $c$ is the global
scale multiplier, $U_d$ is the simultaneous 95\% positive-damage certificate,
$d_{\rm test}$ is positive damage on the untouched 512-document holdout, and
$\Delta\mathrm{NLL}$ is held-out token-NLL increase.  MSE chooses scale only
from checkpoint weight reconstruction; HAWQ replays its frozen unit-scale
allocation.  Lower is better within a stage and weight family.}
\label{tab:scalecalibrationfull}
\centering
\scriptsize
\setlength{\tabcolsep}{3pt}
\begin{tabular}{cllcrrr}
\toprule
Stage & Weights & Rule & $c$ / blocks & $U_d$ & $d_{\rm test}$ & $\Delta\mathrm{NLL}$\\
\midrule
C & W1 & Public behavioral calibration & $0.5$ / 3,8,10 & \textbf{0.08752} & \textbf{0.04260} & \textbf{0.31213}\\
\bccrowsep
C & W1 & Unit = checkpoint MSE & $1$ / 3,8,10 & 0.09219 & 0.04807 & 0.37546\\
\bccrowsep
C & T & Public behavioral calibration & $0.5$ / 1,6,9 & \textbf{0.07484} & \textbf{0.03365} & \textbf{0.21646}\\
\bccrowsep
C & T & Unit & $1$ / 1,6,9 & 0.07735 & 0.03493 & 0.23384\\
\bccrowsep
C & T & Checkpoint MSE & $\sqrt2$ / 1,6,9 & 0.07852 & 0.03582 & 0.25464\\
\midrule
D & W1 & Calibrated sequential BCC & $0.5$ / 1,7,9 & \textbf{0.11124} & \textbf{0.03696} & \textbf{0.22617}\\
\bccrowsep
D & W1 & Unit = MSE sequential BCC & $1$ / 6,8,10 & 0.12438 & 0.04576 & 0.36152\\
\bccrowsep
D & W1 & HAWQ replay & $1$ / 3,8,9 & 0.12911 & 0.05015 & 0.38679\\
\bccrowsep
D & T & Calibrated sequential BCC & $0.5$ / 1,5,7 & 0.10785 & 0.03532 & 0.23468\\
\bccrowsep
D & T & Unit sequential BCC & $1$ / 3,9,11 & 0.10673 & 0.03435 & 0.26586\\
\bccrowsep
D & T & Checkpoint-MSE sequential BCC & $\sqrt2$ / 7,8,10 & \textbf{0.10064} & \textbf{0.03100} & \textbf{0.17574}\\
\bccrowsep
D & T & HAWQ replay & $1$ / 3,8,9 & 0.10948 & 0.03622 & 0.23731\\
\bottomrule
\end{tabular}
\end{table}

\subsection{Fixed routing and exact full-model mass}
\label{core:routed}
Four top-one routed linear experts have dimensions 2, 3, 4, and 5, with 24
fully enumerated states per expert and routing probabilities $(8,4,2,1)/15$.
An independent integer-weight teacher supplies binary labels. Each of 100
frozen seeds uses 128 construction draws. The target is the unscaled 0--1
loss of the selected expert, so disjoint routing meets
Proposition~\ref{prop:routedfactorization}. The independent expert prior
mixes binary and ternary formats equally and is uniform within each alphabet.
For local behavior $g_u$ its mass is
$p_u(g_u)=M_{u,\mathrm{bin}}/(2\,2^{d_u})+
M_{u,\mathrm{tern}}/(2\,3^{d_u})$; global mass is $\prod_up_u(g_u)$.

The executable code charges a four-bit format header and
$\lceil d_u\log_2q_u\rceil$ bits per expert. Dynamic programming optimizes
complexity at each encoded-bit and empirical-error state, then checks the KL
bound. Eight fixed budget ceilings and thirteen risk tolerances are shared
between direct and quotient coding. The per-seed failure allocation gives
joint 95\% coverage. Table~\ref{tab:mainroutedgrid} reports the entire tolerance grid: full mass
admits at least as many models everywhere, with strict gains at nine
thresholds. At 0.30 it admits 79 versus 75 seeds; seven shared feasible cases
save 2--3 encoded bits. All admitted models satisfy their tolerance on the
exhaustively known population. Rational mass checks and exhaustive replay
on three seeds verify the dynamic program.

\begin{table}[t]
\caption{Exact routed composition: certified feasible seeds out of 100
at every declared risk tolerance. Full mass uses the same candidates and
prior as raw coding; per-seed confidence gives joint 95\% coverage.
\Cref{core:routed} specifies the encoded costs.}
\label{tab:mainroutedgrid}
\centering\footnotesize\setlength{\tabcolsep}{2.5pt}
\begin{tabular}{lrrrrrrrrrrrrr}
\toprule
Tolerance & .100 & .125 & .150 & .175 & .200 & .225 & .250 & .275 & .300 & .325 & .350 & .375 & .400\\
\midrule
Raw & 0 & 0 & 2 & 16 & 26 & 43 & 52 & 64 & 75 & 85 & 92 & 97 & 100\\
\bccrowsep
Full mass & 0 & 0 & 4 & 17 & 30 & 44 & 53 & 65 & 79 & 87 & 94 & 97 & 100\\
\bottomrule
\end{tabular}
\end{table}

\paragraph{Exact composition in routed models.}
Per-layer audits cannot simply be multiplied: editing one layer changes the next layer's inputs, so full-model mass is generally out of reach. Routing is the exception. Four fixed top-one experts factorize full-model mass
(\cref{prop:routedfactorization}). Across 100 seeds, full mass admits
at least as many deployments at all 13 tolerances and more at nine
(\cref{tab:mainroutedgrid}). At 0.30 it admits 79 versus 75 seeds;
seven shared cases use 2--3 fewer bits. The common prior ensures feasibility
inclusion; routing permits expertwise computation
(\cref{eq:routedfeasibility,core:routed}).

\subsection{Head cells, pruning, and sign-bit assurance}
\label{core:head}
GPT-2's original tied head is replaced by a separate quantized head so that
head edits leave embeddings and upstream features fixed. Its
$768\cdot50{,}257=38{,}597{,}376$ symbols are binary or ternary with a fixed
global scale. Final hidden features use A4 or A8 max-absolute rounding.
The complete domain comprises all 23,735 eligible nonempty WikiText-2
training rows with at least two tokens, truncated to 32 tokens. Each row
supplies one final next-token prediction; every row enters certification.

The nominal target is next-token 0--1 error. Edit assurance uses
$g_e^{\rm ret}(X)=\mathbf1\{\hat y_e(X)\ne\hat y_{e_0}(X)\}$ relative to
the fixed quantized head. Both average uniformly over this domain.
Dot products use integer codes and a fixed tie rule; their absolute sums
are bounded by $768\cdot127<2^{24}$. Sensitivity sums round outwards and
independent rational checks verify row budgets. Coordinates are ordered by
increasing maximum activation magnitude, with index tie-breaking; each row
frees the largest prefix satisfying the half-margin condition in
Appendix~\ref{app:coordinateproof}. Zero-sensitivity coordinates remain free under ties.
\begin{table}[H]
\centering\small
\caption{A8 population-certified coordinate cells. Error is exact 0--1 loss on all 23,735 declared rows. $D_*$ counts free head symbols and $G=D_*\log_2q$ is the saving under the full uniform head prior. Changes counts prediction changes after canonicalization; no transfer probes are used.}
\label{tab:newheadcells}
\begin{tabular}{rlrrr}
\toprule
Error & Format & $D_*$ & $G$ (bits) & Changes \\
\midrule
0.82772 & W1/A8 & 50,623 & 50,623.0 & 0 \\
\bccrowsep
0.75770 & T/A8 & 47,676 & 75,564.7 & 0 \\
\bottomrule
\end{tabular}
\end{table}

At A8, the binary bank has 50,623 addresses and the ternary bank contributes
75,564.672 bits of credit. No A8 feature coordinate is identically zero.
Every permitted sign choice preserves predictions; zeroing remains within
the same displacement budgets. Pruning removes 50,623 binary or 45,626
nonzero ternary entries. Ten uniform masks and maximum/RMS-activation masks
match edit count and Frobenius displacement within each alphabet.
The certified masks change no predictions; the RMS masks change 57/91 and
the maximum masks 2,218/72 (Table~\ref{tab:mainedits}). Every random mask changes more than
23,000 predictions. NLL can change and favors RMS pruning in this comparison.

The sign-fault campaign freezes equal-size banks and tests 16,384 patterns,
each flipping 5,063 signs. The certified bank has zero failures, and each
activation-only bank fails every trial. The certificate covers all permitted
subsets, including those untested by injection. Certification cost differs
from computing saliency and is not matched here.

Applying \cref{prop:coordinatecell} yields a certified bank of 50,623 signs
in the GPT-2 W1/A8 output head whose every subset can be flipped
simultaneously without changing any of the 23,735 declared predictions.
All $2^{50{,}623}$ combinations remain within the same prediction-preserving
cell and behavioral fiber, retaining the same 0--1 loss and, with the prior
and certification data fixed, the same behavioral certificate. We evaluate
16,384 patterns, each flipping 5,063 signs; a trial fails if any declared
prediction changes. The certified bank records no failures, while
RMS-activation and maximum-activation banks matched in size, edit count,
and displacement each fail every trial. \Cref{tab:mainedits} also reports
prediction-preserving pruning under the same margin budgets
(\cref{core:head}).

\begin{table}[!t]
\caption{Pruning and fault assurance on 23,735 GPT-2 contexts, with matched
edit counts and displacement. Pruning removes 50,623 W1 or 45,626 ternary
weights; fault trials flip 5,063 signs. Any changed prediction fails
(\cref{core:head,core:head}).}
\label{tab:mainedits}
\centering\footnotesize
\setlength{\tabcolsep}{8pt}
\begin{tabular}{@{}lrrr@{}}
\toprule
& \multicolumn{2}{c}{Pruning: prediction changes} & Sign flips: failed patterns\\
\bccrowsep
Selection & W1/A8 & T/A8 & Out of 16,384 trials\\
\midrule
Certified cell/bank & 0 & 0 & 0\\
\bccrowsep
RMS activation & 57 & 91 & 16,384\\
\bccrowsep
Max activation & 2,218 & 72 & 16,384\\
\bottomrule
\end{tabular}
\end{table}

\subsection{Output-head edit certification on OLMoE}
\label{app:olmoe_head}

\paragraph{Implementation and certified family.}
We evaluate \texttt{allenai/OLMoE-1B-7B-0924} at pinned revision
\texttt{6d84c485}. The model executes its learned top-eight routing
over 64 experts in each of 16 layers. Expert and router weights
remain BF16. A separate output head uses sign-magnitude W8 codes,
fixed dyadic row scales, and H16 feature codes.
Head fitting compares round-to-nearest with GPTQ-style damped
Gram reconstruction using construction data only; GPTQ is selected.
The selected bank contains 27,550 nonzero coordinates and is shared
by BASE, K8/V8, and K8/V4. All $2^{27{,}550}$ sign combinations
are covered by the interval check. Reducing selected magnitudes,
including pruning to zero, remains inside the same score intervals.
The reference for each check is that deployment's own unedited
integer head.

\paragraph{Construction, calibration, and probes.}
The finite population is the 286,892 eligible endpoints in the pinned
WikiText-2 test token stream at dataset revision \texttt{b08601e0}.
Context length is 256, and the continuation horizon is four
teacher-forced and four greedy steps. Independent uniform draws
with replacement supply 512 construction, 1,024 calibration,
and 8,192 probe prefixes, using experiment seed 2026092291.
Head fitting uses 256 construction prefixes; candidate-head selection
uses a separate 128 construction prefixes.
Seven predeclared bank divisors, $1,2,4,8,16,32,64$, are evaluated
on calibration data. The largest eligible bank must have zero
calibration union failures in every deployment and mean positive
clipped-loss envelope at most $0.02$ nats.
Divisor 64 selects the 27,550-coordinate family; the selected family
and matched controls are frozen before probing.

\paragraph{Failure event and confidence ledger.}
A prefix fails if some permitted edit changes some required
teacher-forced or greedy prediction. Tokens within a prefix are
not treated as independent samples.
The fixed ledger contains 168 one-sided confidence entries, including
the cross-deployment union event and the separate win and loss
bounds used in each paired comparison. The experiment-level
failure budget is $1/60$, giving
\[
\delta_{\mathrm{entry}}
=\frac{1}{60\cdot168}
=\frac{1}{10{,}080}.
\]
Thus the ledger has simultaneous confidence at least $98.33\%$.
One-sided Clopper--Pearson upper bounds use the recorded prefix
failure counts: BASE has $5/8192$ failures and upper risk $0.239\%$;
the two primary rows appear in \cref{tab:olmoe_head}.
Across all three deployments, 15 prefixes fail in at least one
setting, giving a simultaneous union-risk upper bound of $0.430\%$.

\paragraph{Quality gates.}
Losses are clipped at 20 nats per token and averaged within each
prefix. Signed head-quality and deployment-quality confidence
bounds use the frozen mixture-betting procedure, with additional
raw empirical mean checks. The head tolerance is $0.1$ nat and
the deployment tolerance is $0.25$ nat. The measured signed head
change is $0.000371$ nats per token, with upper bound $0.026965$.
The other family criteria require at least 1,024 free signs,
prefix failure-risk upper bound at most $1\%$, added clipped-loss
upper bound at most $0.1$ nat, population clipped-loss upper bound
at most $0.95\ln V$, and cache saving at least $40\%$, where
$V=50{,}280$. Both primary settings meet all seven criteria.
BASE is a reference row and has no cache-saving requirement
in the primary verdict.

\begin{table}[t]
\centering\footnotesize
\setlength{\tabcolsep}{3.5pt}
\caption{OLMoE quality diagnostics, in nats per token.
Head upper compares the frozen integer head with the raw head
under BASE. Deployment changes compare each unedited deployment
with BASE using the same frozen head. The measured edit envelope
averages tokenwise worst-edit positive clipped-loss increases;
its population upper bound applies to every head in the family.}
\label{tab:olmoe_quality}
\begin{tabular}{@{}lrrrrr@{}}
\toprule
Setting & \shortstack{Head\\upper} & \shortstack{Deployment\\signed mean}
& \shortstack{Deployment\\signed upper} & \shortstack{Measured\\edit envelope}
& \shortstack{Edit-loss\\upper}\\
\midrule
BASE & 0.026965 & +0.000000 & 0.000000 & 0.0001277 & 0.023285 \\
K8/V8 & 0.026965 & -0.000344 & 0.026288 & 0.0001276 & 0.023284 \\
K8/V4 & 0.026965 & +0.006865 & 0.033662 & 0.0001280 & 0.023287 \\
\bottomrule
\end{tabular}
\end{table}

\paragraph{Matched controls.}
The magnitude and activation-cost controls each contain 27,550
nonzero editable entries. Activation cost ranks entries by
$|c_{ij}|\max_X|z_j(X)|$, computed globally on construction data only.
The paired estimand is control failure probability minus selected
family failure probability. Its lower bound subtracts the
Clopper--Pearson upper bound on control-only successes from the
lower bound on selected-only successes; both events receive their
own entries in the ledger. The lower advantages are positive for
both deterministic controls in all three settings
(\cref{tab:olmoe_controls}). These comparisons concern the event
that some allowed edit changes a prediction. Full-pruning
comparisons and random-control diagnostics are recorded separately.

\begin{table}[t]
\centering\footnotesize
\setlength{\tabcolsep}{4pt}
\caption{OLMoE matched robustness controls. Failure counts are out
of 8,192 prefixes. Lower advantages are in percentage points (pp)
and use the complete simultaneous confidence ledger.}
\label{tab:olmoe_controls}
\begin{tabular}{@{}lrrrrr@{}}
\toprule
& \multicolumn{3}{c}{Failed prefixes}
& \multicolumn{2}{c}{Lower advantage (pp)}\\
\cmidrule(lr){2-4}\cmidrule(l){5-6}
Setting & Selected & Magnitude & Activation cost & Magnitude & Activation cost\\
\midrule
BASE & 5 & 133 & 107 & 0.915 & 0.675 \\
K8/V8 & 4 & 137 & 106 & 0.978 & 0.689 \\
K8/V4 & 6 & 125 & 103 & 0.823 & 0.625 \\
\bottomrule
\end{tabular}
\end{table}

\paragraph{Routing, cache quality, and scope.}
The recorded execution visits all 64 experts in every layer in
both teacher-forced and greedy modes. Teacher-forced prediction
agreement with BASE is $97.7600\%$ for K8/V8 and $94.6350\%$
for K8/V4; selected-expert-set agreement is $89.9611\%$ and
$73.3484\%$, respectively. These are empirical diagnostics.
The edit certificate holds the backbone and router parameters
fixed and covers the declared finite population and continuation
horizon. Expertwise factorization is studied separately in
\cref{core:routed}. The review archive provides tables, frozen
settings, and diagnostics; complete feature arrays, integer heads,
and cell files in the full evidence archive support certificate replay.
\subsection{Output-head edit certification on SmolLM2}
\label{app:smollm2_head}

\paragraph{Implementation and certified family.}
We evaluate \texttt{HuggingFaceTB/SmolLM2-1.7B} at pinned
revision \texttt{effd688a}. Backbone weights and activations
remain BF16. A separate output head uses sign-magnitude W8
codes, fixed dyadic row scales, and H16 feature codes.
The input embeddings remain fixed when the output head is edited.
Construction-only selection chooses the GPTQ-style head over
round-to-nearest rounding.

The selected family contains 21,874 nonzero editable coordinates,
shared by BASE, K8/V8, and K8/V4. Integer interval checks cover
all $2^{21{,}874}$ simultaneous sign patterns without enumerating
them. Pruning and permitted integer magnitude reductions remain
inside the same score intervals. Each deployment is compared
with its own unedited integer-head predictions.

\paragraph{Construction, calibration, and probes.}
The population consists of eligible endpoints in the tokenized
WikiText-103 raw test stream at dataset revision
\texttt{b08601e0}. Context length is 1,024, with four
teacher-forced and four greedy continuation steps.
Independent uniform draws with replacement supply 512
construction, 1,024 calibration, and 8,192 probe prefixes,
using seed 20260923101.
Head fitting uses 256 construction prefixes, and candidate-head
selection uses a separate 128 construction prefixes.

The fixed calibration grid uses divisors
$1,2,4,8,16,32,64$. An eligible candidate must have zero
calibration failures in every deployment and mean positive
clipped-loss envelope at most $0.02$ nats per token.
The smallest candidate has one BASE calibration failure and
zero failures in both primary deployments. The predeclared
smallest-candidate fallback therefore selects divisor 64,
giving 21,874 coordinates. This family and all matched controls
are frozen before probe inference.

\paragraph{Failure event and simultaneous confidence.}
A prefix fails if some permitted edit changes any required
teacher-forced or greedy prediction. Tokens within a prefix
are not counted as independent samples.
The fixed ledger contains 168 one-sided confidence entries,
including the union across deployments and the bounds used
in paired control comparisons. The experiment-level error
budget is $1/120$, giving
$\delta_{\mathrm{entry}}=1/(120\cdot168)=1/20{,}160$.
All ledger bounds therefore hold jointly with confidence
$1-1/120$, approximately $99.17\%$.

\Cref{tab:smollm2_head} reports the selected-family certificate.
Across all three deployments, 11 of 8,192 prefixes fail in
at least one setting, giving a union-risk upper bound of
$0.371\%$. Combining this experiment with the OLMoE experiment
in Appendix~\ref{app:olmoe_head} gives joint confidence at
least $97.5\%$ by a union bound.

\begin{table}[t]
\centering
\footnotesize
\setlength{\tabcolsep}{4pt}
\caption{SmolLM2 output-head edit certification at context
length 1,024. The same 21,874-coordinate family is used in
every row. Risk concerns any permitted edit changing any
required prediction in a fresh prefix. Added-loss bounds
apply to every head in the family and are in nats per token.
BASE is a reference row; K8/V8 and K8/V4 are primary.
Displayed upper bounds are rounded upward.}
\label{tab:smollm2_head}
\begin{tabular}{@{}lrrrr@{}}
\toprule
Cache
& \shortstack{Failed\\prefixes}
& \shortstack{Risk upper\\(\%)}
& \shortstack{Added clipped\\NLL upper}
& \shortstack{Cache saved\\(\%)}\\
\midrule
BASE K16/V16 & 4/8192 & 0.228 & 0.024867 & 0.000 \\
K8/V8        & 0/8192 & 0.121 & 0.024867 & 46.875 \\
K8/V4        & 7/8192 & 0.292 & 0.024873 & 59.375 \\
\bottomrule
\end{tabular}
\end{table}

\paragraph{Quality criteria.}
We use the seven criteria described in
Appendix~\ref{app:olmoe_head}, with vocabulary size
$V=49{,}152$. Losses are clipped at 20 nats per token and
averaged within each prefix. Signed quality bounds use
the fixed mixture-betting procedure, with additional raw
empirical mean checks.

The head's measured signed NLL change is $+0.000455$ nats
per token, with upper bound below $0.028823$.
For K8/V8 and K8/V4, the measured deployment changes are
$-0.000341$ and $+0.005792$ nats per token, with signed
upper bounds below $0.028034$ and $0.034260$, respectively.
The population added clipped-NLL bound is below
$0.02488$ nats per token for every head in the selected
family in either primary deployment.
Both primary deployments meet all seven criteria.

\paragraph{Matched robustness controls.}
Every control contains exactly 21,874 nonzero editable
coordinates. The magnitude control ranks absolute aligned
head coefficients; the activation-cost control ranks
$|c_{ij}|\max_X|z_j(X)|$, using construction features only.
The random control samples uniformly from nonzero coordinates.
All control masks are fixed before probing.

\Cref{tab:smollm2_controls} reports the failure counts.
The simultaneous lower bounds on all-pattern failure-risk
reduction against the activation-cost control are
$0.309$ and $0.318$ percentage points for K8/V8 and K8/V4.
Against the magnitude control, the corresponding lower
bounds exceed $4.320$ and $3.991$ percentage points.
These comparisons support a robustness advantage for
the margin-selected family at matched editable size.

\begin{table}[t]
\centering
\footnotesize
\setlength{\tabcolsep}{4pt}
\caption{Matched SmolLM2 controls: failed prefixes out of
8,192. All-pattern failure means that some permitted
combination changes a required prediction. Full pruning
sets every selected coordinate to zero. The simultaneous
advantage statements in the text concern the all-pattern
event.}
\label{tab:smollm2_controls}
\begin{tabular}{@{}lrrrr@{}}
\toprule
& \multicolumn{2}{c}{All patterns}
& \multicolumn{2}{c}{Full pruning}\\
\cmidrule(lr){2-3}\cmidrule(l){4-5}
Bank & K8/V8 & K8/V4 & K8/V8 & K8/V4\\
\midrule
Margin-selected & 0    & 7    & 0    & 1    \\
Magnitude       & 439  & 424  & 43   & 48   \\
Activation cost & 61   & 69   & 15   & 16   \\
Random          & 4971 & 4990 & 1015 & 1023 \\
\bottomrule
\end{tabular}
\end{table}

\paragraph{Certified freedom and memory.}
The certificate covers simultaneous sign flips, pruning,
and permitted magnitude reductions of the selected
output-head coordinates under the declared population
and continuation horizon. Cache quantization supplies
the measured memory savings, including FP32 scale storage;
the edit certificate bounds the additional prediction
risk from changing the output head.

\subsection{Certified edit capacity on SmolLM2}
\label{app:smollm_capacity}

\paragraph{Question and protocol.}
We test how many output-head coordinates each selection method can
modify jointly while meeting a common 1\% population
continuation-disagreement limit.
The model is SmolLM2-1.7B at revision \texttt{effd688a}.
The backbone retains BF16 weights and activations; the untied
output head uses W8 sign-magnitude codes, dyadic scales, and
H16 integer features.
Construction-only head fitting selects between RTN and a
GPTQ-style candidate using separate fitting and selection subsets;
the GPTQ-style candidate is selected before probing.
The two cache settings are K8/V8 and K8/V4, with packed codes
and FP32 scales.

We draw 512 construction prefixes and 8,192 independent probe
prefixes using separate random streams, uniformly with replacement
from the fixed tokenized WikiText-103 test population
(dataset revision \texttt{b08601e0}).
Each prefix contains 1,024 context tokens.
The primary event is that some permitted head modification changes
at least one token of the setting's own unedited 32-token greedy
continuation.
Exact integer interval tests cover every simultaneous sign assignment
and allowed integer magnitude reduction, including pruning.
Four teacher-forced continuation positions per prefix provide
the loss measurements. At the same model and 1,024-token context as Appendix~B.10, this
separate experiment tests a 32-token greedy continuation, whereas
Appendix~B.10 tests four teacher-forced and four greedy steps;
the experiments use separate confidence ledgers.

\paragraph{Frozen selection and confidence.}
All methods use the same head and construction features.
BCC ranks coordinates using activation costs and prediction-margin
budgets; the controls rank by magnitude or activation cost.
At every tested size, each method selects the same number of
nonzero coordinates.
The complete size grid is
\[
\{512,\ 1{,}024,\ 2{,}048,\ 4{,}096,\ 8{,}192,\
16{,}384,\ 32{,}768,\ 65{,}536,\ 131{,}072\}.
\]
Head fitting, rankings, and the grid are frozen before probing.
A fixed 816-entry ledger allocates error probability $0.05/816$
to each one-sided bound across both planned checkpoints,
all sizes, controls, quality checks, and across-cache events.
This gives simultaneous confidence of at least 95\% and covers
selection of the largest passing grid point.
Prefixes are the independent sampling units.

A size passes when its continuation-disagreement upper bound
is at most 1\%, its worst-edit added clipped-NLL upper bound
is at most 0.1 nat per token, and the head and deployment
quality criteria pass.
The head and deployment tolerances are 0.1 and 0.25 nat,
respectively, applied to signed per-prefix clipped-loss bounds
and the corresponding raw empirical means.
NLL is clipped at 20 nats.
Shared certification additionally bounds the probability of
failure in either cache setting by 1\%.

\paragraph{Certified capacity.}
\Cref{tab:smollm_capacity} reports the largest passing size
on the declared grid.
BCC supports $32\times$ as many coordinates as activation-cost
selection both separately and jointly across cache settings.
For 65,536 BCC coordinates, K8/V8 and K8/V4 have respectively
43 and 45 failed prefixes out of 8,192.
The shared 32,768-coordinate family has 26 prefixes failing
in at least one setting.

\begin{table}[t]
\centering
\footnotesize
\setlength{\tabcolsep}{4pt}
\caption{Largest certified coordinate count on the frozen grid
at a 1\% continuation-disagreement limit.
The shared row bounds failure in either cache setting.
For magnitude selection, $<512$ indicates that even the smallest
tested family failed the certification criteria; smaller sizes
were not evaluated; the grid begins at 512.
Risk bounds use the simultaneous 95\% confidence ledger.}
\label{tab:smollm_capacity}
\begin{tabular}{@{}lrrrr@{}}
\toprule
Requirement
& BCC
& \shortstack{Activation\\cost}
& Magnitude
& \shortstack{BCC risk\\upper bound}\\
\midrule
K8/V8
& \textbf{65,536} & 2,048 & $<512$ & 0.904\%\\
K8/V4
& \textbf{65,536} & 2,048 & $<512$ & 0.935\%\\
Shared across both
& \textbf{32,768} & 1,024 & $<512$ & 0.630\%\\
\bottomrule
\end{tabular}
\end{table}

\paragraph{Comparison at equal family size.}
At 32,768 coordinates, BCC also has lower disagreement than
both matched controls (\cref{tab:smollm_capacity_controls}).
Paired simultaneous lower bounds establish an across-cache
risk reduction of 11.69 percentage points relative to
activation-cost selection and 39.00 points relative to
magnitude selection.

\begin{table}[t]
\centering
\footnotesize
\setlength{\tabcolsep}{4pt}
\caption{Matched-size comparison with 32,768 editable coordinates
per method. Counts are failed prefixes out of 8,192.
The last two columns concern failure in either cache setting.}
\label{tab:smollm_capacity_controls}
\begin{tabular}{@{}lrrrr@{}}
\toprule
Method
& K8/V8
& K8/V4
& \shortstack{Either\\setting}
& \shortstack{Union risk\\upper bound}\\
\midrule
BCC
& \textbf{16} & \textbf{11} & \textbf{26} & \textbf{0.630\%}\\
Activation cost
& 582 & 612 & 1,120 & 15.181\%\\
Magnitude
& 2,258 & 2,356 & 3,412 & 43.758\%\\
\bottomrule
\end{tabular}
\end{table}

\paragraph{Quality and fixed edits.}
All reported head and deployment quality gates pass.
The measured head loss increase is 0.000305 nat per token.
The K8/V8 and K8/V4 deployment increases relative to the
integer-head BF16-cache baseline are 0.000209 and 0.006999 nat.
For the 65,536-coordinate BCC families, the worst-edit added
clipped-NLL upper bounds are 0.028151 and 0.028155 nat per token.
At the shared size of 32,768 coordinates, pruning every selected
weight changes 7 and 5 of 8,192 prefixes under K8/V8 and K8/V4;
flipping every selected sign changes 14 and 11, respectively.

\paragraph{Connection to behavioral complexity.}
Conditional on the fixed backbone, head magnitudes, scales,
cache rules, and arithmetic, a uniform prior over the
$N=98{,}228{,}997$ nonzero head signs assigns a
$D$-coordinate sign family mass $2^{D-N}$.
Relative to one fixed sign assignment, this gives $D$ bits of
conditional complexity credit.
The shared margin-budget family therefore earns 32,768 bits,
or 0.033\% of the $N$-bit sign code, compared with 1,024 bits
for the largest certified activation-cost family on the grid.
Independent probes bound the family's population disagreement;
for prediction-error loss, this controls the transfer discrepancy
in \cref{thm:publicquotient}.
The interval certificate additionally covers pruning and magnitude
reduction, while the stated prior mass counts only the sign assignments.

\subsection{Mass-guided perturbation audits}
\label{core:audit}
Two fixed A8 heads are audited under uniform one-symbol replacement and
uniform zeroing of one nonzero symbol. The targets are prediction disagreement
and positive 0--1 task damage. Their mean averages uniformly over the same
23,735 contexts and under the declared edit law. A population-certified
zero-target subset has known mass $s$; the conditional estimator samples its
complement and multiplies the mean by $1-s$ (\cref{prop:residualaudit}).
For additional error, both estimators additionally condition on nominally
correct contexts and multiply by their known fraction.

A complete integer census fixes ground-truth risks but is hidden from both
estimators. All eight conditions, query budgets 256/1,024/4,096/16,384,
128 initial seeds, and 512 confirmation seeds are frozen. Each audit uses
iid context/edit draws and a one-sided 95\% binomial upper limit weighted by
its stratum mass. Confidence is per frozen audit. At 4,096 queries, MSE is
5.36--28.69 times smaller; the exact variance ratios are 5.66--30.64,
corresponding to 82.3--96.7\% fewer queries at matched expected squared error.
\Cref{tab:residualaudit} retains all conditions and budgets. Construction
and sampler setup costs enter \cref{cor:auditplanning}; elapsed-time savings
are unmeasured. These repeated audits assess sampling efficiency on the
fixed heads and population.

\begin{table}[H]
\centering\small
\setlength{\tabcolsep}{3.3pt}
\caption{Certified mass concentrates a perturbation audit. $s$ is the certified zero-target mass; MSE ratios use 512 fresh audit seeds and 4,096 queries per route. The oracle ratio follows from the exact finite-population census. Final columns show median direct/conditional 95\% upper-bound ratios at every prescribed budget. Larger ratios favor conditioning. Both damage routes exploit known nominal correctness.}
\label{tab:residualaudit}
\begin{tabular}{lllrrrrrrr}
\toprule
Format & Edit law & Target & $s$ & \multicolumn{2}{c}{MSE ratio} & \multicolumn{4}{c}{Upper-bound ratio: queries}\\
\bccrowsep
 & & & & Observed & Oracle & 256 & 1,024 & 4,096 & 16,384\\
\midrule
W1/A8 & Substitute & Prediction & 0.820 & 5.36 & 5.66 & 2.23 & 1.56 & 1.27 & 1.12\\
\bccrowsep
W1/A8 & Substitute & $+$ error & 0.942 & 20.10 & 18.33 & 3.24 & 1.93 & 1.38 & 1.18\\
\bccrowsep
W1/A8 & Zero & Prediction & 0.917 & 12.97 & 12.55 & 3.18 & 1.93 & 1.39 & 1.17\\
\bccrowsep
W1/A8 & Zero & $+$ error & 0.965 & 28.69 & 30.64 & 4.58 & 2.42 & 1.61 & 1.25\\
\bccrowsep
T/A8 & Substitute & Prediction & 0.858 & 6.61 & 7.20 & 2.32 & 1.64 & 1.28 & 1.13\\
\bccrowsep
T/A8 & Substitute & $+$ error & 0.949 & 22.80 & 20.91 & 3.25 & 1.94 & 1.41 & 1.18\\
\bccrowsep
T/A8 & Zero & Prediction & 0.888 & 10.45 & 9.34 & 2.51 & 1.64 & 1.28 & 1.13\\
\bccrowsep
T/A8 & Zero & $+$ error & 0.954 & 25.29 & 23.73 & 3.21 & 1.95 & 1.39 & 1.18\\
\bottomrule
\end{tabular}
\end{table}

\subsection{KV-cache precision and logit-preserving cells}
\label{core:cache}
The GPT-2-small checkpoint and WikiText-2-raw-v1 are pinned to revisions
\texttt{607a30d7} and \texttt{b08601e0}. All twelve blocks retain FP32 learned
weights and ordinary activations. Tokenized validation and test text is
partitioned into disjoint 1,025-token windows; fixed seeds select 32 windows
from each split without replacement. These 64 windows form the complete
declared population. Both cohorts enter cell construction.
Each trajectory has a 128- or 1,008-token prefill and sixteen teacher-forced
cached steps. The bounded loss averages $\min\{\ell_{e,t},16\}/16$ over
those sixteen next-token NLLs. Ordinary NLL and FP16-reference prediction
agreement are reported separately.

\paragraph{Arithmetic and cell construction.}
Formats are FP16/FP16, K8/V8, K8/V4, K4/V8, and K4/V4. For each token/head's
64 channels, low-bit arithmetic uses
$s_b(x)=\max\{\mathrm{fl}_{32}(\|x\|_\infty/(2^{b-1}-1)),\epsilon_{32}\}$,
nearest-even rounding, symmetric clipping, and one FP32 scale. Four-bit
codes are packed two per byte. Attention reads the rounded cache during
prefill and continuation, including the current token. This write-before-read
rule is the audited convention; retained cache and temporary workspace are
accounted separately.

Only the 1,536 first-block K/V projection bias words vary within a cell.
Captured pre-bias projections $u_{j,i}$ are fixed under those edits. For a
low-bit group, freeze every coordinate attaining its nominal maximum $M_j$
on any covered transition. For each remaining bias, find all finite FP32
words in the interval around the nominal value that keep
$|\mathrm{fl}_{32}(u_{j,i}+b_i)|\le M_j$ and preserve its quantized code on
every transition. FP16 intervals preserve each stored half word, including
signed zero. Monotone searches, endpoint checks, and excluded-neighbor
checks establish each interval $I_i$.

Every simultaneous choice from $\prod_iI_i$ retains a fixed maximizer and
bounds all other magnitudes, so dynamic scales and stored codes remain
unchanged. Queries and residual inputs are fixed; first-block attention is
identical. Induction through all later blocks and cache writes gives exactly
the same logits on every declared trajectory. This establishes a population
fiber subset for both clipped NLL and predictions at each cache setting.
Different nominal cache settings can still have different losses.

\paragraph{Code, memory, and controls.}
The common prior charges 32 bits per variable bias and a three-bit format
tag, plus $B_{\rm rest}$ for all fixed checkpoint fields and metadata.
Thus $G_k=\sum_i\log_2|I_i|$ yields
$K(g_{e_k})\le(B_{\rm rest}+49{,}155-G_k)\ln2$. The tag and checkpoint cost
are identical across the five formats. Retained cache values are generated during inference; their memory cost is
reported separately from the learned checkpoint.
Reported credit measures the constructed subsets, whose search rules differ
between FP16 and low-bit formats.

For $T$ retained positions, cache bytes are
$12T\cdot12[64(b_K+b_V)/8+4\mathbf1\{b_K<16\}+4\mathbf1\{b_V<16\}]$.
Table~3 gives the 1,024-position trade-off. At matched 14.625 MiB,
K8/V4 has lower NLL, higher agreement, and 187.927 more credit bits than
K4/V8. FP16 has more credit but
uses more memory; cache selection therefore considers all three criteria.
Torch/NumPy interval checks and simultaneous endpoint/random edits verify
packed writes on the complete domain. Full-model replays verify bitwise
logit equality on the recorded replay windows. The mathematical transition
argument supplies the full-domain guarantee.

\begin{table}[!t]
\caption{Certified coordinate cells. Credits add across K and V, including mixed formats. Variable-word counts are out of 768 per side. At prefixes 128 and 1,008, the constraints cover 9,216 and 65,536 first-block transitions respectively. The low-bit construction freezes all observed maximizer coordinates to preserve dynamic scales; the FP16 construction preserves half words directly.}
\label{tab:kvcachecells}
\centering\footnotesize
\setlength{\tabcolsep}{5pt}
\begin{tabular}{@{}lrrrrr@{}}
\toprule
Side & Bits & $G_{128}$ & $G_{1008}$ & Variable words, 128 & Variable words, 1,008\\
\midrule
K & 16 & 1950.972 & 605.528 & 601 & 251\\
\bccrowsep
K & 8 & 411.894 & 3.700 & 42 & 1\\
\bccrowsep
K & 4 & 619.535 & 10.584 & 42 & 1\\
\bccrowsep
V & 16 & 1602.233 & 474.378 & 588 & 220\\
\bccrowsep
V & 8 & 984.485 & 394.488 & 90 & 46\\
\bccrowsep
V & 4 & 1354.424 & 589.298 & 90 & 46\\
\bottomrule
\end{tabular}
\end{table}

\subsection{Cache precision on Qwen2.5 and SmolLM2}
\label{app:modern_kv}

\paragraph{Matched-memory cache precision on Qwen2.5 and SmolLM2.}
The preference for higher key precision extends to Qwen2.5-1.5B
and SmolLM2-1.7B. At equal packed cache memory, K8/V4 yields
lower mean continuation NLL and higher prediction agreement with
bfloat16 (BF16) than K4/V8 at all three tested context lengths
(\cref{tab:modern_kv}). Exact paired tests support a greater than
one-half probability of a strict per-prefix NLL improvement in
all six comparisons, with joint 95\% confidence.
The quantizer, sampling protocol, and complete comparison statistics
are given below.

\begin{table}[!htbp]
\centering\footnotesize
\setlength{\tabcolsep}{4pt}
\caption{Equal-memory cache allocation on two modern checkpoints.
Each row uses 512 prefixes and 32 teacher-forced continuation tokens.
NLL is in nats per token; agreement is with the BF16-cache reference.
K8/V4 and K4/V8 use identical packed cache bytes within each row.}
\label{tab:modern_kv}
\setlength{\tabcolsep}{3pt}       
\renewcommand{\arraystretch}{1}  
\begin{tabular}{@{}lrrrrr@{}}
\toprule
& & \multicolumn{2}{c}{NLL} & \multicolumn{2}{c}{Agreement (\%)}\\
\cmidrule(lr){3-4}\cmidrule(l){5-6}
Model & Context & K8/V4 & K4/V8 & K8/V4 & K4/V8\\
\midrule
Qwen2.5-1.5B & 1,024 & 2.1533 & 10.6837 & 91.37 & 0.57 \\
Qwen2.5-1.5B & 2,048 & 2.0986 & 10.5060 & 91.56 & 0.49 \\
Qwen2.5-1.5B & 4,096 & 2.0613 & 10.3682 & 92.25 & 0.73 \\
SmolLM2-1.7B & 1,024 & 2.0427 & 2.1955 & 95.41 & 81.61 \\
SmolLM2-1.7B & 2,048 & 1.9885 & 2.1168 & 95.46 & 82.87 \\
SmolLM2-1.7B & 4,096 & 1.9672 & 2.0644 & 95.40 & 84.83 \\
\bottomrule
\end{tabular}
\end{table}

\paragraph{Models, arithmetic, and matched storage.}
We evaluate Qwen2.5-1.5B and SmolLM2-1.7B from
\texttt{Qwen} and \texttt{HuggingFaceTB}, respectively, at pinned
model revisions \texttt{8faed761} and \texttt{effd688a}.
Weights and ordinary activations remain BF16. The cache formats
are K16/V16, K8/V4, and K4/V8. Keys are quantized after rotary
embedding. Keys and values use the same symmetric absmax quantizer,
with one FP32 scale per token and KV head; four-bit codes are packed
two per byte. There is no residual full-precision window.
K8/V4 and K4/V8 occupy exactly the same persistent cache bytes,
including scales, at every tested length. Relative to BF16,
both allocations save $60.94\%$ on Qwen2.5 and $59.38\%$ on
SmolLM2. Storage is measured at $L+31$ cached tokens; transient
dequantization buffers and peak GPU allocation are recorded separately.

\paragraph{Population and evaluation.}
Each checkpoint defines a fixed, tokenized, newline-joined
WikiText-103 test population at dataset revision \texttt{b08601e0}.
Eligible endpoints have 4,096 preceding tokens and 32 following
tokens. We draw 512 endpoints independently and uniformly with
replacement for each checkpoint, using seed 2026092207.
The context lengths $L\in\{1024,2048,4096\}$ share these draws.
Repeated endpoints and overlapping text are retained; independence
concerns the endpoint draws conditional on the finite stream.
Each prefix contributes its mean NLL over 32 teacher-forced tokens.
Prediction agreement compares top-one predictions with the same
checkpoint's BF16-cache reference on those tokens.

\paragraph{Frozen comparisons and confidence.}
The primary event is a strict per-prefix NLL win for K8/V4 over
K4/V8; ties count as non-wins. Six predeclared one-sided
Clopper--Pearson lower bounds share a $0.05$ failure budget,
so each receives $\alpha=0.05/6=1/120$.
For $w$ wins among $n=512$ prefixes, the lower bound is the
$\alpha$ quantile of $\operatorname{Beta}(w,n-w+1)$ when $w>0$,
and zero when $w=0$.
All six lower bounds exceed $1/2$, with simultaneous confidence
at least 95\%. Mean NLL and prediction agreement are empirical
summaries. The archived bootstrap intervals for mean NLL gaps
are pointwise descriptive intervals and do not determine this test.

\begin{table}[t]
\centering\footnotesize
\setlength{\tabcolsep}{3.5pt}
\caption{Additional statistics for the modern cache comparison.
The main NLL and agreement comparison is in \cref{tab:modern_kv}.
$\Delta$NLL is K8/V4 minus BF16, in nats per token.
MiB is the equal cache allocation for K8/V4 and K4/V8.
Wins count strict per-prefix NLL improvements for K8/V4;
the final column is the simultaneous lower win-probability bound.}
\label{tab:modern_kv_details}
\begin{tabular}{@{}lrrrrrr@{}}
\toprule
Model & Context & \shortstack{BF16\\NLL} & $\Delta$NLL & MiB
& Wins & \shortstack{Win lower\\(\%)}\\
\midrule
Qwen2.5-1.5B & 1,024 & 2.1203 & +0.0330 & 11.269 & 512/512 & 99.069 \\
Qwen2.5-1.5B & 2,048 & 2.0614 & +0.0372 & 22.206 & 512/512 & 99.069 \\
Qwen2.5-1.5B & 4,096 & 2.0281 & +0.0332 & 44.081 & 512/512 & 99.069 \\
SmolLM2-1.7B & 1,024 & 2.0366 & +0.0061 & 80.361 & 463/512 & 86.880 \\
SmolLM2-1.7B & 2,048 & 1.9820 & +0.0065 & 158.361 & 450/512 & 84.028 \\
SmolLM2-1.7B & 4,096 & 1.9616 & +0.0056 & 314.361 & 433/512 & 80.369 \\
\bottomrule
\end{tabular}
\end{table}

\paragraph{Magnitude of the cache effect.}
Qwen2.5 exhibits severe degradation under the tested K4/V8
quantizer: mean NLL is $10.37$--$10.68$ nats, compared with
$2.03$--$2.12$ for BF16 and $2.06$--$2.15$ for K8/V4.
SmolLM2 supplies the graded comparison: K8/V4 lowers mean NLL
by $0.097$--$0.153$ nats relative to K4/V8 and retains approximately
95.4\% agreement with BF16. These measurements establish the
allocation preference for the declared cache quantizer; the Qwen
comparison does not isolate the contribution of key outlier channels.

\paragraph{Evidence and scope.}
The archive retains per-token losses and predictions, endpoint draws,
all six paired comparisons, persistent memory counts, timing,
the frozen protocol and confidence ledger, and immutable model
and data hashes. These results measure continuation quality and
prediction agreement. Behavioral cell masses and complexity bounds
were not computed for these two checkpoints.

\subsection{Do weight-edit guarantees survive cache compression?}
\label{core:cacheedits}
Cache compression changes the features and output margins that protect a
weight cell. We recheck the original 50,623-sign binary-head bank, certified
with FP32 upstream features, on the same complete 23,735-row domain at each
cache setting. The model uses the twelve-layer cache arithmetic above, the
separate binary head, and A8 final features. The target is prediction
retention relative to the unedited head at the \emph{same} cache setting.
Cache-only changes relative to FP16 are recorded separately.

A second bank retains 50,594 original addresses using FP16 features alone;
its hash is frozen before compressed-cache evaluation. For integer feature
$z$, nominal row score $s_k=c_k^\top z$, and editable coordinates $J_k$,
the exact sign-cell extrema are
\[
\underline s_k=s_k-\sum_{j\in J_k}(c_{kj}z_j+|z_j|),\qquad
\overline s_k=s_k+\sum_{j\in J_k}(|z_j|-c_{kj}z_j).
\]
For nominal smallest-index winner $w$, the entire cell is safe precisely when
$\underline s_w>\overline s_k$ for $k<w$ and
$\underline s_w\ge\overline s_k$ for $k>w$.
Rows independently attain these extrema, proving necessity and sufficiency.
Pruning and mixed pruning/sign edits stay within the same score intervals.
Every reported failure has an independently replayed integer witness.

The original four-format census gives 0, 0, 14, and 25 unsafe contexts at
FP16/FP16, K8/V8, K8/V4, and K4/V4. An unsafe context admits at least one
allowed prediction-changing edit. The matched-memory extension evaluates
K8/V4 and K4/V8 under one recorded CPU runtime, with four threads and batches
of 128. It gives 18 versus 25 unsafe contexts (\cref{tab:cachecompositionmatched}).
The earlier K8/V4 count of 14 belongs to the earlier runtime. Restricting MKL
to AVX2 reduces first-128-context feature-code differences from 164 to one,
but the full-domain shift is not attributed to a specific arithmetic setting.
The matched ordering is consistent with Table~3; robustness across backends
is untested. The net shift of four is not a statistical uncertainty interval.
\begin{table}[H]
\caption{Matched-memory cache precision and weight-cell safety on all 23,735 contexts.
Both formats use the same recorded CPU runtime with unchanged checkpoint bytes.
K8/V4 gives 18 under the runtime of here;
the matched comparison is 18 versus 25. The full count shift is not yet attributed
to a specific arithmetic setting.
Entries count unsafe contexts for the original 50,623-sign bank, frozen FP16
50,594-sign bank, original 41,725-sign common bank, and extended 38,783-sign bank.
KV memory includes FP32 scales and is normalized to 1,024 retained tokens;
the certification prefixes contain 1--31 tokens. Costs match at every length.}
\label{tab:cachecompositionmatched}
\centering\footnotesize
\setlength{\tabcolsep}{5pt}
\begin{tabular}{@{}lrrrrr@{}}
\toprule
& & \multicolumn{4}{c}{Unsafe contexts} \\
\bccrowsep
Cache & KV MiB & Original & FP16 subset & Old common & Extended common \\
\midrule
K8/V4 & 14.625 & 18 & 18 & 0 & 0 \\
\bccrowsep
K4/V8 & 14.625 & 25 & 25 & 3 & 0 \\
\bottomrule
\end{tabular}
\end{table}

The original four-format intersection contains 41,725 signs (82.42\% of the
original bank). It admits three unsafe contexts at K4/V8. The prospectively
specified extension intersects recertified banks from both new runs and
retains 38,783 signs (76.61\%). This subset is safe across all five tested
formats and both recorded K8/V4 runtimes. For every permitted edited head
$c'$ and every declared context,
$\hat y_{c',h}(X)=\hat y_{c,h}(X)$ at each tested cache/runtime $h$.
Subset inclusion proves inheritance from each constituent bank. Consequently
weight edits add zero prediction damage to each cache's own reference,
including every sign pattern, pruning subset, and their mixtures.

The entire-original-bank flip changes 18 predictions in each matched format;
pruning changes 14 versus 13. These endpoint tests complement the whole-cell
criterion. Reusing the common bank on new domains, quantizers, or backends
requires validation.
Cache compression changes the representations and margins protecting these
edits, so we also test whether the weight-edit guarantee survives changes
to cache precision. The full 50,623-sign bank survives K8/V8; a shared
38,783-sign subset (76.61\%) preserves each setting's unedited predictions
across five tested cache formats and recorded runtimes on the declared
domain (\cref{fig:behavioralfibers}, panel~B). Thus, edits within this shared
subset add no prediction changes to those already caused by cache
compression. The matched 18-versus-25 unsafe-context ordering is consistent
with \cref{tab:mainkvcache},
and prediction preservation permits NLL changes
(\cref{tab:cachecompositionmatched,core:head,core:cacheedits}).

\subsection{Quantization-aware training and approximate behavioral mass}
\label{app:qat_mass}

QAT produces substantially larger
approximate behavioral cells and lower perturbation damage than
 PTQ on the
controlled retrieval task. Across four low-bit formats and three
initialization seeds, mean construction-cell coverage increases
from 3.2\% under PTQ to 86.0\% under QAT. QAT also achieves lower
independently measured damage in every format--seed comparison.

\paragraph{Task, models, and training.}
Each input contains nine tokens: an initial selector identifies
one of the following eight positions, whose token is the prediction
target. Data tokens are sampled uniformly from an alphabet of
size 16. The decoder has two transformer blocks, hidden width 32,
four attention heads, and FFN width 64.

For each of three initialization seeds, we train a full-precision
model and QAT models in the four formats
$\{\mathrm{W1},\mathrm{T}\}\times\{\mathrm{A4},\mathrm{A8}\}$,
where T denotes ternary weights.
Training uses 30,000 examples, eight epochs, AdamW with learning
rate $3\times10^{-3}$ and weight decay $10^{-4}$, batch size 512,
and gradient clipping at one.
QAT uses deterministic quantized forward computations with
straight-through gradients. PTQ applies the corresponding
deterministic quantizers to the seed-matched final full-precision
checkpoint.

Binary weights use $\alpha\,\operatorname{sign}(W)$, where
$\alpha=\operatorname{mean}|W|$.
Ternary weights use
$\alpha\,\operatorname{clip}(\operatorname{round}(W/\alpha),-1,1)$.
Activations use dynamic per-token symmetric quantization with
7 or 127 positive levels for A4 and A8, respectively.

\paragraph{Behavioral loss and edit distribution.}
For a labeled sequence $X=(x,y)$, we use normalized Brier loss
\[
g_e^B(X)
=
\frac12\sum_{k=1}^{16}
\left(p_e(k\mid x)-\mathbf{1}\{k=y\}\right)^2
\in[0,1].
\]
Each QAT or PTQ implementation serves as its own nominal reference
$e_0$. The perturbation distribution $\kappa_{e_0}$ is uniform over
all nontrivial single-symbol substitutions at 64 fixed,
hash-selected weight addresses across the two FFNs.
This gives 64 binary edits or 128 ternary edits.
Each edit changes one stored weight code while holding its scale
and all other codes fixed. The forward computation retains dynamic
activation quantization.

\paragraph{Construction and independent evaluation.}
Construction, validation, and evaluation use independently
generated sets of 256, 1,024, and 2,048 examples, respectively.
For construction set $S$, define
\[
C_{S,\epsilon}(e_0)
=
\left\{
e:
\frac{1}{|S|}
\sum_{X\in S}
\left|g_e^B(X)-g_{e_0}^B(X)\right|
\le \epsilon
\right\},
\qquad
s_\epsilon
=
\kappa_{e_0}\!\left(C_{S,\epsilon}(e_0)\right).
\]
The primary tolerance is $\epsilon=1/1024$.
Coverage is computed over the complete declared single-edit
distribution, excluding the unchanged reference.
It measures preservation of mean construction loss within the
specified tolerance.

Independent validation estimates the conditional absolute
discrepancy within the selected cell.
On the separate evaluation set $T$, single-edit damage is
\[
\widehat d_1
=
\frac{1}{|T|}
\sum_{X\in T}
\mathbb{E}_{e\sim\kappa_{e_0}}
\left[
\left(g_e^B(X)-g_{e_0}^B(X)\right)_+
\right].
\]
Thus coverage and damage are evaluated on separate examples.
The four formats share three initialization seeds, yielding
12 paired format--seed comparisons.

\paragraph{Coverage and perturbation damage.}
Table~\ref{tab:qat_mass_results} reports the per-format results.
QAT achieves higher construction coverage and lower independent
single-edit damage in all 12 paired comparisons.
For W1/A8, mean coverage reaches 96.9\%, compared with 3.6\%
under PTQ. The corresponding ternary coverages are 99.7\%
for A4 and 100.0\% for A8, to the displayed precision.

\begin{table}[t]
\centering
\small
\setlength{\tabcolsep}{4pt}
\caption{
QAT/PTQ perturbation audits, averaged over three initialization
seeds. $R_B$ is independent nominal Brier loss;
$s$ is construction-cell coverage at tolerance $1/1024$;
$\widehat d_1$ is independent mean positive Brier damage
under a single-symbol edit. Each comparison uses the same
quantization format and edit addresses. T denotes ternary weights.
}
\label{tab:qat_mass_results}
\begin{tabular}{@{}lrrrrrr@{}}
\toprule
& \multicolumn{2}{c}{$R_B$}
& \multicolumn{2}{c}{$s$}
& \multicolumn{2}{c}{$\widehat d_1$} \\
\cmidrule(lr){2-3}
\cmidrule(lr){4-5}
\cmidrule(l){6-7}
Format & QAT & PTQ & QAT & PTQ & QAT & PTQ \\
\midrule
W1/A4 & 0.03179 & 0.55200 & 0.474 & 0.000
      & 0.001547 & 0.008371 \\
W1/A8 & 0.00440 & 0.54858 & 0.969 & 0.036
      & 0.000116 & 0.002154 \\
T/A4  & 0.00028 & 0.49127 & 0.997 & 0.010
      & 0.000018 & 0.007001 \\
T/A8  & 0.00198 & 0.48880 & 1.000 & 0.081
      & 0.000024 & 0.001984 \\
\bottomrule
\end{tabular}
\end{table}

Using unrounded per-format means, PTQ single-edit damage is
5.4, 18.6, 385.7, and 83.9 times the corresponding QAT damage
for W1/A4, W1/A8, T/A4, and T/A8, respectively.
These averages include every edit in the declared kernel.
Separate sixteen-edit stress tests, using 24 complete perturbations
per model, also give lower damage under QAT in all 12 comparisons.
These stress measurements assess robustness beyond the single-edit
distribution used to construct the cells.

\paragraph{Dependence on the tolerance.}
Table~\ref{tab:qat_mass_tolerance} evaluates the predeclared
tolerances using the same construction measurements.
The QAT coverage advantage persists at the stricter positive
tolerance $1/4096$, with 11 paired wins and one tie,
and at $1/256$, with 12 wins.
At zero tolerance, both methods have zero nonreference coverage
under the declared edit distribution.

\begin{table}[t]
\centering
\small
\caption{
Construction-cell coverage across predeclared Brier tolerances.
Means are over the 12 format--seed cases.
W/T/L counts QAT coverage wins, ties, and losses against PTQ.
The unchanged implementation is excluded.
}
\label{tab:qat_mass_tolerance}
\begin{tabular}{@{}lrrc@{}}
\toprule
Tolerance $\epsilon$ & QAT coverage & PTQ coverage & W/T/L \\
\midrule
$0$      & 0.000 & 0.000 & 0/12/0 \\
$1/4096$ & 0.693 & 0.000 & 11/1/0 \\
$1/1024$ & 0.860 & 0.032 & 12/0/0 \\
$1/256$  & 0.958 & 0.434 & 12/0/0 \\
\bottomrule
\end{tabular}
\end{table}

\paragraph{Connection to the covered-risk guarantee.}
The measured coverage is the mass term in
\cref{prop:coveredrisk}.
Given simultaneous bounds on nominal risk,
$R(e_0)\le U$, and conditional within-cell discrepancy,
$\mathbb{E}_{e\sim\kappa_{e_0}(\cdot\mid C),X}
|g_e^B(X)-g_{e_0}^B(X)|\le\tau$,
the proposition gives
\[
R_\kappa(e_0)
\le
1-s+s\min\{1,U+\tau\}.
\]
For fixed $U$ and $\tau$ with $U+\tau<1$, greater coverage
strictly lowers this upper bound.
The experiment therefore connects training outcomes to the
behavioral mass entering the perturbation guarantee.
QAT's gains in mass and robustness accompany its improvement
in nominal loss; the comparison establishes their association
under the stated quantization and perturbation rules.

\subsection{Ternary coding followed by behavioral aggregation}
\label{app:ternary_mixture}

Lossless coding can exploit unequal ternary symbol frequencies before
behavioral aggregation is applied. We evaluate these two sources of
saving separately, using a fixed normalized prior and the existing
exact finite-model experiments.

\paragraph{A fixed distribution-sensitive ternary prior.}
BITCOS stores a presence bitmap followed by the signs of nonzero
weights~\citep{georganas2026bitcos}.
For $N$ ternary symbols with $n_0$ zeros, its ideal symbol payload is
$2N-n_0$ bits. Define
\[
\pi_0(w)=\prod_{i=1}^N p_0(w_i),\qquad
p_0(0)=\tfrac12,\quad
p_0(-1)=p_0(+1)=\tfrac14.
\]
Then $-\log_2\pi_0(w)=2N-n_0$ and
$\sum_w\pi_0(w)=1$. To cover both sparse and dense codes, use
\[
\pi_{\mathrm{mix}}(w)
=\tfrac12\pi_0(w)+\tfrac12 3^{-N}.
\]
This mixture selects between two whole-vector source distributions.
Writing $L_0=2N-n_0$ and $L_u=N\log_2 3$, its exact charge is
\begin{align}
L_{\mathrm{mix}}(w)
&=\min\{L_0,L_u\}+1
 -\log_2\!\left(1+2^{-|L_0-L_u|}\right) \notag\\
&\le\min\{L_0,L_u\}+1.
\label{eq:ternary_mixture_charge}
\end{align}
Thus the overhead above the better component is at most one bit
per weight vector. Scales, format choices, and other non-public
deployment information receive their own prior charges.

\paragraph{Separating coding and aggregation.}
Let $\nu$ be the complete-implementation prior and $\mathcal F(g)$
an exact behavioral fiber. Using the main paper's complexity and
gain in bits, define
\begin{align*}
K_{\mathrm{dir}}^{\star}(g)
&=-\log_2\max_{e\in\mathcal F(g)}\nu(e),\\
K_{\mathrm{BCC}}(g)
&=\frac{K_{W,A}(g)}{\ln 2}
=-\log_2\sum_{e\in\mathcal F(g)}\nu(e).
\end{align*}
The direct comparator selects a cheapest individual realization
$e^{\star}\in\arg\max_{e\in\mathcal F(g)}\nu(e)$.
The additional aggregation saving is exactly
\[
\frac{G_{\mathrm{BCC}}(e^{\star})}{\ln 2}
=K_{\mathrm{dir}}^{\star}(g)-K_{\mathrm{BCC}}(g)
=\log_2
\frac{\sum_{e\in\mathcal F(g)}\nu(e)}
     {\max_{e\in\mathcal F(g)}\nu(e)}.
\]
Mixture probabilities are summed over fiber members before taking
logarithms; nonuniform code probabilities are retained individually.

\paragraph{Exact reweighting of the declared finite tasks.}
We reuse the two fixed tasks with seeds 110 and 49, exhaustively
evaluating all 512 binary and 19,683 ternary nine-symbol weight
codes at A4 and A8 on their complete 125-input populations.
The architecture, bias, decoder, and remaining inference rules are
public fixed side information. The four W/A formats receive equal
prior mass, giving a two-bit format tag. Binary weight codes remain
uniform; ternary codes receive the prior above.
Every model and prediction is unchanged by this reweighting.

\begin{table}[htbp]
\centering
\small
\setlength{\tabcolsep}{4pt}
\caption{Coding and behavioral savings on the existing exact tasks.
All charges are in bits, including the W/A tag. Direct columns use
the cheapest realization within the same zero-loss fiber.
The final column is additional aggregation saving under the mixture.}
\label{tab:ternary_mixture_results}
\begin{tabular}{@{}rrrrrr@{}}
\toprule
Task & Members & Uniform direct & Mixture direct
     & Mixture BCC & Extra saving \\
\midrule
110 & 7  & 11.000 & 11.000 & 9.387  & 1.613 \\
49  & 10 & 16.265 & 15.498 & 13.062 & 2.436 \\
\bottomrule
\end{tabular}
\end{table}

These charges differ from the main-text A8 comparison for task 110
($10.000\to9.927$ bits), which fixes A8, uses a one-bit tag over
the binary and ternary formats, and assigns uniform symbol priors.
Here a two-bit tag covers four W/A formats, fibers pool A4 and A8,
and ternary codes use the mixture prior. In this four-format universe,
uniform-prior aggregation costs $9.366$ bits, compared with
$9.387$ bits under the mixture.

For task 49, distribution-sensitive coding saves
$0.767$ bits, and behavioral aggregation contributes a further
$2.436$ bits. Its cheapest ternary member has five zeros among
nine symbols, giving $L_0=13$ bits before the format tag.
Task 110 retains an additional aggregation saving
of $1.613$ bits. Both gains are measured against the best individual
realization under the same mixture prior.
The existing histogram-Huffman comparator also retains additional
aggregation savings: $1.344$ bits in task 110 and $1.954$ bits in
task 49. Its direct/BCC charges are $10.000/8.656$ and
$13.000/11.046$ bits, respectively.

\paragraph{Fixed seed sweep.}
Across the existing ternary task seeds $0$--$49$, jointly considering
A4 and A8, the mixture yields strict additional aggregation savings
in 14 of 50 zero-loss fibers and ties in the remaining 36.
The mean saving over all 50 tasks is $0.450$ bits, with a maximum
of $3.828$ bits. The complete per-task results retain every fiber
member and all direct and aggregated prior masses.

\paragraph{Certificate values.}
For an illustrative zero-loss Occam comparison with $m=512$ IID
draws and per-task/prior failure probability $1/80$, the upper
bound is $U(\mu)=1-(\mu/80)^{1/512}$.
The two tasks and two primary priors share a total failure budget
of $0.05$. Under the mixture, aggregation changes the bounds from
$0.023178$ to $0.021042$ in task 110 and from
$0.029108$ to $0.025901$ in task 49.
The population error of each reported fiber is exactly zero by
exhaustive evaluation; these values illustrate the coding effect
on the certificate.

\paragraph{Compatibility with certified sign freedom.}
Suppose a certified cell permits all $2^D$ combinations of $D$
nonzero sign flips while fixing zero locations, magnitudes, metadata,
and all other codes. Every member has the same probability under
both component priors and their mixture. Let $\nu_{\mathrm{mix}}$
be the resulting complete-implementation prior and write
$L_{\mathrm{mix}}(e_0)=-\log_2\nu_{\mathrm{mix}}(e_0)$,
including common metadata. Then
\[
\nu_{\mathrm{mix}}(C)=2^D\nu_{\mathrm{mix}}(e_0),
\qquad
-\log_2\nu_{\mathrm{mix}}(C)
=L_{\mathrm{mix}}(e_0)-D.
\]
Thus aggregation within a proved sign cell retains exactly $D$
bits of saving after this distribution-sensitive coding step.

\paragraph{When the sparse component helps.}
The component $\pi_0$ charges less than the uniform component exactly
when the zero fraction exceeds $2-\log_2 3\approx41.5\%$.
Deployed ternary LLMs reach up to $51.5\%$
zeros~\citep{georganas2026bitcos}, where $\pi_0$ is preferred.
Our six controlled-decoder checkpoints (seeds 89, 97, and 101 at
A4/A8; $17{,}952$ ternary symbols each) have $28.4$--$29.1\%$
zeros, so the mixture charges the uniform cost plus approximately
one bit per model. All symbols and dequantized weight tensors
round-trip exactly. Scales, unchanged non-symbol parameters, and
byte padding are recorded separately. These counts concern lossless
storage; the savings in Table~\ref{tab:ternary_mixture_results}
concern prior mass shared by equivalent implementations.

\section{Scope, limitations, and reproducibility}
\label{app:limitations}

Future work should evaluate BCC on ultra-large language models and larger
mixture-of-experts architectures across a broader range of weight,
activation, and KV-cache formats. The screening results motivate testing
joint precision allocation, interactions between layers, and end-to-end
search efficiency at larger scales. For MoE models, extending certification
to learned routing and simultaneous changes across experts would broaden
its applicability. Another direction is to construct larger behavioral
families spanning multiple layers and formats, and determine when their
shared prior mass yields useful complete-model certificates and realizable
compression gains. These extensions will also require scalable methods
for cell construction and validation.

The accompanying package provides sources, frozen protocols, scripts,
saved outputs, small-model checkpoints, hashes, and pinned revisions for
public weights and data.

\end{document}